\documentclass[11pt]{article}
\usepackage[T1]{fontenc}
\usepackage[utf8]{inputenc}
\usepackage[margin=1in]{geometry}
\usepackage{microtype}
\usepackage[numbers]{natbib}
\usepackage{hyperref}
\usepackage{url}
\usepackage{algorithm}
\usepackage[noend]{algpseudocode}
\usepackage{graphicx}
\usepackage{amsmath,amssymb,amsthm}
\usepackage{wrapfig}
\usepackage{tcolorbox}
\usepackage{paralist}
\usepackage{booktabs}
\usepackage{comment}
\usepackage{listings}
\usepackage[capitalize,noabbrev]{cleveref}
\usepackage{xspace}

\newcommand{\eg}{\emph{e.g.}\xspace}
\newcommand{\ie}{\emph{i.e.}\xspace}
\newcommand{\keywords}[1]{\par\noindent\textbf{Keywords:} #1}
\newcommand{\Description}[2][]{}

\theoremstyle{definition}
\newtheorem{definition}{Definition}[section]
\theoremstyle{plain}
\newtheorem{theorem}[definition]{Theorem}

\title{Rethinking the Harmonic Loss via Non-Euclidean Distance Layers}
\author{
  Maxwell Miller-Golub\\
  \small American University, Washington, DC, USA\\
  \small \texttt{mm9628a@american.edu}
  \and
  Collin Coil\\
  \small University of Vermont, Burlington, USA
  \and
  Kamil Faber\\
  \small AGH University, Krakow, Poland
  \and
  Marcin Pietron\\
  \small AGH University, Krakow, Poland
  \and
  Panpan Zheng\\
  \small Xinjiang University, \"{U}r\"umqi, China
  \and
  Pasquale Minervini\\
  \small University of Edinburgh, Edinburgh, United Kingdom
  \and
  Roberto Corizzo\\
  \small American University, Washington, DC, USA\\
  \small \texttt{rcorizzo@american.edu}
}
\date{}

\begin{document}
\maketitle

\begin{abstract}
Cross-entropy loss has long been the standard choice for training deep neural networks, yet it suffers from interpretability limitations, unbounded weight growth, and inefficiencies that can contribute to costly training dynamics.
The \emph{harmonic loss} is a distance-based alternative grounded in Euclidean geometry that improves interpretability and mitigates phenomena such as \emph{grokking}, or delayed generalization on the test set.
However, the study of harmonic loss remains narrow: only the Euclidean distance is explored, and no systematic evaluation of computational efficiency or sustainability was conducted.
We extend harmonic loss by systematically investigating a broad spectrum of distance metrics as replacements for the Euclidean distance.
We comprehensively evaluate \emph{distance-tailored harmonic losses} on both vision backbones and large language models.
Our analysis is framed around a three-way evaluation of \emph{model performance}, \emph{interpretability}, and \emph{sustainability}.
On vision tasks, cosine distances provide the most favorable trade-off, consistently improving accuracy while lowering carbon emissions, whereas Bray-Curtis and Mahalanobis further enhance interpretability at varying efficiency costs.
On language modeling tasks, cosine-based harmonic losses improve gradient and learning stability, strengthen representation structure, and reduce emissions relative to cross-entropy and Euclidean harmonic loss.
Our code is available at: \url{https://anonymous.4open.science/r/rethinking-harmonic-loss-5BAB/}.
\end{abstract}

\keywords{harmonic loss, distance metrics, neural networks, loss functions, deep learning, interpretability, green AI, computational efficiency, cross-entropy, cosine distance}

\section{Introduction} \label{sect:intro}
Cross-entropy is the \emph{de facto} loss function for classification tasks. %  due to its effectiveness in combination with the softmax output layer.
However, it has shortcomings in terms of model interpretability and training dynamics.
Cross-entropy training provides no inherent meaning to the learned weight vectors (they serve as abstract parameters rather than intuitive prototypes) and can drive those weights to grow without bound in pursuit of confident predictions~\cite{baek2025harmonic}.
This unbounded weight growth can lead to phenomena such as \emph{grokking}: a delayed generalization in which the model closes the train–test performance gap only after extensive overtraining~\cite{DBLP:journals/corr/abs-2201-02177}.
Moreover, in high-stakes applications where transparency is critical (\eg, healthcare or finance), the opaque nature of cross-entropy–trained models poses challenges for trust and error diagnosis.
These issues motivate the exploration of alternative loss functions that may yield more \emph{interpretable}, \emph{efficient}, and \emph{robust} model behavior.
Recently, the \emph{harmonic loss} was proposed as an alternative training objective to address some of these concerns~\cite{baek2025harmonic}.
The harmonic loss replaces the conventional inner-product logits and softmax normalization with a distance-based formulation: model predictions are derived from the distances between the sample’s representation and \emph{class prototype} vectors (learned weight vectors for each class).
Intuitively, this means that a model is trained to move each sample toward its correct class center in the feature space rather than simply increasing a classification score.
This approach endows the learning process with two key properties: \emph{i) scale invariance} -- distance comparisons do not depend on vector norm, and \emph{ii) finite convergence point} -- training aims for a distance of zero to the correct prototype. %, rather than driving logits to $\pm\infty$ as in cross-entropy.
As a result, each class weight converges to an anchor point that can be interpreted as the \emph{center} of that class’s feature distribution.
Empirically, \citet{baek2025harmonic} demonstrated that harmonic loss can close the train–test gap faster and yield more interpretable representations than cross-entropy.
For example, the learned weight vectors in a harmonic-loss model directly reflect class prototypes, making them semantically meaningful.
Models trained with harmonic loss have been shown to require less data to generalize and to mitigate grokking, while achieving competitive or better accuracy on vision and language benchmarks.
These findings suggest that \emph{distance-based loss functions} are a promising direction for improving both performance and transparency in deep learning.
However, research on harmonic loss has been limited in scope so far.
\citet{baek2025harmonic} focused exclusively on Euclidean distance as the metric for their loss function and did not examine the broader impacts on computational efficiency or energy consumption.
On the other hand, distance-based metrics have been explored in other contexts and problems.
Notably, \citet{coil2025distance} investigated a wide range of distance measures for a problem of change point detection in concept-drift scenarios for anomaly detection.
Their study found that the choice of distance metric can drastically affect both the accuracy and efficiency of detecting distribution shifts.
For instance, replacing a costly metric (\eg, Wasserstein) with simpler alternatives yielded comparable detection performance at substantially lower computational cost.
This evidence that ``metric matters'' in learning algorithms raises a natural question: \emph{might other distance measures offer advantages over Euclidean in a harmonic loss setting?}
To date, no work has evaluated harmonic loss using distance metrics beyond Euclidean, nor has it benchmarked their impacts across different domains.
In this paper, we present the first comprehensive study of \emph{custom distance-based loss functions} in deep learning classification, extending the harmonic loss framework to a variety of distance measures across multiple problem domains.
We experiment with a rich set of non-Euclidean distance metrics, including Manhattan, Euclidean, Chebyshev, Minkowski, and cosine distance, as well as specialized metrics such as Hamming, Canberra, Bray-Curtis, and Mahalanobis.
These metrics are integrated as drop-in replacements for Euclidean distance in the harmonic loss formulation.
We evaluate the resulting non-Euclidean harmonic losses on two heterogeneous task families: \emph{image classification} (MLP, ResNet, PVT) and \emph{language modeling} with transformer-based LLMs (GPT-2, BERT, and others).
This diversity enables us to assess whether certain distance-based losses consistently outperform cross-entropy and Euclidean harmonic loss on metrics of \emph{effectiveness}, \emph{efficiency}, and \emph{explainability}.
Specifically, we pursue the following research questions:

\noindent \textbf{RQ1 (Model Performance):} Do non-Euclidean harmonic loss functions offer higher accuracy or faster convergence compared to cross-entropy and Euclidean harmonic loss?

\noindent \textbf{RQ2 (Interpretability):} Do models trained with non-Euclidean harmonic losses exhibit more interpretable representations than those trained with cross-entropy?

\noindent \textbf{RQ3 (Efficiency \& Sustainability):} If a custom non-Euclidean harmonic loss outperforms cross-entropy on a given downstream task, does it do so without incurring a higher computational cost? We track training time, resource utilization, and energy consumption to assess the \emph{Green AI} perspective~\cite{schwartz2019greenai}.

By addressing these questions, our aim is to explore a \emph{three-way trade-off} between \emph{accuracy, interpretability, and sustainability} in the training process of deep learning models.
Previous work has typically optimized one or two of these aspects in isolation: for instance, improving accuracy at the cost of enormous compute, known as ``Red AI''~\citep{schwartz2019greenai}, or simplifying models for interpretability while losing accuracy.
In contrast, we seek solutions that improve predictive performance while also reducing energy consumption and yielding more transparent models.
\textbf{Contributions.}
This paper introduces \emph{distance-tailored harmonic losses} and provides an extensive empirical and analytical evaluation of their merits.
To our knowledge, this is the first work to:
\begin{inparaenum}[i)]
\item extend the harmonic loss beyond Euclidean distance, and benchmark a wide spectrum of metrics on both vision and NLP tasks;
\item assess the carbon footprint and resource usage of different loss functions in a controlled setting; and
\item investigate interpretability outcomes of distance-based losses.
\end{inparaenum}
We also offer preliminary theoretical insights into how different distance metrics influence the geometry of the learned model (\eg, relating $L_1$ losses to median-based class centers vs.\ $L_2$ to mean-based centers), which could inform the selection of an optimal loss for a given objective.

\section{Harmonic loss}
% Harmonic loss : Definition 

Harmonic loss replaces the conventional inner-product logits and softmax normalization with a distance-based formulation: model predictions are derived from the distances between the sample’s representation and \emph{class prototype} vectors (the learned weight vectors for each class).
Intuitively, this means a model is trained to bring each sample closer to its correct class center in the feature space, rather than simply increasing a classification score.

From \citet{baek2025harmonic}, given the training set $D = \{(x_i, y_i)\}_{i=1}^n$ with $y_i \in \{1, ..., K\}$ and class prototypes $W = \{\mathbf{w}_k\}_{k=1}^K$ with $\mathbf{w}_k \in \mathbb{R}^d$, the harmonic logit is the $L_2$ distance between the prototype $\mathbf{w}_k$ and the instance representation $\mathbf{h} \in \mathbb{R}^{d}$, \ie, $d_k = \|\mathbf{h} - \mathbf{w}_k\|_2$.
Then, the harmonic probabilities are given by:
\begin{align} \label{eq:harmonic}
p_W(y_k \mid x)\;=\;\frac{d_k^{-n}}{\sum_{j=1}^Kd_j^{-n}},
\end{align}
\noindent where the harmonic exponent $n$ is a hyperparameter that controls the heavy-tailedness of the probability distribution.
The Harmonic loss is then given by:
\begin{align}
%
%\mathcal{L}(\{\mathbf{w}_i\}) = -\sum_{i=1}^n \log p_i(x).
\mathcal{L}(W) = -\sum_{(x, y) 
\in D} \log p_{W}(y \mid x).
\end{align}
This approach endows the learning process with two key properties:
\begin{inparaenum}[i)]
%
%\item \emph{scale invariance:} distance comparisons do not depend on the overall norm of $\mathbf{h}$ or $\mathbf{w}_k$, in contrast to inner-product logits; and
%
\item \emph{ratio invariance:} for 1-homogeneous distances (\eg, Euclidean, Manhattan), the class probabilities $p_{W}(y \mid x)$ are invariant to uniform rescaling of both representations and prototypes, since factors of $c^{-n}$ cancel in the ratio (\cref{thm:finite}); this contrasts with cross-entropy, where scaling logits changes the softmax distribution;~\footnote{\emph{Individual} scale invariance---invariance to scaling $\mathbf{h}$ alone---holds only for angular distances such as cosine; for Euclidean distance, $\|\alpha \mathbf{h} - \mathbf{w}\|_2 \neq \|\mathbf{h} - \mathbf{w}\|_2$ in general.} and
\item \emph{finite convergence point:} optimization seeks a distance of zero to the correct prototype. %, rather than driving logits to $\pm \infty$ as in cross-entropy.
\end{inparaenum}

As a result, each class weight converges to an anchor point that can be interpreted as the \emph{center} of that class’s feature distribution.
Empirically, \citet{baek2025harmonic} demonstrated that harmonic loss can close the train--test gap faster and yield more interpretable representations than cross-entropy.
For example, the learned weight vectors in a harmonic-loss model directly reflect class prototypes, making them semantically meaningful. Models trained with harmonic loss were also shown to require less data to generalize and to mitigate grokking, all while achieving competitive or better accuracy on both vision and language benchmarks.
These findings suggest that \emph{distance-based loss functions} are a promising direction for improving performance and transparency in deep learning.
\section{Non-Euclidean Harmonic Losses} \label{eq:method}
%
%We extend harmonic loss beyond Euclidean distance by considering a broad family of distances. 
Our framework introduces \emph{non-Euclidean harmonic losses} as a generalization of the harmonic loss, and as a replacement for conventional cross-entropy training. 
%
% The workflow begins with input data $\mathbf{x}$ drawn from one of four benchmark datasets: MNIST, CIFAR-10, CIFAR-100 (vision tasks), or OpenWebText (language modeling).
%
%An encoder $f_\theta(\cdot)$ maps the input to a feature representation $\mathbf{h} \in \mathbb{R}^d$, using one of four neural backbones: a multi-layer perceptron (MLP), a convolutional neural network (CNN), a residual network (ResNet-50), or a pyramid vision transformer (PVT).
%
%For language modeling experiments, we employ transformer-based architectures such as GPT-2, BERT, or MiniLM.
%
The idea is that, in \cref{eq:harmonic}, the Euclidean distance $d_k = \|\mathbf{h} - \mathbf{w}_k\|_2$ is replaced by a non-Euclidean distance.

\subsection{Class Prototypes, Distances, and non-Euclidean Harmonic Loss functions}
Each class $k \in \{1,\dots,K\}$ is associated with a \emph{prototype vector} $\mathbf{w} \in \mathbb{R}^d$.
Given an instance representation $\mathbf{h}$, we compute its distance to all prototypes via a chosen metric $d(\cdot,\cdot)$.
Prototypes are learned parameters, just like the weight matrix in linear classification.
%
%In cross-entropy, the final linear layer maintains a weight vector per class. In harmonic loss, these weight vectors are simply reinterpreted as prototypes. 
%
Thus, prototype learning is no more computationally expensive than learning a final linear layer.

We extend the Euclidean formulation of harmonic loss~\citep{baek2025harmonic} with the following distances:
\textbf{Euclidean.}
Baseline Euclidean distance between feature and prototype:
\[
d_{\text{Euclidean}}(\mathbf{h},\mathbf{w}) = \|\mathbf{h} - \mathbf{w}\|_2.
\]
%
%%%%%%%%%%%%%%%%%%%%%%%%%%%%%%%%%%%
% Euclidean (L2) Distance
%%%%%%%%%%%%%%%%%%%%%%%%%%%%%%%%%%%
% Claim: It serves as the baseline distance measure between feature vectors and prototypes. Supporting Evidence: Euclidean distance is widely used as a default similarity metric in machine learning (e.g. clustering, nearest neighbors, prototype classifiers) due to its simplicity and convenience[1]. For instance, k-means and prototype-based classifiers typically rely on Euclidean distance as the standard measure of closeness[1]. This prevalence justifies treating Euclidean distance as a baseline in most feature-based tasks.

%
% [1] Euclidean Distance - an overview | ScienceDirect Topics
%https://www.sciencedirect.com/topics/computer-science/euclidean-distance

\textbf{Manhattan ($L_1$).}
%
%Emphasizes absolute differences, making it potentially more robust to outliers~\citep{keeling2016robust,ye2012l1pca,giloni2003breakdown}. {\color{red} check}
%
The $L_1$ norm emphasizes absolute differences, providing robustness to outliers since large deviations contribute linearly rather than quadratically to the total distance~\citep{keeling2016robust,ye2012l1pca,giloni2003breakdown}.
\[
d_{\text{Manhattan}}(\mathbf{h},\mathbf{w}) = \|\mathbf{h} - \mathbf{w}\|_1.
\]
%
%It can stabilize training and reduce unnecessary computations, thereby lowering energy costs.
%
Furthermore, computing $L_1$ distance requires only additions and subtractions without squaring or square-root operations, making it computationally efficient.
%
%{\color{red} source? also, can remove}
%
Recent work on AdderNets demonstrates that replacing multiplication-based convolutions with $L_1$-based operations can reduce energy consumption while maintaining competitive accuracy~\citep{chen2020addernet}.
\textbf{Chebyshev ($L_\infty$).}
\[
d_{\text{Chebyshev}}(\mathbf{h},\mathbf{w}) = \|\mathbf{h} - \mathbf{w}\|_{\infty}.
\]
Captures the maximum coordinate deviation, offering a highly interpretable measure of the most discriminative feature dimension.
Its simplicity makes it computationally efficient.
\textbf{Minkowski ($L_p$).}
%
%Generalizes both $L_1$ and $L_2$, with a tunable $p$ that enables a trade-off between robustness and sensitivity {\color{red} source?}:
%
Generalizes both $L_1$ and $L_2$, with a tunable exponent $p$ that controls the trade-off between robustness and sensitivity~\citep{aggarwal2001surprising,hu2016distance}:
\[
d_{\text{Minkowski}}(\mathbf{h},\mathbf{w};p) = \|\mathbf{h} - \mathbf{w}\|_{p}.
\]
%
%This flexibility allows tailoring the loss to dataset complexity, improving accuracy while balancing sustainability.
%
Lower values of $p$ (closer to 1) down-weight large individual coordinate differences, yielding behavior more robust to outliers, while higher values (toward 2 or above) increase sensitivity to larger deviations.
This flexibility allows the metric to be tailored to dataset characteristics, treating $p$ as a hyperparameter to optimize accuracy.

\textbf{Cosine.}
%
%Ignores magnitude and instead measures angular similarity, making it particularly effective in high-dimensional embeddings (\eg, CNNs, Transformers)~\citep{reimers2019sentencebert,deng2019arcface,wang2018cosface,sun2016learning,karpukhin2020dense}.
%
Ignores vector magnitudes and instead measures angular similarity, making it particularly effective for high-dimensional embeddings~\citep{reimers2019sentencebert,deng2019arcface,wang2018cosface,sun2016learning,karpukhin2020dense}.
\[
d_{\text{cosine}}(\mathbf{h},\mathbf{w}) = 1 - \tfrac{\mathbf{h}^\top \mathbf{w}}{\|\mathbf{h}\|_2 \, \|\mathbf{w}\|_2}.
\]
%
%This often improves generalization with minimal computational overhead.
%
Because the output is bounded, cosine similarity decreases activation variance compared to unbounded dot products, which can improve generalization~\citep{luo2017cosine}.
%
%{\color{red} source?}
%
The computational overhead is minimal, only requiring an additional normalization step beyond the standard dot product.

\textbf{Hamming.}
%
%Counts mismatches directly, providing highly interpretable signals. {\color{red} why?}
%
Counts mismatches directly, providing a highly interpretable signal: each unit of distance corresponds to exactly one differing feature dimension.
\[
d_{\text{Hamming}}(\mathbf{h},\mathbf{w}) = \tfrac{1}{d} \sum_{i=1}^d \mathbf{1}_{\{h_i \ne w_i\}}.
\]
Although inherently discrete, Hamming distance can be integrated into neural network training via continuous relaxations such as the Gumbel-Softmax trick~\citep{jang2017gumbel,maddison2017concrete}, which provide differentiable approximations to binary sampling.
%
%With soft or Gumbel relaxations, it becomes suitable for continuous embeddings and can reduce emissions when binary approximations are leveraged.
%
%{\color{red} source?}
%
Binary embeddings offer substantial efficiency gains -- comparing binary codes via Hamming distance can be significantly faster and more memory-efficient than comparing full-precision vectors.
\textbf{Canberra.}
%
%Normalizes differences by feature magnitudes, enhancing sensitivity to small but meaningful variations. {\color{red} sure?}
%
Normalizes each coordinate difference by the sum of the magnitudes, yielding heightened sensitivity when both values are near zero~\citep{lance1967general}.
\[
d_{\text{Canberra}}(\mathbf{h},\mathbf{w}) = \sum_{i=1}^d \frac{|h_i - w_i|}{|h_i| + |w_i| + \varepsilon}.
\]
%
%This can improve performance on fine-grained tasks while stabilizing optimization. {\color{red} source?}
%
Since each term is bounded in $[0,1]$, the metric emphasizes proportional rather than absolute differences.
This property can benefit fine-grained recognition tasks in which small variations in low-magnitude features carry important discriminative information.
\textbf{Bray-Curtis.}
%
%Captures proportional differences across feature vectors, making it efficient and interpretable for compositional data~\citep{fuschi2025microbiome,chao2010decomposition,song2020systematic}.
%
A normalized variant of the $L_1$ distance that measures proportional differences across all features~\citep{fuschi2025microbiome,chao2010decomposition,song2020systematic}.
\[
d_{\text{Bray-Curtis}}(\mathbf{h},\mathbf{w}) = \frac{\sum_{i=1}^d |h_i - w_i|}{\sum_{i=1}^d (|h_i| + |w_i|) + \varepsilon}.
\]
%
%It often balances accuracy with sustainability better than covariance-based measures.
%
Widely used in ecology for comparing species abundance profiles, Bray--Curtis produces values in $[0,1]$ that are easy to interpret: 0 indicates identical composition, 1 indicates no overlap.
%
%{\color{red} source?}
%
Because it does not require estimating a covariance matrix, it offers a computationally lighter alternative to Mahalanobis distance while still accounting for relative differences.
\textbf{Mahalanobis.}
%
%Incorporates feature correlations, offering superior accuracy in complex datasets and deep CNNs~\citep{pang2018maxmahalanobis,lee2018simple,gomez2021mahalanobis,omara2021novel}. {\color{red} check; also really don't like "superior"}
%
Incorporates feature correlations by weighting distances according to the inverse covariance matrix $\Sigma^{-1}$, effectively measuring distances in a whitened feature space~\citep{pang2018maxmahalanobis,lee2018simple,gomez2021mahalanobis,omara2021novel}.
\[
d_{\text{Mahalanobis}}(\mathbf{h},\mathbf{w};\Sigma) = \sqrt{(\mathbf{h}-\mathbf{w})^\top \Sigma^{-1}(\mathbf{h}-\mathbf{w})}.
\]
%
%Although covariance estimation may increase computational cost, its interpretability and classification power justify the trade-off in high-capacity models.
%
This can improve discrimination when features have different scales or are correlated; for example, \citet{lee2018simple} achieve state-of-the-art out-of-distribution detection in deep networks using Mahalanobis distance on CNN features.
%
%{\color{red} source?}
%
The computational trade-off is that estimating and inverting the covariance matrix incurs overhead, particularly in high dimensions.
However, the resulting distance is scale-invariant and unitless, facilitating the interpretability of class boundaries as ellipsoids in feature space.
In our framework, we generalize the harmonic loss by replacing the Euclidean distance in \cref{eq:harmonic} with one of the distances defined above. 
%
%Harmonic loss is applied only at the final classification layer, replacing the standard softmax + cross-entropy objective.
%
This substitution is applied only at the final classification layer, replacing the standard softmax cross-entropy head.
%
%{\color{red} improve wording}
%
%All intermediate layers remain unchanged, and no normalization is applied inside the backbone.
%
The feature extraction backbone remains unchanged; we do not introduce additional normalization layers or architectural modifications beyond the classification head.
%
%{\color{red} ?}
%

%
%Overall, compared to cross-entropy, these distance-based harmonic losses reduce reliance on probabilistic normalization {\color{red} ???} and can lower the number of required operations.
%
Compared to standard cross-entropy, distance-based harmonic losses offer several potential benefits: the scale-invariant formulation avoids unbounded logit growth, the geometric interpretation provides transparency (class decisions correspond to proximity to prototype vectors), and certain distance choices (\eg, $L_1$, cosine) can reduce computational cost.
%
%This translates into potential accuracy gains, reduced carbon emissions, and improved interpretability, depending on the chosen distance and backbone.
%

%
A formal treatment of our distance–based probabilistic layer is provided in \cref{sec:theory}.
There, we generalize the harmonic-loss analysis to broad distance families and prove: i) \emph{scale invariance} and the existence of \emph{finite} minimizers under 1-homogeneous distances (\cref{thm:finite}), and ii) a \emph{margin-style PAC–Bayes generalization bound} whose finiteness follows from the finite–norm solution (\cref{thm:margin}).
These results clarify when geometry choices are well-posed and why the resulting classifiers admit standard generalization guarantees.  %; full assumptions and proofs are deferred to the appendix.

\section{Experiments}

\subsection{Training and Evaluation}
% The workflow begins with input data $\mathbf{x}$ drawn from one of four benchmark datasets: MNIST, CIFAR-10, CIFAR-100 (vision tasks), or OpenWebText (language modeling).
% %
% An encoder $f_\theta(\cdot)$ maps the input to a feature representation $\mathbf{h} \in \mathbb{R}^d$, using one of four neural backbones: a multi-layer perceptron (MLP), a convolutional neural network (CNN), a residual network (ResNet-50), or a pyramid vision transformer (PVT).
% %
% For language modeling experiments, we employ transformer-based architectures such as GPT-2, BERT, or Qwen.
% %
% Training proceeds by minimizing $\mathcal{L}_{\text{harm}}$ using stochastic gradient descent (SGD). 
% All models are trained under matched conditions against a cross-entropy baseline, ensuring fair comparison. 
% A detailed analysis of hyperparameter configurations is provided in Appendix \ref{appendix:hyperparams}.

% Evaluation follows a \emph{multi-criteria protocol} along three axes:
% %\begin{enumerate}
% %
% i) \emph{Accuracy:} classification accuracy or perplexity depending on the task.
% %
% ii) \emph{Interpretability:} quality of prototype vectors as class centers, assessed via feature clustering and visualization.
% %
% iii) \emph{Sustainability:} energy consumption and CO$_2$ emissions measured via CodeCarbon.
% %
% % \end{enumerate}
% %
% This workflow allows us to directly compare how different distance measures, when embedded into harmonic loss, affect effectiveness, sustainability, and interpretability across both vision and language domains.

\textbf{Datasets.}
We evaluate on five \emph{vision} benchmarks (MNIST, CIFAR-10, CIFAR-100, MarathiSignLanguage, TinyImageNet) and one \emph{language} corpus (OpenWebText).
%For vision, inputs are preprocessed as follows: MNIST is normalized to mean/std $(0.5)$ (MLP/CNN) or to $(0.1307,0.3081)$ (ResNet-50), with grayscale replicated to 3 channels for ViT/PVT when needed; CIFAR-10 uses mean/std $(0.4914,0.4822,0.4465)$ / $(0.2023,0.1994,0.2010)$ with random horizontal flips, random crops ($32{\times}32$, padding 4), and small rotations for ResNet-50; CIFAR-100 uses mean/std $(0.5071,0.4867,0.4408)$ / $(0.2675,0.2565,0.2761)$ with stronger augmentation (flips, crops, color jitter). ViT inputs are resized to $224{\times}224$; PVT uses $32{\times}32$ with 3 channels.
%For language modeling, we tokenize OpenWebText into memory-mapped \texttt{train.bin}/\texttt{val.bin}, and read contiguous $L$-length blocks for batching.

\textbf{Vision.}
We consider a simple MLP with two hidden layers (512, 256, ReLU), a simple CNN (two $3{\times}3$ conv blocks with $[32,64]$ channels and $2{\times}2$ max-pooling, then a 128-dim FC), ResNet-50 (standard $[3,4,6,3]$ bottleneck stages; for small inputs we remove the initial max-pool and use a $3{\times}3$ stride-1 stem), and PVTv2-B0 (four hierarchical stages with overlapping patch embeddings; output pooled to a 256-dim vector).

\textbf{Language.}
We study three Transformer families: GPT-style (decoder-only causal LM), BERT (encoder-only masked LM with 15\% masking), and Qwen2-style decoders. %(RoPE, RMSNorm, grouped KV heads). 
%Vocabulary size is read from metadata when available; Qwen2 runs can use the native tokenizer.

% \textbf{Distance-based heads (harmonic loss).}
% For a final-layer feature $\mathbf{h}\in\mathbb{R}^d$ and class (or token) prototype $\mathbf{w}_c\in\mathbb{R}^d$, our \emph{harmonic} head replaces the linear classifier with
% \[
% z_c \;=\; -\,D(\mathbf{h},\mathbf{w}_c;\Theta), \qquad
% p_c \;=\; \frac{\exp(z_c)}{\sum_{j}\exp(z_j)},
% \]
% and trains with standard cross-entropy on $p_c$.\footnote{For cosine/similarity variants we may use $\log$-similarities for numerical stability; details match the robust implementations in \S\ref{sec:methods}.}
% We instantiate $D(\cdot)$ with the distances formalized in \S\ref{sec:distances}: Euclidean, Manhattan, Minkowski ($p$), Cosine, Chebyshev (optionally smooth), Canberra (standard/robust/weighted), Bray--Curtis (standard/abs/normalized), Mahalanobis (standard/diagonal/Cholesky), and Hamming (soft/gumbel/hard).
% The \emph{baseline} uses the conventional linear layer (inner-product logits).

%\subsection{Training Protocol}
\textbf{Optimization.}
Unless noted, models are trained from scratch with Adam/AdamW-style optimizers (weight decay, $(\beta_1,\beta_2)$ as configured), cosine learning-rate decay with linear warmup, mixed precision (FP16/BF16 when available), and gradient accumulation. We apply gradient clipping, dataset-specific schedulers, and early stopping with \emph{dataset-specific patience} and a minimum improvement threshold ($\Delta_{\min}$). For fairness, all harmonic layers and the baseline share the same backbone, batch size, scheduler, and data order. Additional details about optimization are reported in Appendix \ref{appendix:vision-arch}.

% \textbf{Batching.}
% Vision uses standard mini-batches with the preprocessing described above. GPT/Qwen use causal LM batches $(X,Y)$ built from contiguous blocks ($Y$ is $X$ shifted by one). BERT uses masked-LM batches with 15\% corruption: 80\% \texttt{[MASK]}, 10\% random token, 10\% unchanged; loss is computed only on masked positions.

% \textbf{Compute and distribution.}
% We train on single- and multi-GPU setups using \texttt{DistributedDataParallel}. For small images (MNIST/CIFAR), the ResNet-50 stem is adapted (no initial max-pool) to preserve resolution; for distance-head ResNet we $L_2$-normalize features before cosine-style heads if needed.

%\subsection{Evaluation Protocol}
\textbf{Model Performance.}
For vision tasks, we report average Accuracy and F1. %\textbf{top-1 accuracy} (per epoch curves and maxima) and \textbf{F1}. 
% For language we report \textbf{token-level accuracy} (causal LM next-token or BERT MLM-at-masked). We also summarize convergence via \emph{area-under-accuracy-curve} (AUCC) across epochs to capture \emph{both} final accuracy and training speed.
For language task, we report the following metrics: % Gradient Stability, %(higher values indicate smoother training with lower variance in gradient norms), 
% Model Health, % ($-\Delta\,$\texttt{model\_collapse\_score}): higher values means stronger resistance to representation collapse, 
% Clipping Quality, %(higher scores indicate healthier gradient flow without extreme values requiring intervention), 
% and Learning Quality, defined as: %(captures both how well the model learned and how much it improved). More formally:

\textbf{Perplexity (Train / Val).}
%
%To quantify language modeling quality, we report standard token--level \emph{perplexity} on both the training and validation splits.
%
Given a sequence of targets $\{y_t\}_{t=1}^T$ and model probabilities $p_\theta(y_t \mid \text{context})$, where $\theta$ denotes the model parameters, the average negative log--likelihood is:
\[
\mathcal{L}_{\text{NLL}}
= - \frac{1}{T} \sum_{t=1}^{T} \log p_\theta\big(y_t \mid \text{context}\big),
\]
\noindent and the corresponding perplexity is
\(
\mathrm{PPL} 
= \exp\!\big(\mathcal{L}_{\text{NLL}}\big).
\)
Lower perplexity indicates better next-token prediction.
\footnote{
For visualization in the radar plots, we invert perplexity (and all metrics where lower values indicate better performance) and then normalize to the range $[0,10]$ relative to the harmonic Euclidean baseline.
The effect is that the greater the coverage on the plot, the better the relative performance compared to Euclidean.
Importantly, the absolute numeric values on the radial axis do not have a direct “good/bad” interpretation in perplexity space; they are meaningful only as normalized, experiment–specific comparisons against the Euclidean harmonic.
}

{
\textbf{Gradient Stability (GS).}
To quantify the smoothness of optimization, we measure the variance of the $L_2$-norm of the gradient across consecutive training steps. Let $\mathcal{L}_t$ denote the training loss at step~$t$; then:
\[
\mathrm{GS} 
= 1 - \frac{\operatorname{Var}\big(\lVert \nabla_\theta \mathcal{L}_t \rVert_2 \big)}
           {\operatorname{Var}\big(\lVert \nabla_\theta \mathcal{L}_t \rVert_2 \big)_{\text{CE}}},
\]
\noindent where $\operatorname{Var}(\cdot)$ is computed over a fixed evaluation window (\eg, 500 steps) and the denominator corresponds to the variability under
cross–entropy (CE).
%
% {\color{red} is there a source/ref for GS?}
% 
This metric is motivated by gradient variance analyses~\citep{DBLP:journals/corr/abs-2007-04532,DBLP:conf/icml/LeiY20},
%{\color{red} PLEASE CHECK.}
which demonstrate that gradient variance dynamics correlate with training stability and convergence behavior.
Thus, $\mathrm{GS}\!=\!0$ indicates equal smoothness as CE, $\mathrm{GS}>0$ indicates reduced gradient variance (smoother training), and $\mathrm{GS}<0$ reflects more unstable gradient dynamics (higher variance than CE).
%
%Higher values indicate smoother training with lower variance in gradient norms.
%
This metric is anchored in standard variance-of-gradient analyses used in optimizing large-scale LLMs.
\textbf{Effective Rank.} This metric is inspired by analyses of dimensional collapse~\cite{DBLP:conf/iclr/JingVLT22,DBLP:conf/iclr/BardesPL22}, %
%{\color{red} PLEASE CHECK}
that track the covariance spectrum of learned representations to detect embedding collapse and maintain representation quality in neural networks.
%
% Higher values means stronger resistance to representation collapse.
%
%This relates directly to standard metrics used in collapse detection and embedding drift.
Let $\Sigma_h = \operatorname{Cov}(\mathbf{h}_t)$ with eigenvalues $\{\lambda_k\}_{k=1}^d$. Define the normalized eigenvalues $p_k = \lambda_k / \sum_j \lambda_j$. The effective rank is:
\[
\operatorname{ER}(\Sigma_h) = \exp\!\left(-\sum_{k=1}^d p_k \ln p_k\right).
\]
This is the exponential of the Shannon entropy of the eigenvalue distribution \cite{DBLP:conf/icml/GarridoBNL23} \cite{roy2007effective}. 
Metric values are in the range $[1, d]$, where a value of 1 means total collapse (one dimension dominates), and value of $d$ means perfectly uniform spread. Higher values indicate that the learned weight matrices retain higher intrinsic dimensionality, suggesting the distance metric discourages low-rank collapse in the parameter space and is consistent with more distributed use of the embedding capacity. 
}
% Clipping Quality
% Model Health
% Gradient Stability
% Learning Quality Combines final training 
% improvements $\Delta$ with respect to the Euclidean harmonic-loss baseline (positive is better unless noted). \textbf{Model performance:} \emph{ 

\textbf{Interpretability.}
We probe whether learned prototypes/weights act as class centers and whether features become more structured by computing PCA explained variance on the penultimate features: i) \textit{PC2~EV} (variance explained by the top two PCs), and ii) \textit{PCA@90\%} (dimensions required to reach 90\% variance). Lower PCA@90\% and higher PC2~EV indicate more concentrated, low-dimensional structure. For language, %we compute these on the final hidden states (causal: last token; MLM: masked-token positions).
% \textbf{Sustainability:} 
%\emph{Emissions}—$\Delta$\,CO$_2$ (grams), where lower/negative is greener. \textbf{Interpretability:} 
we report \emph{PCA5}: $\Delta$ variance explained by the top 5 principal components of final hidden states (causal LM: last token; MLM: masked positions); higher values implies more concentrated, low-dimensional structure.

\textbf{Sustainability.}
We perform training with \textit{CodeCarbon} to log \emph{duration}, \emph{energy}, and \emph{CO$_2$ emissions}. Emissions are reported per run and \emph{differentially} vs.\ the cross-entropy baseline (grams CO$_2$; negative means greener-than-baseline). We aggregate by (dataset, backbone, distance) and also report cumulative figures across seeds. For language, we also report  Speed ($-\Delta\,$\texttt{time\_to\_90\_percent}): higher values denotes fewer steps to reach $90\%$ of final performance.

%\textbf{Comparison Protocol and Controls}
To isolate the effect of the \emph{loss geometry}, we \emph{only} swap the classifier head (linear vs.\ distance-based) while keeping: backbone weights initialization scheme, data preprocessing/augmentation, optimizer and LR schedule, batch size, number of epochs, early-stopping rule, and randomness controls (seeds). For ResNet-50/PVT we use identical augmentation; for LLMs we use the same context length $L$, optimizer, and schedule across heads. We run multiple seeds and report means. %where space allows we include per-seed curves in the appendix.
% 
%\subsection{Reporting}
%We present: (i) epoch-wise accuracy curves with legends annotated by per-method \emph{max accuracy}; (ii) radar plots that jointly summarize \emph{Effectiveness} (Accuracy, F1), \emph{Interpretability} (PC2~EV, PCA@90\%), and \emph{Sustainability} (Duration/Epoch, Emissions); (iii) bar plots of emissions deltas vs.\ baseline; and (iv) tables of aggregate metrics with percentage changes relative to baseline (in parentheses). 
Exact architectures and preprocessing pipelines are detailed in Appendix~\ref{appendix:vision-arch}.
Full hyperparameter grids (including head-specific parameters $\Theta$, e.g., $p$ for Minkowski or covariance settings for Mahalanobis) are provided in Appendix~\ref{appendix:hyperparams}.
This unified protocol lets us \emph{systematically} test how replacing the Euclidean harmonic head with alternative distances impacts: i) final model performance, ii) representation structure and prototype semantics, and iii) measured energy and carbon footprint. %across both vision and language settings under matched training conditions.

\subsection{Vision: Radar Plots}

{
Figure~\ref{fig:acc:spider:vision}
summarizes the behavior of distance-based harmonic losses across all
vision settings, including a high–resolution sign language
dataset (Marathi Sign) and TinyImageNet in addition to CIFAR-100. 
Additional results on MNIST and CIFAR10 are provided in Appendix \ref{sec:spider-extra}.
Together, these radar plots expose how the choice of distance in the
harmonic loss shapes performance, representation geometry, and
sustainability.

\textbf{RQ1: Model Performance (F1, Accuracy).}
Across datasets and backbones, \emph{cosine–based harmonic losses}
remain the most reliable all–round performers.  
On CIFAR-100, cosine (stable/unstable) typically attains the highest or
near–highest accuracy and F1 on CNN and ResNet50, and is consistently
among the top curves on PVT.  
On the more realistic, higher–resolution Marathi Sign and
TinyImageNet, the same pattern largely persists: cosine (stable) and
Bray--Curtis (normalized) frequently improve or match Euclidean and
cross–entropy on CNN, ResNet50, and PVT, while also appearing in the
top group on MLP.  TinyImageNet is the most challenging setting:
cross–entropy remains a strong baseline, but cosine heads still achieve
competitive accuracy on ResNet50 and PVT, demonstrating that the
benefits of distance–tailored heads extend beyond small benchmarks.
Other non–Euclidean distances (Bray-Curtis variants, Manhattan,
Minkowski) can occasionally match or exceed cosine in specific
architecture–dataset combinations. %, but their gains are more
%configuration–dependent.

\textbf{RQ2: Interpretability (PC2 EV, PCA 90\%).}
Non–Euclidean distances reshape the final embedding geometry in a
systematic, dataset–agnostic way.  
Across Marathi Sign, TinyImageNet, and CIFAR-100, Bray--Curtis
(standard/normalized) and Chebyshev (standard) repeatedly yield the largest PC2 explained variance and the lowest dimensionality required to reach $90\%$ EV, indicating compact, prototype–aligned feature spaces with sharper class clusters than those produced by Euclidean harmonic loss or cross–entropy.
Cosine harmonic loss generally provides substantial EV gains over Euclidean, while retaining top accuracy, offering a favorable accuracy–interpretability balance on both convolutional backbones and PVT.
Mahalanobis variants often achieve extreme variance concentration (very high EV) and pronounced cluster separation, but this representation clarity sometimes co–occurs with less stable optimization on the hardest datasets.
Overall, the same geometric trends observed on earlier small benchmarks persist when moving to higher resolutions and deeper models: non–Euclidean harmonic losses, especially Bray-Curtis and Chebyshev, produce more structured, low–dimensional embeddings than Euclidean or cross–entropy heads.

\textbf{RQ3: Sustainability (Duration/Epoch/GFLOPs, Emissions).}
Distance choice also affects efficiency, but in a controlled way.
Across all datasets, cosine harmonic loss is typically neutral to favorable in emissions relative to Euclidean and cross–entropy: normalized Duration/Epoch/GFLOPs and gCO$_2$eq remain comparable, and in several ResNet50 and PVT runs, cosine achieves slightly lower emissions due to a faster approach to high accuracy.
Bray-Curtis losses incur modest overhead while delivering strong interpretability gains, whereas Mahalanobis distances are the most costly, reflecting their covariance–related computation and sometimes slower convergence on complex data.
Even on high–resolution Marathi Sign and TinyImageNet, the harmonic head accounts for only a small fraction of total FLOPs; thus, differences in Duration/Epoch are smaller than differences in accuracy or EV, yet cumulative emissions still meaningfully separate distances. % over many
%epochs.

Across all vision workloads, three regularities emerge: i) cosine harmonic loss is the best all–around choice, offering consistently strong accuracy/F1, clear geometric structure relative to Euclidean, and neutral–to–lower emissions from MLPs up to ResNet50/PVT on Marathi Sign and TinyImageNet; ii) Bray--Curtis and Chebyshev are the most interpretability–forward options, reliably increasing variance concentration and reducing PCA~90\% dimensionality, with accuracy effects that are positive but more configuration–dependent; iii) Mahalanobis emphasizes representation clarity at a higher sustainability cost. %, making it attractive when prototype sharpness and cluster separation are prioritized over raw efficiency.
Taken together, the radar plots show that the geometry of the harmonic loss, especially non–Euclidean choices, has a consistent, architecturally robust effect on performance, structure, and sustainability across both small and large vision benchmarks.
}

\subsection{Language: Radar Plots}
%Results in \ref{fig:acc:spider:language}.
% %%% PLOT HERE %%%
% %%% 3 HORIZONTAL 1 ROW %%%
%\paragraph{Language Radar Plots.}
Figure~\ref{fig:acc:spider:language} summarizes the effect of distance-tailored harmonic losses on \emph{BERT}, \emph{GPT}, and \emph{Qwen}-style decoders across the three perspectives. %: Model Performance (Model Health, Gradient Stability, Learning Stability, Clipping Quality), Interpretability (PCA Structure), and Sustainability (Emissions). 
Scores are normalized so that larger areas indicate more desirable behavior.

\textbf{RQ1: Model Performance (Perplexity, Health, Stability).}
Across architectures, cosine–based harmonic losses remain the most reliable choices on
performance–oriented axes.  
For BERT, cosine heads %(with moderate temperatures) 
achieve low train and validation perplexity
while improving Gradient Stability and preserving high effective rank relative to both
cross–entropy and Euclidean harmonic loss. %Very sharp cosine temperatures can over–constrain the logits,
%leading to slightly worse perplexity and less stable training, but the ``simple'' and medium–temperature
%variants define the best overall envelope.  
For GPT, cosine and Minkowski ($p{=}2$) again provide steady training dynamics with competitive perplexity, whereas the cross–entropy baseline exhibits higher variability and weaker stability.
On Qwen, none of the approaches provides a satisfactory trade-off. Euclidean harmonic loss offers the strongest effective rank and gradient stability,
while Minkowski provides the best perplexity. % a close alternative; the cross–entropy head is consistently dominated on at least
%one of these axes.  
In summary, in two out of three cases, replacing the linear classifier with distance–based harmonic heads reduces gradient volatility and collapse symptoms while maintaining or improving perplexity. % across LLM architectures.

\textbf{RQ2: Interpretability (PCA Structure).}
Non–Euclidean distances consistently concentrate token representations into more structured latent spaces.
In BERT and GPT, cosine and Minkowski enlarge the PCA Structure wedge (higher variance explained by a small
number of components), indicating more organized, prototype–aligned embeddings than those produced by
cross–entropy or Euclidean harmonic loss. Qwen shows a similar pattern: distance–based heads achieve
clearer low–dimensional structure even when Euclidean is slightly stronger on stability. As in the vision
experiments, geometries that emphasize angles (cosine) or $L_p$ structure (Minkowski) tend to
yield hidden states that are easier to summarize with a few principal components.

\textbf{RQ3: Sustainability (Emissions).}
Results confirm that distance–based harmonic heads introduce little computational overhead
and can be greener than cross–entropy in practice. In all three models, the cross–entropy baseline
occupies the largest emissions wedge, while cosine and Minkowski are neutral–to–favorable, often matching or
improving on Euclidean harmonic loss. Extremely sharp cosine temperatures may reduce emissions slightly but
at the cost of stability and perplexity; moderate settings avoid this trade–off. Because the classifier head
is lightweight compared to the Transformer backbone, these sustainability gains primarily arise from smoother
optimization and faster convergence rather than per–step FLOPs.

In summary, cosine–based harmonic losses are the most robust all–around choice for LLMs, jointly improving
perplexity, stability, and representation structure with neutral or reduced emissions. Minkowski ($p{=}2$)
provides a strong alternative when cosine hyperparameters are poorly tuned, while Euclidean remains a solid reference but is rarely dominant over non–Euclidean geometries. % are available.
Additional results showcasing optimization dynamics for all models, including a larger GPT2 (2B) model, are reported in Appendix \ref{sec:convergence}.

\begin{figure*}[h!]
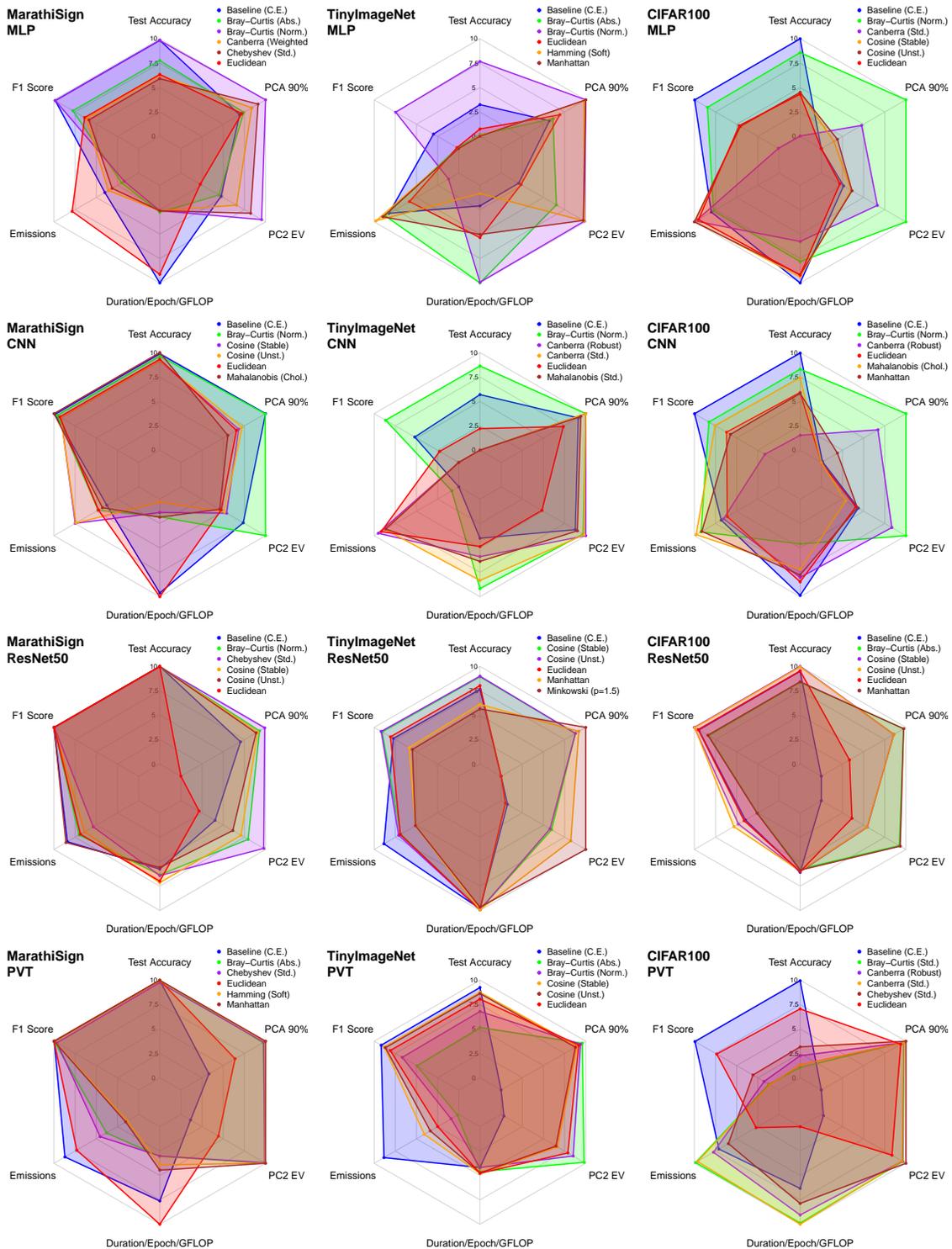

    \begin{centering}
    % MLP
    \includegraphics[page=14,width=0.28\textwidth,trim=45 80 80 15,clip]{figures/Color-Blind_Accessible_Images/color_blind_vision_spiders.pdf}
    \includegraphics[page=18,width=0.28\textwidth,trim=45 80 80 15,clip]{figures/Color-Blind_Accessible_Images/color_blind_vision_spiders.pdf}
    \includegraphics[page=6,width=0.28\textwidth,trim=45 80 80 15,clip]{figures/Color-Blind_Accessible_Images/color_blind_vision_spiders.pdf}
    % CNN 
    \includegraphics[page=13,width=0.28\textwidth,trim=45 80 80 15,clip]{figures/Color-Blind_Accessible_Images/color_blind_vision_spiders.pdf}
    \includegraphics[page=17,width=0.28\textwidth,trim=45 80 80 15,clip]{figures/Color-Blind_Accessible_Images/color_blind_vision_spiders.pdf}
    \includegraphics[page=5,width=0.28\textwidth,trim=45 80 80 15,clip]{figures/Color-Blind_Accessible_Images/color_blind_vision_spiders.pdf}\\
    % RESNET
    \includegraphics[page=16,width=0.28\textwidth,trim=45 80 80 15,clip]{figures/Color-Blind_Accessible_Images/color_blind_vision_spiders.pdf} 
    \includegraphics[page=20,width=0.28\textwidth,trim=45 80 80 15,clip]{figures/Color-Blind_Accessible_Images/color_blind_vision_spiders.pdf}
    \includegraphics[page=8,width=0.28\textwidth,trim=45 80 80 15,clip]{figures/Color-Blind_Accessible_Images/color_blind_vision_spiders.pdf}\\
    % PVT
	    \includegraphics[page=15,width=0.28\textwidth,trim=45 80 80 15,clip]{figures/Color-Blind_Accessible_Images/color_blind_vision_spiders.pdf} 
	    \includegraphics[page=19,width=0.28\textwidth,trim=45 80 80 15,clip]{figures/Color-Blind_Accessible_Images/color_blind_vision_spiders.pdf}
	    \includegraphics[page=7,width=0.28\textwidth,trim=45 80 80 15,clip]{figures/Color-Blind_Accessible_Images/color_blind_vision_spiders.pdf}\\     %\includegraphics[page=17,width=0.3\textwidth,trim=45 80 80 15,clip]{figures/Image_Classification/Spider/euclidean_baseline_plus_top4_tinyimagenet_no_new_losses.pdf}
	        \caption{Vision: Radar plots: 1) \textit{Model Performance} (F1, Accuracy); 2) \textit{Interpretability} (PC2 EV, PCA 90\%), and 3) \textit{Sustainability} (Duration/Epoch/GFLOPs, Emissions). Plots feature Baseline (Cross-Entropy), Euclidean harmonic, and the four top-performing non-Euclidean harmonic losses.}
	        \Description[Vision radar plots comparing harmonic losses]{A grid of radar (spider) charts comparing distance-based harmonic losses on vision tasks. Each radar chart has axes grouped into three perspectives: model performance (F1 and accuracy), interpretability (explained variance of the first two principal components and the number of PCs needed to reach 90\% explained variance), and sustainability (training duration per epoch, GFLOPs where reported, and estimated CO2 emissions). Within each radar, multiple colored polygons/lines represent the cross-entropy baseline, a Euclidean harmonic loss, and several top-performing non-Euclidean harmonic losses, enabling visual comparison across metrics.}
	        \label{fig:acc:spider:vision}
	    \end{centering}
\end{figure*}

% NEW RADAR TINYIMAGENET

% %\begin{wrapfigure}{h!}{0.32\textwidth}
% \begin{figure}[h!]
% \begin{centering}
% \includegraphics[page=18,width=0.35\textwidth,trim=45 80 80 15,clip]{figures/Image_Classification/Spider/euclidean_baseline_plus_top4_tiny_imagenet_no_new_losses_11-26.pdf}
% \includegraphics[page=17,width=0.35\textwidth,trim=45 80 80 15,clip]{figures/Image_Classification/Spider/euclidean_baseline_plus_top4_tiny_imagenet_no_new_losses_11-26.pdf}
%     \caption{Vision: Radar plot - TinyImageNet : 1) \textit{Model Performance} (F1, Accuracy); 2) \textit{Interpretability} (PC2 EV, PCA 90\%), and 3) \textit{Sustainability} (Duration/Epoch/GFLOPs, Emissions).} %The plot features Baseline (Cross-Entropy), Euclidean harmonic, and the four top-performing non-Euclidean harmonic losses.}
%     \label{fig:acc:spider:tiny:resnet}
% \end{centering}
% \end{figure}
% %\end{wrapfigure}

\begin{figure*}
\includegraphics[page=1,width=0.3\textwidth,trim=65 75 40 15,clip]{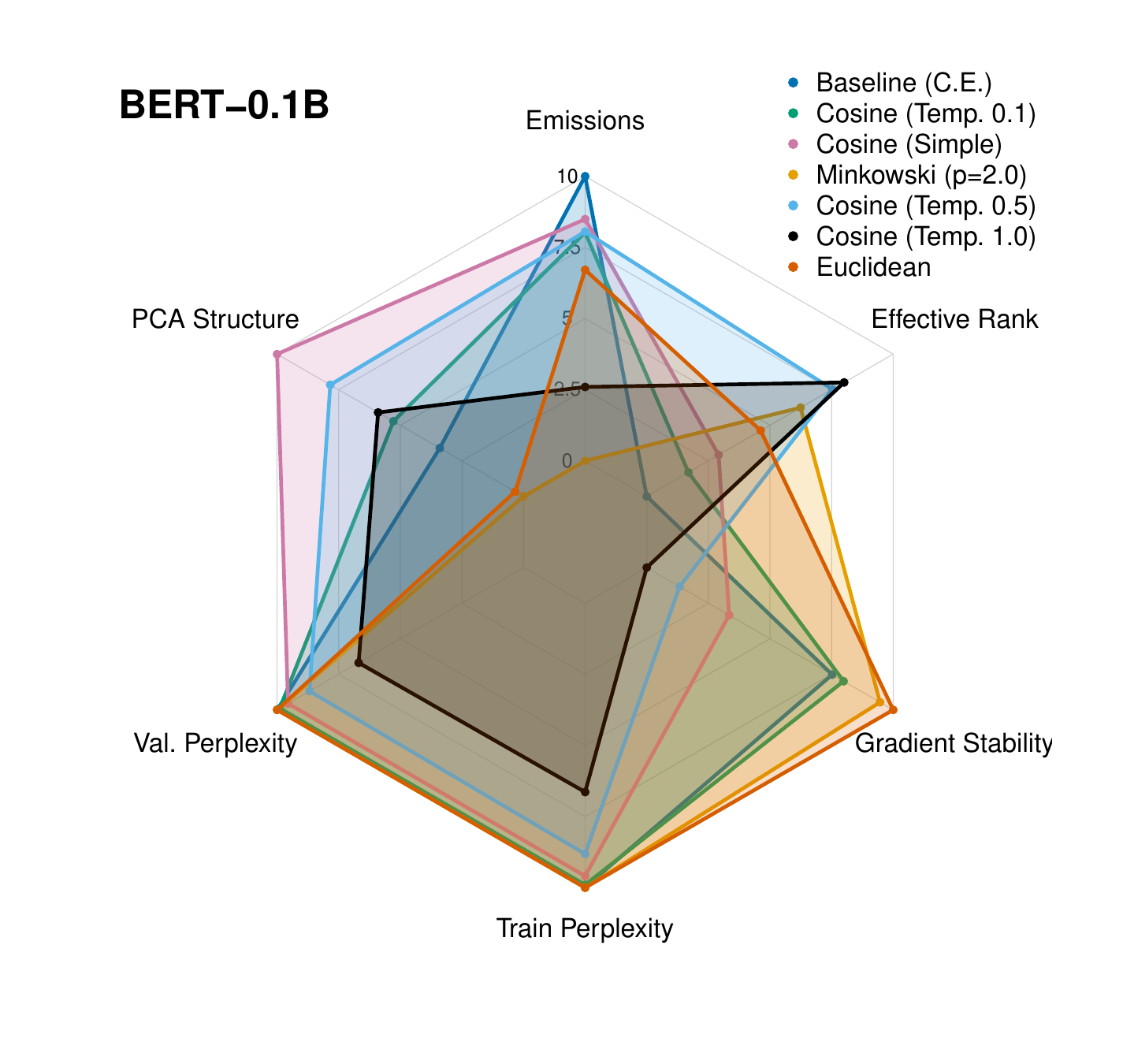}
\includegraphics[page=2,width=0.3\textwidth,trim=65 75 40 15,clip]{figures/Color-Blind_Accessible_Images/color_blind_llm_spiders.pdf}
\includegraphics[page=4,width=0.3\textwidth,trim=65 75 40 15,clip]{figures/Color-Blind_Accessible_Images/color_blind_llm_spiders.pdf}
	        \caption{Language: Radar plots: 1) \textit{Model Performance} (Perplexity, Effective Rank, Gradient Stability); 2) \textit{Interpretability} (PCA5 EV), and 3) \textit{Sustainability} (Emissions). Plots feature Baseline (CE), Euclidean harmonic, and the top-performing non-Euclidean harmonic losses.}
	        \Description[Language-model radar plots comparing harmonic losses]{A set of radar (spider) charts comparing distance-based harmonic losses for language models. Each chart summarizes model performance (perplexity, effective rank, and gradient stability), interpretability (explained variance captured by the top five principal components), and sustainability (estimated CO2 emissions). Curves/polygons within each radar correspond to the cross-entropy baseline, a Euclidean harmonic loss, and the strongest non-Euclidean harmonic loss variants, allowing side-by-side comparison of trade-offs across metrics.}
	        \label{fig:acc:spider:language}
\end{figure*}

%\clearpage\newpage

% In your preamble:
% \usepackage{booktabs}
% \usepackage{siunitx} % optional

\section{Related Work}
\textbf{Loss functions for classification.} The majority of classification models are trained with cross-entropy loss due to its empirical effectiveness and probabilistic interpretation.
However, it only cares about separating classes, not about how the representations are separated, often yielding features that are separable but not necessarily interpretable.
Over the years, alternative loss functions have been proposed to address these limitations.
Metric learning losses, such as contrastive and triplet loss, train models to preserve distances between examples, but require sampling strategies that add training complexity.
\citet{boudiaf2020mutualinfo} propose a unifying mutual information framework connecting cross-entropy to standard pairwise losses, showing that cross-entropy implicitly bounds pairwise distance objectives.
These insights motivate a deeper theoretical understanding of distance-based training.
Regularization-based approaches such as \emph{center loss} \citep{wen2016center} explicitly encourage compact intra-class clusters and large inter-class separation. These works foreshadow the idea that directly leveraging distances to class prototypes can improve representation quality. Angular margin losses, such as AMC-Loss in \citet{choi2020amcloss}, introduce geometric constraints on angular separations to enhance interpretability via hyperspherical metrics.
Orthogonal Projection Loss (OPL) introduced by \citet{ranasinghe2021opl} encourages inter-class orthogonality and intra-class cohesion without sampling overhead. Several studies have assessed how loss functions affect neural network performance.
\citet{miller2021cac} introduce \emph{Class Anchor Clustering} (CAC) loss that encourages tight class clusters centered on anchored prototypes, enhancing distance-based open-set classification performance. This approach aligns with the prototype-centered philosophy underlying harmonic loss. \citet{cho2019performance} analyzed how eight loss functions impact neural network accuracy and convergence speed, finding that additive-margin softmax loss resulted in the fastest convergence and highest performance on multiple datasets. \citet{janocha2017loss} assessed 12 loss functions for classification, finding that choice of loss function impacted learning speed and testing accuracy. \citet{gonzalez2020improved} used genetic programming to develop Baikal loss, which not only led to networks achieving higher accuracy than networks trained with cross-entropy loss, but also faster training and higher performance in low-data settings. These studies demonstrate a large focus on the impact of loss function on neural networks performance. Our work builds on the discussion of the importance of loss function choice by drilling deeper on harmonic loss, examining how distance metric choice impacts the effectiveness of neural networks. Our focus is not on comparing harmonic loss with other loss functions, which was done by \citet{baek2025harmonic}, but rather to shed light on the performance of a generalized harmonic loss. 

\textbf{Efficiency and Green AI.} Green AI is an emerging initiative that calls for efficiency and energy usage to be treated as first-class evaluation criteria \citep{schwartz2019greenai}. Many works on green AI focus on model compression \citep{paula2025comparative, rafat2023mitigating}, comparing multiple models \citep{verma2024performance} or fine-tuning strategies \citep{wang2023energy}, or hyperparameter optimization for carbon emission reduction \cite{wang2025carbon}. While prior works on new loss functions rarely report sustainability metrics, we incorporate carbon footprint analysis into our evaluation due to claims that models trained with harmonic loss are more data efficient and have less grokking \citep{baek2025harmonic}. 

\textbf{Interpretability in neural networks.}
Neural networks are complex and not inherently interpretable, but substantial effort has been devoted to improving interpretability \citep{zhang2021survey}.
The push for \emph{interpretable by design} models argues that transparency should be built into model architectures and losses rather than added post-hoc~\citep{rudin2019stop}.
Harmonic loss aligns with this vision by structurally linking model weights to class prototypes.
The study by \citep{bereska2024mechanistic} discusses how internal model components reveal human-understandable circuits and features in LLMs.
Techniques such as activation patching, sparse autoencoders, transcoders, and crosscoders enable structural interpretations of model behavior.
Parallel to our interpretability focus, \citet{wen2025interpgn} introduced a framework combining interpretable models with deep networks for time-series tasks, preserving understandable reasoning where possible; though not loss-centric, it reflects the growing emphasis on transparency in deep learning research.
Some work has focused on using loss functions specifically to improve model interpretability.
\citet{liu2022sparsity} combine sparse coding constraints with cross-entropy to produce concise, interpretable word-level attributions.
\citet{dong2017improving} introduced \emph{interpretative loss} to improve the interpretability of learned features during video captioning tasks. Within classification tasks.
\citet{zhang2018interpretable} designed a loss function to improve CNN filter interpretability.
Methods such as the one proposed by \citet{hagos2023xbl} augment standard losses with distance-based penalties that align model attributions with user-provided annotations, strengthening interpretability.
\textbf{Distance metrics in learning algorithms.}
Beyond supervised classification, the choice of distance measure is known to be crucial.
\citet{coil2025distance} compared twelve distance metrics in anomaly detection for concept drift; their results highlighted that performance depends heavily on the chosen metric and that efficient alternatives can sometimes match the performance of more costly distances.
A variety of other works have shown the importance of distance metric choice.
\citet{amaya2024distance} used a kernel for SVMs that supported a variety of kernels, finding that the choice of distance metric impacted performance.
\citet{kalra2022effect} and \citet{hu2016distance} both found that distance metric choice impacted performance of \textit{k}-nearest neighbors algorithms on a variety of datasets.
These result highlights the importance of systematically exploring metrics in different contexts.
To our knowledge, our paper is the first to bring this perspective into loss functions.

{
\section{Conclusion}
\label{sec:conclusion}
This work examined \emph{distance–based harmonic losses} as drop–in replacements for cross–entropy across image classification
(MNIST, CIFAR-10, CIFAR-100, Marathi Sign Language, TinyImageNet) with four vision backbones (MLP, CNN, ResNet50, PVT) and
LLM pretraining (GPT, BERT, Qwen, GPT-2B), leveraging a broad family of distances (cosine, Euclidean, Bray–Curtis,
Mahalanobis, Minkowski, Chebyshev, Canberra, \emph{etc.}) and comparing them against strong modern baselines
(Focal Loss, Label Smoothing, Center Loss, Confidence Penalty, ArcFace).

What we learned:
i) \textbf{Geometry matters for optimization.}
Across vision and language tasks, Cosine consistently delivers smoother training dynamics, higher or competitive final performance, and reduced grokking–like behavior in toy modulo–addition experiments.
Euclidean remains a solid reference; Bray–Curtis is often competitive but architecture–sensitive; Mahalanobis exhibits
the largest variance---sometimes yielding very sharp, well–separated clusters, but with less stable plateaus in more difficult scenarios (larger datasets and model backbones).
Loss–convergence curves for both vision and LLMs show that all investigated distances (including cosine and Mahalanobis) exhibit smooth optimization without problematic instabilities.
ii) \textbf{Sustainability depends jointly on distance and architecture.}
On vision tasks, several non–Euclidean harmonic losses are carbon–negative per step relative to cross–entropy for CNN/ResNet50 (largest gains occur on deeper CNNs), mixed on MLP, and closer to neutral on PVT and TinyImageNet, where
backbone FLOPs dominate.
For LLM pretraining, the classifier head is lightweight, so differences arise primarily via convergence: the cross–entropy baseline typically incurs the largest cumulative emissions, while cosine and Minkowski heads are neutral–to–favorable.
Our FLOPs–normalized analysis and extended emissions study show that the best non–Euclidean harmonic losses lie on or near the sustainability–accuracy Pareto frontier.
iii) \textbf{Interpretability can be quantified.}
PCA–based probes (variance concentration and PCA@90\%) and geometric visualizations of prototype neighborhoods provide reproducible evidence that distance–tailored heads yield more structured representations.
Bray–Curtis and Chebyshev consistently increase variance concentration and reduce intrinsic dimensionality, while Mahalanobis emphasizes representation clarity at a higher computational cost.
These trends hold for image features and for token representations in LLMs (last–token and masked–token states) and
are supported by statistical tests (Wilcoxon) and confidence intervals across seeds.
\textbf{Language.}
Cosine–based harmonic losses markedly improve gradient/learning stability, perplexity, and representation structure for GPT, BERT, Qwen, and GPT-2B, while keeping emissions on par with or below
cross–entropy and Euclidean harmonic loss. 
Mahalanobis remains less attractive for large–scale pretraining due to covariance overheads and sensitivity to ill–conditioned statistics.
\textbf{Vision.}
For accuracy–focused workloads across MNIST, CIFAR, Marathi Sign, and TinyImageNet, cosine (stable) is the preferred all–round choice; Bray–Curtis is a strong secondary option; Mahalanobis should be used when its inductive bias (sharp, anisotropic clusters) is explicitly desired.
For sustainability on CNN/ResNet50, several non–Euclidean distances reduce per–step CO$_2$; on PVT and LLMs, the lightest geometries (cosine/Euclidean) should be favored, or cross–entropy retained unless a distance–based head reduces steps to target enough to offset higher per–step cost.

In summary, our \emph{framework} including a plug–and–play harmonic distance layer, a catalogue of non-Euclidean distances, and a three–axis evaluation protocol (\emph{performance, interpretability, sustainability}) %with concrete metrics, visualizations, and statistical tests.
%
%This framework 
can be effectively exploited in future work: practitioners can choose distances according to their priorities, and researchers can extend our study to new geometries, learning settings, and domain–specific constraints.
In this sense, distance–based harmonic losses provide a principled, empirically validated toolbox for rethinking the geometry of classification layers in both vision and language models.

\newpage
%
\begin{comment}
%
%\textbf{Outlook.}
%

\textbf{Reproducibility Statement.}
We took several steps to facilitate exact and statistical reproducibility. The main paper specifies the learning objectives, training protocol, model families, and evaluation metrics used in all studies. The \emph{Appendix} contains: i) complete hyperparameter and backbone-specific settings; %, with early-stopping and optimizer details; {\color{red}
ii) dataset descriptions and end-to-end preprocessing pipelines (including splits and any filtering); iii) detailed experimental studies and analyses; %(multiple random seeds, alternative batch sizes, and learning-rate sweeps); and iv) the methodology for energy/emissions measurement.}; 
iv) technical details with code snippets to integrate our non-Euclidean harmonic losses in conventional deep learning pipelines. 
Our code repository provides: ready-to-run scripts for data acquisition and preprocessing; configuration files for every experiment; training/evaluation entry points; instructions for reproducing results.  %logging utilities that reproduce all tables and figures; and experiment manifests to rerun all results across seeds. 
%The repository also includes instructions for reproducing results with a single command. % per figure/table, and we release outputs (CSV logs and checkpoints) to support verification without retraining. 
Together, these materials are intended to enable independent researchers to audit, rerun, and extend our findings with minimal effort.
\end{comment}
}

\bibliographystyle{ACM-Reference-Format}
\bibliography{refs}

\appendix
\section*{Appendix}

% =========================
% Theory for Distance-based DistLayers / Harmonic-style losses
% =========================
\section{Theoretical Properties of Distance-based Probabilistic Layers}
\label{sec:theory}

\textbf{Setup.}
Let $\{(x_i,y_i)\}_{i=1}^n$ be the training set with $y_i\in\{1, ..., K\}$.
Each class has a prototype $w_k\!\in\!\mathbb{R}^d$ and a nonnegative distance $d(x,w)\!\ge\!0$.
Given a decreasing link $\kappa:\mathbb{R}_+\!\to\!\mathbb{R}_+$ we define
\[
p_k(x_i)\;=\;\frac{\kappa\!\big(d(x_i,w_k)\big)}{\sum_{j=1}^K\kappa\!\big(d(x,w_j)\big)}\!,
\qquad
\mathcal{L}(\{w_k\})\;=\;-\sum_{i=1}^n\log p_{y_i}(x_i).
\]
% We will compare two links:
\emph{harmonic} $\kappa(r)=r^{-\omega}$ with $\omega>0$; while
% (ii) \emph{Gibbs/softmax-distance} $\kappa(r)=\exp(-r/\tau)$ with temperature $\tau>0$.
distances include Euclidean/Mahalanobis, %$\|A(x-w)\|_2$, 
$L_p$, Bregman divergences, and cosine/angle on the sphere.

\subsection{Scale invariance and finite minimizers}
We begin by generalizing the finite-minimizer result of the harmonic loss (cf. Thm.~1, Sec.~G in \citet{baek2025harmonic}). %:contentReference[oaicite:4]{index=4}

\begin{definition}[Metric separability and homogeneity]
A dataset is \emph{metric-separable} if for each $i$ there exists $\{w_k\}$ s.t.
$d(x_i,w_{y_i})<\min_{j\neq y_i}d(x_i,w_j)$.
A distance $d$ is \emph{1-homogeneous} if $d(c x,c w)=|c|\,d(x,w)$ for all $c>0$.
\end{definition}

\begin{theorem}[Finite minimizer and scale invariance for harmonic link]
\label{thm:finite}
Assume $d$ is $1$-homogeneous and the training set is metric-separable.
For $\kappa(r)=r^{-\omega}$, the empirical loss $\mathcal{L}$ is invariant to the joint rescaling
$(x,w)\mapsto(c x, c w)$ and attains a global minimum at \emph{finite} $\{w_k\}$.
In particular, %once all $x_i$ are strictly nearest to $w_{y_i}$, 
increasing $\|w_k\|$ further does not reduce $\mathcal{L}$.
\end{theorem}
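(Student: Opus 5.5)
The plan is to treat the two claims separately: scale invariance is a direct algebraic computation, and existence of a finite minimizer is a coercivity (Weierstrass-type) argument on the prototypes $\{w_k\}$, with the data $\{x_i\}$ held fixed. The second claim will need a mild extra hypothesis on $d$, described below.

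\emph{Scale invariance.} Fix $c>0$. By 1-homogeneity, $d(cx_i,cw_k)=c\,d(x_i,w_k)$, so $\kappa(c\,d)=c^{-\omega}d^{-\omega}$. The common factor $c^{-\omega}$ appears in every term of the numerator and the denominator of $p_k(x_i)$, so it cancels. Hence every $p_{y_i}(x_i)$, and therefore $\mathcal{L}$, is unchanged under $(x,w)\mapsto(cx,cw)$. Cases where some distance is zero are handled by the convention below and are preserved by scaling, since $c\,d=0$ if and only if $d=0$.

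\emph{Finite minimizer.} First I would make $\mathcal{L}$ well defined on all of $(\mathbb{R}^d)^K$. Rewrite
\[
-\log p_{y_i}(x_i)=\log\Bigl(1+\sum_{j\neq y_i}\bigl(d(x_i,w_{y_i})/d(x_i,w_j)\bigr)^{\omega}\Bigr),
\]
and extend it by continuity: the term is $0$ when $d(x_i,w_{y_i})=0$ and $+\infty$ when some wrong prototype sits at distance $0$. This makes $\mathcal{L}$ lower semicontinuous with values in $[0,\infty]$. Metric-separability gives a configuration with $\mathcal{L}<\infty$, so the sublevel set $S=\{\mathcal{L}\le \mathcal{L}(w^0)\}$ is nonempty.

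The key step is to show that the relevant coordinates of $S$ are bounded. For this I will use that the distances in scope (Euclidean, $L_p$, Mahalanobis) are translation-invariant norms $d(x,w)=\|x-w\|$. By norm equivalence there is an $a>0$ with $d(x,w)\ge a\|x-w\|_2$, so $d(x_i,w_k)\to\infty$ uniformly in $i$ as $\|w_k\|\to\infty$. Now suppose class $k$ contains some sample $i$. As $\|w_k\|\to\infty$, the ratio $d(x_i,w_k)/d(x_i,w_j)$ blows up for any $j$ whose prototype stays bounded, so the loss term for sample $i$ tends to $+\infty$. If several prototypes diverge together, I would compare their rates and pick the slowest-diverging one among the classes with samples; I expect this case analysis to be the main technical obstacle. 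Once it is done, prototypes of classes with samples are confined to a compact set. Lower semicontinuity then gives a minimizer there, which is the "increasing $\|w_k\|$ does not reduce $\mathcal{L}$" statement.

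For classes with no samples, their prototypes only appear in the denominators, and pushing them away can only help. So I would either assume every class is represented, as the paper implicitly does, or place those prototypes at any finite point that is at least as far from the data as all other prototypes, which leaves the infimum attained.

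Cosine is not 1-homogeneous in this sense, so it falls outside the theorem, and I would say so explicitly.
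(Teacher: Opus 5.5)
Your scale-invariance step is the paper's argument and is fine. The finite-minimizer step, however, rests on a coercivity claim that is false. Assume every class has samples and let all prototypes escape together, $w_k=t v_k+O(1)$ with $v_k\neq 0$ and $t\to\infty$. Then $d(x_i,w_k)=t\|v_k\|+O(1)$, every ratio $d(x_i,w_{y_i})/d(x_i,w_j)$ tends to $\|v_{y_i}\|/\|v_j\|$, and
\[
\mathcal{L}\;\longrightarrow\;-\sum_{k} n_k\log\pi_k,\qquad \pi_k=\frac{\|v_k\|^{-\omega}}{\sum_j\|v_j\|^{-\omega}},
\]
where $n_k$ is the size of class $k$. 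This limit is finite. With $\|v_k\|^{-\omega}\propto n_k$ it equals $nH$, where $H=-\sum_k (n_k/n)\log(n_k/n)$ is the entropy of the class proportions. Comparing rates cannot rescue this. For the slowest-diverging prototype all its ratios stay bounded, and blow-up happens only when rates differ. Hence $S=\{\mathcal{L}\le\mathcal{L}(w^0)\}$ is bounded only if $\mathcal{L}(w^0)<nH$, and metric separability does not give that. It only yields $p_{y_i}(x_i)>1/K$, i.e.\ $\mathcal{L}(w^0)<n\log K$, and $n\log K>nH$ whenever the classes are unbalanced. A separating $w^0$ with $\mathcal{L}(w^0)>nH$ has a sublevel set containing points arbitrarily far out along the ray above.

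The missing ingredient is a finite configuration that beats the value at infinity. For Euclidean distance (and Mahalanobis, after a linear change of variables), expand along that ray with $v_k=\|v_k\|\hat v_k$:
\[
\mathcal{L}(t)=nH-\frac{\omega}{t}\sum_k n_k\frac{\langle\hat v_k,\bar x_k-\bar x\rangle}{\|v_k\|}+O(t^{-2}),
\]
where $\bar x_k$ and $\bar x$ are the class and global means. Choosing $\hat v_k$ along $\bar x_k-\bar x$ makes the correction negative unless all class means coincide. Separability rules that out, because each $\bar x_k$ lies in its own open Voronoi cell and these cells are convex and pairwise disjoint.

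Once you have $\mathcal{L}(t_0)<nH$, every escaping sequence has $\liminf\mathcal{L}\ge nH$ (by Gibbs' inequality, or $+\infty$ if rates differ). That sublevel set is then bounded, and lower semicontinuity finishes the argument. You still need to fix your convention at points where $d(x_i,w_{y_i})$ and some $d(x_i,w_j)$ vanish together, since your two rules conflict there. For $L_1$, $L_\infty$ and similar norms the cells need not be convex, so this step needs its own argument.

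You cannot borrow the paper's shortcut of restricting $W$ to a unit sphere and invoking the extreme value theorem. With $x$ fixed, $\mathcal{L}$ is invariant only under the joint rescaling, not under $W\mapsto cW$.

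Finally, your fallback for empty classes is wrong. For such a $j$, each term $(d(x_i,w_{y_i})/d(x_i,w_j))^{\omega}$ is strictly positive at any finite $w_j$ and tends to $0$ as $w_j\to\infty$. So the infimum is not attained unless every sample sits on its own prototype, and you must assume all classes are represented.
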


\begin{proof}[Proof]
Following the proof of Sec.~G Thm.~1 in \citet{baek2025harmonic}, the probabilities remain unchanged under uniform scaling for any $1$-homogeneous distance $d$. For the probabilities, if we replace $x_i$ by $cx_i$ and $w_j$ with $cw_j$, then $d(c x_i,c w_j)=c\,d(x_i,w_j)$, so the scaling factors cancel when using a harmonic link $\kappa$. Therefore, once the correct classification is achieved, no further reduction in loss is obtained by increasing $\|w_k\|$, and the loss achieves a global minimum at a finite $\{w_k\}$.%:contentReference[oaicite:5]{index=5}
\end{proof}

\textbf{Compactness Argument for Finite Minimizers:}
Since the loss $\mathcal{L}(W)$ is scale-invariant, we can restrict the optimization to the unit hypersphere $\mathcal{S} = \{W : \|W\|_\star = 1\}$. Given that the training set is metric-separable, the loss function $\mathcal{L}$ is continuous and bounded on the compact set $\mathcal{S}$. By the \textit{Extreme Value Theorem}, $\mathcal{L}$ must attain a global minimum $W^* \in \mathcal{S}$. Any scaled version $cW^*$ ($c > 0$) is also a global minimizer with a finite norm, thus ensuring that the "weight explosion" seen in Softmax training is theoretically impossible here.

% \textbf{Contrast (Gibbs link).}
% For $\kappa(r)=\exp(-r/\tau)$, the rescaling $(x,w)\mapsto c(x,w)$ is equivalent to changing $\tau\mapsto\tau/c$,
% so the objective is \emph{not} scale-invariant; weight norms can “run” unless temperature is controlled.

\subsection{Margin-style generalization (PAC-Bayes view)}
Sec.~G gives a PAC-Bayes margin bound that is finite because the harmonic solution has finite norm (Thm.~2) in \citet{baek2025harmonic}.% . :contentReference[oaicite:6]{index=6}

\begin{definition}[Distance margin]
Given prototypes $W=\{w_k\}$, define
\(
\gamma(W)\;=\;\min_i\big[\min_{j\neq y_i}d(x_i,w_j)-d(x_i,w_{y_i})\big].
\)
That is, $\gamma(W)$ measures the gap between the nearest \emph{incorrect} prototype and the \emph{correct} prototype; $\gamma(W) > 0$ implies all training points are strictly closer to their true class prototype.
\end{definition}

\begin{theorem}[Generalization with metric margin]
\label{thm:margin}
Assume all $x_i$ lie in a ball of radius $R$ (in the native norm of $d$ or its inducing space). 
Let $\|W\|_\star$ denote a capacity measure compatible with $d$. %(e.g.\ operator norm of $A$ for Mahalanobis).
With probability at least $1-\delta$, the generalization error of the classifier satisfies
\[
\Pr_{(x,y)}\big[h_W(x)\neq y\big]
\;=\;
\mathcal{O}\!\left(\frac{R\,\|W\|_\star}{\gamma(W)\sqrt{n}}+\sqrt{\frac{\log(1/\delta)}{n}}\right),
\]
where $h_W(x)=\arg\max_k p_k(x)$ denotes the predicted class and $n$ is the number of training samples.
For the harmonic link, $\|W\|_\star$ is finite by Thm.~\ref{thm:finite}, yielding a finite bound (cf.\ Sec.~G Thm.~2) in \citet{baek2025harmonic}. % :contentReference[oaicite:7]{index=7}
\end{theorem}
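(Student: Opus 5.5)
The plan is to reduce Theorem~\ref{thm:margin} to a standard multiclass margin bound by showing that the harmonic prediction rule is a nearest-prototype classifier, and then controlling the complexity of that class through the margin. Because $\kappa(r)=r^{-\omega}$ is strictly decreasing, $h_W(x)=\arg\max_k p_k(x)=\arg\min_k d(x,w_k)$. The classifier therefore depends only on the score vector $f_W(x)=\big(-d(x,w_k)\big)_{k=1}^K$. Its multiclass margin $f_W(x)_y-\max_{j\neq y}f_W(x)_j$ is exactly the quantity inside $\gamma(W)$, so the link plays no role beyond fixing the argmax. I would state this reduction first; it also makes clear that $\gamma(W)>0$ means the training error at margin $\gamma$ is zero.

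Second, I would invoke a margin bound of Koltchinskii--Panchenko / Bartlett--Mendelson type. The PAC-Bayes route of \citet{baek2025harmonic}, Sec.~G Thm.~2, gives the same shape. With probability $1-\delta$,
\[
\Pr[h_W(x)\neq y]\le \widehat{\Pr}_\gamma + \frac{c_K}{\gamma}\,\mathfrak{R}_n(\mathcal{F}_{\|W\|_\star\le B}) + \sqrt{\tfrac{\log(1/\delta)}{n}},
\]
where $\widehat{\Pr}_\gamma$ is the empirical $\gamma$-margin loss and is zero here. To let $\gamma=\gamma(W)$ and $B=\|W\|_\star$ be data-dependent, I would apply a union bound over a dyadic grid of $(\gamma,B)$. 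This adds only $\log\log$ terms, which are absorbed into $\mathcal{O}(\cdot)$.

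Third, I would bound the Rademacher complexity of $\{x\mapsto d(x,w_k)\}$ by $\mathcal{O}(R\,\|W\|_\star/\sqrt n)$, with the constant depending on the distance. For norm-induced distances ($L_p$, Euclidean, Mahalanobis $\|A(x-w)\|_2$), $d(x,w)$ is $1$-Lipschitz in $x-w$ with respect to the native norm. Together with $\|x\|\le R$ and $\|w_k\|\le\|W\|_\star$, this lets me use the Talagrand contraction lemma and the standard bound for Lipschitz compositions of linear-type classes. For Euclidean specifically, the identity $\|x-w\|^2=\|x\|^2-2\langle x,w\rangle+\|w\|^2$ gives a linear class plus terms that cancel in score differences. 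I would define $\|W\|_\star$ precisely so that this bound holds; the statement leaves that choice open. Cosine can be handled as a linear class on normalized inputs.

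Finally, Theorem~\ref{thm:finite} provides a finite minimizer, so the bound is finite. Scale invariance also lets us normalize $W$. One caveat: the ratio $R\|W\|_\star/\gamma(W)$ is itself invariant under joint rescaling for $1$-homogeneous $d$, which is consistent with the statement.

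The main obstacle is the third step. The claimed rate requires a capacity measure $\|W\|_\star$ for which the complexity of the distance class is linear in $R\|W\|_\star$ for general $d$. This is routine for norms, but for Canberra, Bray--Curtis or Bregman divergences the map is not uniformly Lipschitz near zero. I would therefore restrict the theorem to distances that are $L$-Lipschitz in the native norm and absorb $L$ and the $K$-dependence into the constant.
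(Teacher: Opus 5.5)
Your reduction (the harmonic argmax is the nearest-prototype rule, so the multiclass margin of the scores $-d(x,w_k)$ is exactly $\gamma(W)$) is fine. So is the uniform margin bound with a union over a grid of $(\gamma,B)$. Together they already say more than the paper's proof, which just imports the PAC-Bayes margin bound of \citet{baek2025harmonic} and uses Theorem~\ref{thm:finite} for finiteness, without ever checking the complexity term.

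\textbf{Where the argument fails.} The complexity term is the problem. With $\|w_k\|\le\|W\|_\star$, as you use it, the Rademacher complexity of $\{x\mapsto d(x,w)\}$ is not $\mathcal{O}(R\|W\|_\star/\sqrt n)$. Take the sample $x_1=\dots=x_n=0$, which is admissible for every $R$. Each function is then the constant $\|w\|$, and $\mathbb{E}\sup_{\|w\|\le B}\frac1n\sum_i\sigma_i\|w\|\asymp B/\sqrt n$, which is not $\mathcal{O}(RB/\sqrt n)$ as $R\to0$.

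Your ingredients cannot produce a product $R\cdot\|W\|_\star$:
\begin{itemize}
\item The map $x\mapsto x-w$ is a translation class, not a bilinear one.
\item Contracting a norm of a vector-valued map needs Maurer's vector-contraction inequality, since Talagrand's lemma is scalar. After that step the $x$-part has zero mean, and you get roughly $\sqrt2\,B\sqrt{m/n}$ in ambient dimension $m$, or at best an additive scale $R+B$.
\item A bilinear term $\langle x,w\rangle$ appears only for the squared Euclidean distance. There the $\|w_k\|^2$ terms do not cancel in score differences (only $\|x\|^2$ does), and the relevant margin $\min_i\bigl(d(x_i,w_k)^2-d(x_i,w_{y_i})^2\bigr)$ is not $\gamma(W)$.
\end{itemize}

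\textbf{The invariance remark is wrong, and the bound is false as stated.} Under $(x,W)\mapsto(cx,cW)$, each of $R$, $\|W\|_\star$ and $\gamma(W)$ scales by $c$. So $R\|W\|_\star/\gamma(W)$ scales by $c$, while $\Pr[h_W(x)\neq y]$ is unchanged. With a homogeneous capacity the bound is therefore not merely hard to prove but false.

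Here is a counterexample. Take $K=2$ and Euclidean $d$. Let $x$ be uniform on $\{Re_1,\dots,Re_N\}$ with $N\gg n^2$, so the sample points $x_i=Re_{j_i}$ are distinct with high probability. Let labels be independent of $x$, so every classifier has test error $1/2$. Put $w_1=tv$ and $w_2=-tv$, where $v=n^{-1/2}\sum_i s_ie_{j_i}$, with $s_i=+1$ if $y_i=1$ and $s_i=-1$ otherwise. Then for $k\neq y_i$ we have $d(x_i,w_k)^2-d(x_i,w_{y_i})^2=4tR/\sqrt n$. Hence $\gamma(W)\ge 2tR/\bigl(\sqrt n(R+t)\bigr)$ and $R\max_k\|w_k\|/(\gamma(W)\sqrt n)\le(R+t)/2$, which is arbitrarily small.

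\textbf{What a correct version needs.} The complexity term must be scale-free. For example, $\mathcal{O}\bigl((R+\max_k\|w_k\|)/(\gamma(W)\sqrt n)\bigr)$, with a constant depending on the distance (and, via vector contraction, on $\sqrt m$ and $K$). Equivalently, keep the stated form after normalizing $R=1$ and using a dimensionless $\|W\|_\star\ge1$. Your plan to ``define $\|W\|_\star$ so that the bound holds'' has to be made explicit in exactly this way. It cannot coexist with bounding $\|w_k\|\le\|W\|_\star$ while also normalizing $W$ by scale invariance.
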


\begin{proof}[Proof]
Mirroring the proof for Sec.~G Thm.~2 in \citet{baek2025harmonic}, applying the standard PAC-Bayes margin bounds, one obtains that with at least probability $1-\delta$, 
\[
\Pr_{(x,y)}\big[h_W(x)\neq y\big]
\;=\;
\mathcal{O}\!\left(\frac{R\,\|W\|_\star}{\gamma(W)\sqrt{n}}+\sqrt{\frac{\log(1/\delta)}{n}}\right).
\]

Since $\|W\|_\star$ is finite by \cref{thm:finite}, the bound is finite. 
\end{proof}

\textbf{Intuition:}
To see why the harmonic link achieves a finite minimizer, we examine the behavior of the probability $p_k$ under a global scaling factor $c > 0$. For a $1$-homogeneous distance $d$ (where $d(cx, cw) = c \cdot d(x, w)$) and a harmonic link $\kappa(r) = r^{-\omega}$:

\begin{equation}
p_k(cx_i; cW) = \frac{\kappa\big(d(cx_i, cw_k)\big)}{\sum_{j=1}^K \kappa\big(d(cx_i, cw_j)\big)} = \frac{(c \cdot d(x_i, w_k))^{-\omega}}{\sum_{j=1}^K (c \cdot d(x_i, w_j))^{-\omega}}
\end{equation}

Factoring out the constant $c^{-\omega}$:

\begin{equation}
p_k(cx_i; cW) = \frac{c^{-\omega} \cdot d(x_i, w_k)^{-\omega}}{c^{-\omega} \cdot \sum_{j=1}^K d(x_i, w_j)^{-\omega}} = p_k(x_i; W)
\end{equation}

Since $p_k$ is invariant to the magnitude of the parameters $W$, the loss $\mathcal{L}(W)$ is constant along any radial ray in the parameter space once the prototypes are correctly oriented. Unlike the Gibbs link (Softmax), which requires $\|W\| \to \infty$ to drive the loss to zero, the harmonic gradient $\nabla_W \mathcal{L}$ vanishes at a finite manifold.

\textbf{Theoretical Impact on Generalization:}
In standard PAC-Bayes frameworks, the complexity term is proportional to the weight norm $\|W\|_\star$. In Softmax-based models, if the weights "run" to infinity to minimize the cross-entropy loss, the generalization bound:
\begin{equation}
\epsilon_{gen} \leq \mathcal{O}\left(\frac{R \cdot \|W\|_\star}{\gamma \sqrt{n}}\right)
\end{equation}
becomes vacuous as $\|W\|_\star \to \infty$. Our Theorem \ref{thm:finite} guarantees that $\|W\|_\star$ remains bounded at the optimum, ensuring that the generalization bound remains tight and informative. This provides a formal guarantee that the harmonic geometry acts as an inherent structural regularizer.

\color{black}

\section{Integration into Deep Learning Pipelines}

The \texttt{DistLayer} abstraction highlights that distance-based harmonic loss functions are highly modular and can be seamlessly integrated into existing deep learning pipelines. 
The \texttt{forward} method requires only three operations: i) computing pairwise distances between sample embeddings and class prototype weights, ii) clamping values for numerical stability, and iii) applying a softmin via \texttt{log\_softmax} to obtain normalized class probabilities. 
This makes the substitution of Euclidean distance with alternative metrics essentially a one-line change in the distance registry, with no modifications required in the broader training loop.

Several design choices make the implementation robust. 
First, all distance functions are implemented in a vectorized form, ensuring GPU efficiency and avoiding explicit loops. 
Second, numerical safeguards (e.g., \(\varepsilon\)-offsets, clamping before roots and divisions, regularization of covariance matrices) prevent instability across diverse datasets and architectures. 
Third, the registry-based design allows new distance functions to be added without disrupting the existing workflow, reinforcing the flexibility of harmonic loss as a general framework.

From a methodological perspective, this implementation highlights one of the key contributions of this work: the ease of replacing cross-entropy with distance-based harmonic loss. 
Unlike cross-entropy, which relies on unbounded logit growth, the harmonic formulation treats classification as a problem of minimizing distances to interpretable class prototypes. 
The plug-and-play nature of the \texttt{DistLayer} demonstrates that alternative geometries (e.g., cosine, Mahalanobis, Bray--Curtis) can be explored at negligible engineering cost, paving the way for systematic evaluation of accuracy, sustainability, and interpretability across diverse tasks.

\section{Model Architectures}
\subsection{Vision}
\label{appendix:vision-arch} 
We detail the architectures of the vision models used in our experiments – including a simple MLP, a small CNN, ResNet-50, and PVTv2-B0 – specifying their layers and neuron counts for reproducibility. All models were implemented in PyTorch, and for distance-based variants, %(denoted \texttt{\_DIST} in code)
the final fully-connected layer is replaced by a specialized distance layer as noted below.

\textbf{MLP}: \textbf{Input Layer:} Accepts the flattened image input (e.g., $28\times28=784$ features for MNIST, $32\times32\times3=3072$ for CIFAR).  
\textbf{Hidden Layer 1:} Fully-connected layer with 512 neurons, followed by ReLU.  
\textbf{Hidden Layer 2:} Fully-connected layer with 256 neurons, followed by ReLU.  
\textbf{Output Layer:} Linear mapping from 256 units to the number of classes (10 for MNIST/CIFAR-10, 100 for CIFAR-100). In \texttt{\_DIST} variants, this layer is replaced with a distance-based classification head (e.g. Euclidean, cosine) that computes distances between the embedding and class prototypes, outputting \emph{negative distances} as logits.

\textbf{CNN}: \textbf{Conv Layer 1:} 2D convolution, 32 filters, kernel size $3\times3$, padding 1, followed by ReLU, then $2\times2$ max pooling.  
\textbf{Conv Layer 2:} 2D convolution, 64 filters, kernel size $3\times3$, padding 1, followed by ReLU, then $2\times2$ max pooling.  
\textbf{Fully Connected Layer:} Flattened output fed into a 128-unit linear layer with ReLU.  
\textbf{Output Layer:} Linear layer mapping the 128-D representation to the number of classes. In \texttt{\_DIST} variants, this is replaced by a distance metric layer.

\textbf{ResNet-50}: \textbf{Stem:} Standard $7\times7$ convolution with 64 filters and stride 2, batch norm, ReLU, then $3\times3$ max pooling. For CIFAR/MNIST, we use a $3\times3$ conv with stride 1 and remove max pooling.  
\textbf{Stage 1:} 3 bottleneck blocks, output 256 channels.  
\textbf{Stage 2:} 4 bottleneck blocks, output 512 channels.  
\textbf{Stage 3:} 6 bottleneck blocks, output 1024 channels.  
\textbf{Stage 4:} 3 bottleneck blocks, output 2048 channels.  
\textbf{Global Pooling and Output:} Global average pooling yields a 2048-D vector. In the baseline, a linear FC layer maps to logits. In \texttt{\_DIST} variants, the FC is replaced by a distance layer (e.g. cosine similarity) that outputs similarity-based logits.

\textbf{{Pyramid Vision Transformer (PVTv2-B0)}}: \textbf{Stage 1:} Overlapping patch embedding with a $7\times7$ conv (stride 4), output 32 channels, followed by 2 Transformer encoder layers (1 attention head).  
\textbf{Stage 2:} $3\times3$ conv (stride 2), output 64 channels, followed by 2 encoder layers (2 heads).  
\textbf{Stage 3:} $3\times3$ conv (stride 2), output 160 channels, followed by 2 encoder layers (5 heads).  
\textbf{Stage 4:} $3\times3$ conv (stride 2), output 256 channels, followed by 2 encoder layers (8 heads).  
\textbf{Global Pooling and Output:} Global average pooling yields a 256-D vector. A linear classifier maps to the number of classes in the baseline, while in \texttt{\_DIST} variants this is replaced with a distance layer producing log-similarity or negative distance scores.

\textbf{Preprocessing Pipelines}: \textbf{MNIST:} For MLP/CNN, grayscale input normalized to mean 0.5, std 0.5. For ResNet, normalization uses dataset statistics (mean 0.1307, std 0.3081). For PVT, grayscale converted to 3 channels, resized to 32 (PVT), normalized to mean/std 0.5.  
\textbf{CIFAR-10:} Normalization with mean (0.4914, 0.4822, 0.4465) and std (0.2023, 0.1994, 0.2010). ResNet uses data augmentation (random flips, crops, small rotations).
\textbf{CIFAR-100:} Normalization with mean (0.5071, 0.4867, 0.4408) and std (0.2675, 0.2565, 0.2761). Stronger augmentation (random flips, crops, rotations, color jitter). PVT models use $32\times32$ resized inputs with normalization.

\subsection{LLMs}
\label{appendix:llm-arch}
This section documents the \textbf{LLM} configurations used in our experiments for reproducibility. We report data preprocessing, architectural details for \textbf{GPT}, \textbf{BERT}, and \textbf{Qwen2}-style models, how \textbf{distance-based heads} are integrated in place of the standard linear classifier, and the training/evaluation/emissions-logging pipeline. All models are implemented in PyTorch and trained with mixed precision when available.

\textbf{{Data and Preprocessing}}
\textbf{{Corpus and Storage.}}
We pre-process a text corpus into contiguous token ID arrays and store them as memory-mapped files:
\begin{itemize}
\item \texttt{train.bin} and \texttt{val.bin}: \texttt{np.memmap} arrays of type \texttt{uint16} containing token IDs.
\item \texttt{meta.pkl}: contains metadata including \texttt{vocab\_size} (used to configure model embeddings).
\end{itemize}
Let $V$ denote the discovered vocabulary size from \texttt{meta.pkl} (fallback $V{=}50304$ if not found).

\textbf{{Batching.}}
For a given \texttt{block\_size} $L$, batches are sampled by picking random starting indices and slicing $L$ tokens:
\begin{center}
\texttt{X = data[i : i+L], \quad Y = data[i+1 : i+1+L]} \quad (causal LM)
\end{center}
All batching is performed on-device with pinned memory. We denote \texttt{batch\_size} by $B$.

\textbf{Masking for MLM (BERT).}
For BERT runs, we construct masked language modeling (MLM) batches with the standard 15\% corruption:
\begin{itemize}
  \item Select $\approx 15\%$ token positions per sequence to form mask indices $\mathcal{M}$.
  \item For each $i\in\mathcal{M}$: with 80\% probability replace $x_i$ with \texttt{[MASK]} (id $\le 103$ or capped by $V{-}1$), with 10\% replace by a random token in $[0,V)$, with 10\% keep $x_i$ unchanged.
  \item Labels use the original token at masked positions and $-100$ (ignore index) elsewhere.
\end{itemize}
This yields \texttt{input\_ids}, \texttt{attention\_mask} (all-ones here), and \texttt{labels} containing ground-truth only at masked positions.

\textbf{Architectures}
Across models below, the principal hyperparameters are: layers  ($n_{\ell}$), heads ($n_{h}$), embedding dim ($d$), context length ($L{=}\texttt{block\_size}$), vocab size ($V$).
Unless otherwise specified, positional encodings follow each model's default (e.g., learned or rotary).

\subsubsection*{GPT2 (Causal LM)}
\textbf{Backbone.} A standard decoder-only Transformer with $n_{\ell}$ blocks. Each block has:
\begin{itemize}
  \item Multi-Head Causal Self-Attention with $n_{h}$ heads, hidden size $d$, and causal mask.
  \item Position-wise MLP of width typically $\approx 4d$ with nonlinearity (e.g., GELU).
  \item Pre/post LayerNorm and residual connections as in GPT-style decoders.
\end{itemize}
\textbf{Token Embeddings.} Learnable token and (implicit) position embeddings of sizes $V\times d$ and $L\times d$ (or rotary embeddings if enabled).
\textbf{Projection Head (baseline).} A linear layer $W_{\text{lm}}\in\mathbb{R}^{d\times V}$ producing logits over the vocabulary at each position.
\textbf{Distance Head (\_DIST).} The linear projection is replaced by a \emph{distance-based layer} that treats the vocabulary columns as \emph{prototypes} $\{w_{v}\in\mathbb{R}^{d}\}_{v=1}^{V}$. Given a hidden state $h_t\in\mathbb{R}^{d}$, the head returns per-token logits $z_{t,v}=-D(h_t,w_v;\Theta)$ (or $\log S(h_t,w_v)$ for similarity-type layers), where $D(\cdot,\cdot;\Theta)$ is one of the distances defined in the main text (Euclidean, cosine, Manhattan, Minkowski, Canberra, Bray--Curtis, Chebyshev, Mahalanobis, Hamming). This integrates seamlessly with the causal LM objective (next-token prediction via softmax over $V$).

\subsubsection*{BERT (Masked LM)}
\textbf{Backbone.} An encoder-only Transformer with $n_{\ell}$ layers, each with:
\begin{itemize}
  \item Multi-Head Self-Attention (bidirectional) with $n_{h}$ heads.
  \item Position-wise MLP, LayerNorm, residual connections.
\end{itemize}
\textbf{Embeddings.} Token embeddings $V\times d$, segment/type embeddings (size 2), and positional embeddings of length $L$.
\textbf{Head (baseline).} The standard MLM classifier projects $d\!\to\!V$ (optionally via an intermediate nonlinearity tied to the embedding matrix).
\textbf{Distance Head (\_DIST).} We replace the MLM classifier with the same prototype-based distance layer used for GPT, but applied \emph{only at masked positions}. For each masked token representation $h_i$, logits are $z_{i,v}=-D(h_i,w_v;\Theta)$ (or log-similarity), and cross-entropy is computed against the ground-truth token at $i$.

\subsubsection*{Qwen2-style Decoder (Causal LM)}
\textbf{Backbone.} A decoder-only Transformer similar to GPT, with model-specific details:
\begin{itemize}
  \item Rotary Position Embeddings (RoPE) with $\theta$ (e.g., $\theta=10^6$).
  \item RMSNorm with $\epsilon$ (e.g., $10^{-6}$) in place of LayerNorm.
  \item Grouped key/value heads: \texttt{num\_key\_value\_heads} may be $< n_h$.
  \item Intermediate MLP width (\texttt{intermediate\_size}) configurable.
\end{itemize}
\textbf{Vocabulary.} By default, we use Qwen's native vocabulary \\(\texttt{vocab\_size}=151{,}936); alternatively, one can adapt to the dataset vocab.
\textbf{Head (baseline vs. \_DIST).} As with GPT, the final projection is either a linear layer to $V$ or a distance-based head over $V$ prototype vectors.

\subsection*{Distance-Based Output Layer}
For all three families (GPT, BERT/MLM, Qwen2), the baseline $d\!\to\!V$ classifier is replaced in \texttt{\_DIST} runs by a distance head:
\[
\scriptsize
z_v(h)\;=\;\begin{cases}
-\,\|h-w_v\|_2 & \text{(Euclidean)}\\
-\,\|h-w_v\|_1 & \text{(Manhattan)}\\
-\,\|h-w_v\|_p & \text{(Minkowski, $p$ specified)}\\
-\,\big(1-\tfrac{h^\top w_v}{\|h\|_2\|w_v\|_2}\big) & \text{(Cosine)}\\
-\,D_{\text{Canberra}}(h,w_v) \;\;\text{or}\;\; -\,D_{\text{Bray--Curtis}}(h,w_v) & \text{(variants as defined)}\\
-\,\|h-w_v\|_\infty & \text{(Chebyshev)}\\
-\,\sqrt{(h-w_v)^\top \Sigma^{-1}(h-w_v)} & \text{(Mahalanobis, variants)}\\
-\,D_{\text{Hamming}}(h,w_v) & \text{(soft/Gumbel/hard)}
\end{cases}
\]
where $w_v$ are learned prototype vectors (analogous to classifier weights). We adopt the numerically robust implementations given in the main text (e.g., small $\varepsilon$, clamping, optional normalization of $h$ and/or $w_v$ where appropriate). For cosine, we may output $\log$-similarities for stability. Loss is standard cross-entropy over the $V$ logits per position (causal) or per masked position (MLM).

\subsection*{Training Setup and Optimization}
\paragraph{Device and Precision.}
We use \texttt{bfloat16/float16/float32} (configurable) with automatic mixed precision:
\[
\texttt{torch.autocast(device\_type='cuda', dtype=ptdtype)}.
\]
Training can run in single-GPU or \textbf{DDP} (\texttt{torch.distributed}) multi-GPU mode. In DDP, \texttt{LOCAL\_RANK} selects the device, and gradients are synchronized across ranks.

\textbf{Initialization and Checkpointing.}
Models are initialized \textit{from scratch} using the specified architecture config (layers, heads, width, $L$, $V$). For GPT-only runs we optionally support \texttt{init\_from='gpt2*'}, and for BERT we support \texttt{init\_from='bert*'} (when provided), with appropriate overrides. Checkpoints store model/optimizer state, \texttt{iter\_num}, \texttt{best\_val\_loss}, and the configuration.

\textbf{Optimizer and LR Schedule.}
We use the model’s \\ \texttt{configure\_optimizers} helper to instantiate an Adam/AdamW-style optimizer with weight decay and $(\beta_1,\beta_2)$. Learning rate follows cosine decay with warmup:
\[
\scriptstyle
\text{lr}(t)=
\begin{cases}
\text{lr}_{\max}\cdot t/\text{warmup} & t<\text{warmup},\\
\text{lr}_{\min} + \tfrac{1}{2}\!\left(1+\cos\!\frac{\pi(t-\text{warmup})}{T-\text{warmup}}\right)\!\big(\text{lr}_{\max}-\text{lr}_{\min}\big) & t\le T,
\end{cases}
\]
where $T$ is \texttt{lr\_decay\_iters}. We apply gradient accumulation \\(\texttt{gradient\_accumulation\_steps}), optional gradient clipping \\(\texttt{grad\_clip}), and AMP scaling (\texttt{GradScaler}).

\textbf{Objectives.}
\begin{itemize}
  \item \textbf{GPT/Qwen2 (causal LM):} next-token cross-entropy over $V$ at each position.
  \item \textbf{BERT (MLM):} cross-entropy computed only at masked positions; non-masked labels set to $-100$ (ignored).
\end{itemize}
Accuracy reporting: we compute token-level accuracy for monitoring (on next-token for causal LM, on masked tokens for MLM).

\textbf{Evaluation and Early Signals.}
At fixed \texttt{eval\_interval}, we run \texttt{estimate\_loss()} over \texttt{eval\_iters} batches on train/val splits (model in \texttt{eval()}), then resume training. Best validation loss checkpoints are saved; optional compile (\texttt{torch.compile}) can be enabled.

\subsection*{Sustainability Tracking}
We integrate \textit{CodeCarbon} to measure energy and emissions. At each evaluation interval:
\begin{enumerate}
  \item Stop the tracker and record interval-level metrics: emissions (kg CO$_2$), duration, estimated CPU/GPU/RAM power and energy.
  \item Log cumulative emissions and training metrics (loss, lr) to W\&B (if enabled).
  \item Restart the tracker for the next interval to avoid long-running file locks and to attribute emissions to training phases cleanly.
\end{enumerate}
At the end of training, we stop the tracker one final time and persist all accumulated records to a CSV (\texttt{emissions\_*.csv}) alongside model checkpoints.

\textbf{Key Configurations (Reproducibility)}
The following knobs are saved in run configs/checkpoints and should be reported alongside results:
% \[
\begin{equation*}
\scriptsize
  \begin{aligned}
(n_{\ell},\,n_h,\,d,\,L,\,V),\quad \text{distance head type and parameters }(\Theta),\quad
\text{batch size }B,\;\text{precision},\; \\
\text{optimizer \& betas},\;
\text{lr schedule }(\text{warmup},\,T,\,\text{lr}_{\max},\,\text{lr}_{\min}),\;
\text{grad accumulation},\;\\ 
\text{grad clip},\;\text{DDP world size}.
\end{aligned}
\end{equation*}

When using \texttt{\_DIST} variants, we additionally report which distance (Euclidean, cosine, Manhattan, Minkowski($p$), Canberra, Bray--Curtis, Chebyshev, Mahalanobis, Hamming), any normalization/scaling flags, and regularization choices (e.g., Mahalanobis covariance learning/regularization).

In all models, the sole architectural change introduced by harmonic loss is confined to the \textbf{output head}: a drop-in replacement of the linear classifier with a \textbf{distance-based prototype head} over the vocabulary. This isolates the effect of the loss geometry while keeping the Transformer backbone (and training recipe) unchanged, enabling controlled comparisons across distances in terms of \emph{accuracy}, \emph{interpretability} (e.g., PCA-based analyses), and \emph{sustainability} (emissions and runtime).

\section{Hyperparameter Configurations}
\label{appendix:hyperparams}
The hyperparameter settings in Tables~\ref{tab:llm-core}--\ref{tab:dist-shared} were chosen to balance \emph{comparability}, \emph{training stability}, and \emph{sustainability}. Below we highlight several important considerations.

\subsection{Language Models (OpenWebText)}
Table~\ref{tab:llm-core} specifies the core training parameters for GPT, BERT, and Qwen on OpenWebText. The main goal was to maintain a fair comparison across models of varying scale by using effective batch sizes of similar order (76--128). This ensures that any differences observed in performance or emissions are attributable to the \emph{loss formulation}, not simply to batch scaling. The use of AdamW with default $\beta$ values (0.9, 0.999) follows current best practices for stability.  

Table~\ref{tab:llm-arch} details architecture-specific modifications. BERT includes type embeddings and a masked language modeling (MLM) setup, while GPT and Qwen use causal language modeling (CLM). Qwen, being substantially larger, incorporates more advanced design elements such as grouped query attention (GQA) and rotary position embeddings (RoPE). Table~\ref{tab:llm-summary} summarizes these differences: GPT and Qwen follow causal objectives, while BERT relies on bidirectional context, which may affect the degree to which distance-based losses interact with their representations.  

\begin{table*}[h]
\centering
\caption{Core configuration for GPT, BERT, Qwen, and GPT-2B on OpenWebText.}
\label{tab:llm-core}
\begin{tabular}{lcccc}
\toprule
\textbf{Configuration} & \textbf{GPT} & \textbf{BERT} & \textbf{Qwen} & \textbf{GPT-2B} \\
\midrule
$n_{\text{layer}}$ & 12 & 12 & 24 & 48 \\
$n_{\text{head}}$ & 12 & 12 & 14 & 20 \\
$n_{\text{embd}}$ & 768 & 768 & 896 & 1600 \\
Vocab size & 50304 & 50304 & 151936 & 50304 \\
Dropout & 0.1 & 0.1 & 0.0 & 0.1 \\
Bias & True & True & True & True \\
Batch size & 16 & 38 & 6 & 3 \\
Grad. accum. steps & 8 & 2 & 10 & 21 \\
Effective batch size & 128 & 76 & 60 & 63 \\
Learning rate & 2e-4 & 1e-4 & 1e-4 & 1e-4 \\
Warmup iters & 500 & 1000 & 1000 & 1000 \\
Weight decay & 0.01 & 0.01 & 0.01 & 0.01 \\
Grad clip & 1.0 & 1.0 & 1.0 & 1.0 \\
Min LR & 2e-6 & 1e-6 & 1e-6 & 1e-6 \\
Decay LR & True & True & True & True \\
LR decay iters & 10000 & 10000 & 10000 & 10000 \\
Max iters & 10000 & 10000 & 10000 & 10000 \\
Dataset & OpenWebText & OpenWebText & OpenWebText & OpenWebText \\
dtype & bfloat16 & bfloat16 & bfloat16 & bfloat16 \\
Optimizer & AdamW & AdamW & AdamW & AdamW \\
$\beta_1,\beta_2$ & 0.9, 0.999 & 0.9, 0.999 & 0.9, 0.999 & 0.9, 0.999 \\
Eval interval & 1000 & 1000 & 1000 & 500 \\
Eval iters & 100 & 100 & 100 & 25 \\
Log interval & 50 & 50 & 50 & 25 \\
Scale attn by inverse layer idx & False & False & False & False \\
\bottomrule
\end{tabular}
\end{table*}

\begin{table}[h]
\centering
\caption{Architecture-specific settings for GPT, BERT, Qwen, and GPT-2B.}
\label{tab:llm-arch}
\begin{tabular}{lcccc}
\toprule
\textbf{Configuration} & \textbf{GPT} & \textbf{BERT} & \textbf{Qwen} & \textbf{GPT-2B} \\
\midrule
Block size / Seq length & 1024 & 512 & 1024 & 512 \\
Type vocab size & -- & 2 & -- & -- \\
Pad token id & -- & 0 & -- & -- \\
MLM probability & -- & 0.15 & -- & -- \\
Intermediate size & -- & -- & 4864 & -- \\
\# key–value heads & -- & -- & 2 & -- \\
RMSNorm $\epsilon$ & -- & -- & 1e-6 & -- \\
RoPE $\theta$ & -- & -- & 1{,}000{,}000.0 & -- \\
\bottomrule
\end{tabular}
\end{table}

\begin{table}[h]
\setlength{\tabcolsep}{2pt}
\centering
\caption{Key differences summary (task and position encoding).}
\label{tab:llm-summary}
\begin{tabular}{lcccc}
\toprule
\textbf{Aspect} & \textbf{GPT} & \textbf{BERT} & \textbf{Qwen} & \textbf{GPT-2B} \\
\midrule
Model size (approx.) & $\sim$124M & $\sim$110M & $\sim$494M & $\sim$2B \\
Attention & Causal & Bidirectional & Causal (GQA) & Causal \\
Training task & CLM & MLM & CLM & CLM \\
Position encoding & Learned & Learned & RoPE & Learned \\
\bottomrule
\end{tabular}
\vspace{0.3em}
\footnotesize CLM = Causal Language Modeling;\quad MLM = Masked Language Modeling;\quad GQA = Grouped Query Attention.
\end{table}

\subsection{Vision Models}
Tables~\ref{tab:vision-core}--\ref{tab:vision-dataset} provide the vision settings across datasets. As shown in Table~\ref{tab:vision-core}, optimizer and learning-rate schedules are backbone-specific: Adam for MLPs and CNNs, AdamW for transformers (PVT), and SGD with momentum for ResNet50. This reflects both convention and empirical stability in preliminary experiments.  %Longer schedules are used for CIFAR-100 due to its greater difficulty, with patience for early stopping (Table~\ref{tab:vision-parameters}) scaled accordingly. 
Batch size selection (Table~\ref{tab:vision-bsz}) reflects hardware utilization on H100 GPUs. Notably, lightweight backbones (e.g., CNNs) leverage very large batches (up to 8192 for MNIST), while transformer-based PVT is limited to much smaller batches (128--256) to fit memory constraints. These design choices affect emissions profiles: large-batch training can reduce wall-clock time but at the cost of GPU memory overhead. %
Learning-rate schedulers %(Table~\ref{tab:vision-sched}) 
differ across models. For example, PVT employs cosine annealing, which smooths convergence and interacts well with distance-based loss formulations. ResNet50 relies on multi-step decay, ensuring stability across the long 200-epoch training horizon on CIFAR-100.

\textbf{Distance Layer Parameters.}  
Table~\ref{tab:dist-shared} summarizes the shared hyperparameters across all distance functions. The exponent $n$ is fixed to 1.0 and $\varepsilon=10^{-4}$ provides numerical stability. Importantly, distances are not scaled post hoc, ensuring that differences in results are directly attributable to the geometric properties of the chosen distance (Euclidean, Manhattan, Mahalanobis, etc.), rather than to auxiliary tuning.

\begin{table}[h]
\centering
\caption{Core training configuration by backbone and dataset.}
\begin{tabular}{lcccc}
\toprule
\textbf{Configuration} & \textbf{MLP} & \textbf{CNN} & \textbf{PVT} & \textbf{ResNet50} \\
\midrule
LR (MNIST) & 3e-4 & 3e-4 & 1e-3 & 0.1 \\
LR (CIFAR-10) & 3e-4 & 3e-4 & 1e-3 & 0.1 \\
LR (CIFAR-100) & 3e-4 & 3e-4 & 5e-4 & 0.1 \\
LR (MarathiSign) & 1.5e-4 & 1.5e-4 & 5e-4 & 0.05 \\
LR (TinyImageNet) & 3e-4 & 3e-4 & 5e-4 & 0.1 \\
Epochs (MNIST) & 40 & 40 & 80 & 100 \\
Epochs (CIFAR-10) & 40 & 40 & 80 & 100 \\
Epochs (CIFAR-100) & 150 & 150 & 150 & 200 \\
Epochs (MarathiSign) & 50 & 50 & 100 & 75 \\
Epochs (TinyImageNet) & 100 & 100 & 200 & 150 \\
Optimizer & Adam & Adam & AdamW & SGD \\
Weight decay & 0 & 0 & 0.01 & 1e-4 \\
Momentum & -- & -- & -- & 0.9 \\
\bottomrule
\end{tabular}
\label{tab:vision-core}
\end{table}

\begin{table}[h]
\setlength{\tabcolsep}{1pt}
\small
\centering
\caption{Batch size configuration on H100 GPU.}
\begin{tabular}{lcccccc}
\toprule
\textbf{Model} & \textbf{MNIST} & \textbf{CIFAR-10} & \textbf{CIFAR-100} & \textbf{MarathiSign} & \textbf{TinyImageNet} \\
\midrule
MLP & 2048 & 1024 & 1024 & 128 & 256 \\
CNN & 8192 & 4096 & 512 & 512 & 512 \\
PVT & 256 & 512 & 256 & 64 & 128 \\
ResNet50 & 512 & 512 & 256 & 128 & 256 \\
\bottomrule
\end{tabular}
\label{tab:vision-bsz}
\end{table}

\begin{table*}[h]
\small
\centering
\caption{Learning-rate schedulers by backbone and dataset.}
\begin{tabular}{lccccc}
\toprule
\textbf{Model} & \textbf{MNIST} & \textbf{CIFAR-10} & \textbf{CIFAR-100} & \textbf{MarathiSign} & \textbf{TinyImageNet} \\
\midrule
MLP & None & None & StepLR (50, 0.5) & ReduceLR* & StepLR (50, 0.5) \\
CNN & None & None & StepLR (50, 0.5) & ReduceLR* & StepLR (50, 0.5) \\
PVT & CosineAnn. (80) & CosineAnn. (80) & CosineAnn. (150) & CosineAnn. (100) & CosineAnn. (200) \\
ResNet50 & StepLR (30, 0.1) & StepLR (30, 0.1) & MultiStep** & StepLR (25, 0.1) & MultiStep*** \\
\bottomrule
\multicolumn{6}{l}{\footnotesize *ReduceLROnPlateau (mode=max, factor=0.5, patience=5, min\_lr=1e-6)} \\
\multicolumn{6}{l}{\footnotesize **MultiStepLR (milestones=[60,100,140], $\gamma$=0.2)} \\
\multicolumn{6}{l}{\footnotesize ***MultiStepLR (milestones=[80,120], $\gamma$=0.2)} \\
\end{tabular}
\label{tab:vision-schedulers}
\end{table*}

\begin{table*}[h]
\small
\centering
\caption{Dataset metadata and early-stopping settings (vision).}
\label{tab:vision-dataset}
\begin{tabular}{lccccc}
\toprule
\textbf{Parameter} & \textbf{MNIST} & \textbf{CIFAR-10} & \textbf{CIFAR-100} & \textbf{MarathiSign} & \textbf{TinyImageNet} \\
\midrule
Num classes & 10 & 10 & 100 & 43 & 200 \\
Early stopping patience & 15 & 15 & 25 & 10 & 15 \\
Min improvement (\%) & 0.01 & 0.01 & 0.01 & 0.01 & 0.01 \\
Native image size & $28\!\times\!28\!\times\!1$ & $32\!\times\!32\!\times\!3$ & $32\!\times\!32\!\times\!3$ & varies & $64\!\times\!64\!\times\!3$ \\
Processed size (MLP/CNN/PVT) & $28\!\times\!28\!\times\!1$ & $32\!\times\!32\!\times\!3$ & $32\!\times\!32\!\times\!3$ & $32\!\times\!32\!\times\!3$ & $224\!\times\!224\!\times\!3$ \\
Processed size (ResNet50) & $28\!\times\!28\!\times\!1$ & $32\!\times\!32\!\times\!3$ & $32\!\times\!32\!\times\!3$ & $224\!\times\!224\!\times\!3$ & $224\!\times\!224\!\times\!3$ \\
\bottomrule
\end{tabular}
\end{table*}

\begin{table}[h]
\centering
\caption{Distance-layer shared parameters (all backbones).}
\label{tab:dist-shared}
\begin{tabular}{lc}
\toprule
\textbf{Parameter} & \textbf{Value} \\
\midrule
$n$ & 1.0 \\
$\varepsilon$ & 1e-4 \\
Scale distances & False \\
\bottomrule
\end{tabular}
\end{table}

\subsection{Discussion}
\textbf{Language models.} GPT and BERT use comparable depth/width with learned positional encodings, while Qwen is larger, adopts RoPE, and GQA. Effective batch sizes (via gradient accumulation) normalize throughput across models for fair comparison on OpenWebText. 

\textbf{Vision models.} Optimizer and scheduler choices follow common practice: Adam/AdamW for MLP/CNN/PVT, SGD with momentum for ResNet50; deeper/longer CIFAR-100 runs employ stepped or cosine schedules. Early-stopping patience scales with dataset difficulty. 

\textbf{DistLayer defaults.} A unified setting ($n{=}1.0$, $\varepsilon{=}10^{-4}$, no scaling) ensures distance variants differ only in geometry, not in auxiliary hyperparameters. These settings match the configuration used in our main experiments and figures.

\newpage

{
\section{Statistical significance against Euclidean harmonic loss.}

\subsection{Wilcoxon Signed-rank Tests}
To quantify whether non--Euclidean harmonic losses differ systematically from
the Euclidean reference, we ran paired Wilcoxon signed--rank tests over all
dataset--backbone combinations ($N{=}16$ pairs per distance).  The resulting
median score improvements and $p$--values are reported in
Tables~\ref{tab:wilcoxon-accuracy}--\ref{tab:wilcoxon-emissions}.

On \emph{model performance}
(Table~\ref{tab:wilcoxon-accuracy}), the non--Euclidean distances
do not achieve a statistically significant \emph{positive} median improvement over the
Euclidean harmonic loss.  Several metrics (e.g., Mahalanobis~(Std.),
Bray--Curtis~(Std.), Canberra variants, Manhattan, Minkowski, Hamming) show
significant \emph{negative} medians ($p<0.05$), indicating that when a
difference is present it tends to favor the Euclidean reference in raw
accuracy.  This is consistent with our main results, where non--Euclidean
geometries target interpretability and sustainability rather than headline
accuracy gains.

For \emph{interpretability}
(Table~\ref{tab:wilcoxon-id}), we observe the opposite pattern.  Distances such
as Bray--Curtis (Norm.), Canberra (Robust/Std.), Chebyshev (Std.), Manhattan,
and both cosine variants exhibit statistically significant shifts in the number
of principal components needed to explain $90\%$ of the variance.  The median
differences are large in magnitude (e.g., $+12.8$ for Bray--Curtis~(Norm.),
$+9.8$ for Canberra~(Robust)), confirming that switching away from Euclidean
induces a consistent and substantial change in representation geometry across
datasets and backbones.

For \emph{sustainability} (Table~\ref{tab:wilcoxon-emissions}), four distances
reach $p<0.05$: Mahalanobis~(Std.) and Bray--Curtis~(Std.) with positive
medians, and Canberra~(Weighted) and Mahalanobis~(Chol.) with negative medians.
This suggests that the carbon footprint differences between Euclidean and most
non--Euclidean harmonic losses are modest and model--dependent: some geometries
slightly increase emissions, others slightly decrease them, but strong
systematic effects are rare once we fix backbone, data, and training budget.

Overall, these nonparametric tests support our main claims: non--Euclidean
harmonic losses do not uniformly dominate Euclidean in accuracy, but several of
them induce statistically significant changes in representation structure, with
only mild and mixed effects on emissions.

\begin{table*}[t]
\centering
\small
\caption{Wilcoxon signed--rank test comparing each non--Euclidean harmonic loss
against the Euclidean harmonic baseline on \emph{average final test accuracy}.
Median score improvement is the median paired difference (non--Euclidean minus Euclidean)
across $N$ dataset--backbone combinations. Positive values indicate that the
non--Euclidean distance attains higher accuracy; the last column marks tests
with $p<0.05$.}
\label{tab:wilcoxon-accuracy}
\begin{tabular}{lrrrrc}
\toprule
\textbf{Comparison} & \textbf{N Pairs} & \textbf{Median Impr.} & \textbf{$p$-value} & \textbf{$p_{\text{adj}}$} & \textbf{Sig.\ ($p<0.05$)} \\
\midrule
Bray--Curtis (Norm.)   & 16 &   0.5317  & 0.17060 & 1.0000 &  \\
Cosine (Stable)        & 16 &   0.1598  & 0.45339 & 1.0000 &  \\
Cosine (Unst.)         & 16 &   0.0965  & 0.55208 & 1.0000 &  \\
Mahalanobis (Chol.)    & 16 &   0.0400  & 0.58717 & 1.0000 &  \\
Bray--Curtis (Abs.)    & 16 &  -1.1967  & 0.10335 & 1.0000 &  \\
Mahalanobis (Diag.)    & 16 &  -1.4633  & 0.08323 & 1.0000 &  \\
Chebyshev (Std.)       & 16 &  -1.7232  & $<0.001$ & 0.0736 & Yes \\
Minkowski ($p{=}3.0$)  & 16 &  -1.8183  & 0.00567 & 0.6234 & Yes \\
Canberra (Weighted)    & 16 &  -2.8167  & 0.00295 & 0.3565 & Yes \\
Manhattan              & 16 &  -4.9467  & 0.00249 & 0.3083 & Yes \\
Minkowski ($p{=}1.5$)  & 16 &  -5.8117  & 0.00176 & 0.2232 & Yes \\
Hamming (Soft)         & 16 &  -6.3183  & 0.00348 & 0.4110 & Yes \\
Canberra (Robust)      & 16 & -16.5883  & $<0.001$ & 0.0736 & Yes \\
Canberra (Std.)        & 16 & -18.3767  & $<0.001$ & 0.0736 & Yes \\
Chebyshev (Smooth)     & 16 & -21.3817  & $<0.001$ & 0.0736 & Yes \\
Bray--Curtis (Std.)    & 16 & -34.4379  & $<0.001$ & 0.0975 & Yes \\
Mahalanobis (Std.)     & 16 & -64.8082  & $<0.001$ & 0.0736 & Yes \\
\bottomrule
\end{tabular}
\end{table*}

\begin{table}[t]
\setlength{\tabcolsep}{1pt}
\centering
\small
\caption{Wilcoxon signed--rank test comparing each non--Euclidean harmonic loss
against the Euclidean harmonic baseline on \emph{average intrinsic dimension}
(number of PCs required to reach $90\%$ EV). Median score improvement is again
the median paired difference (non--Euclidean minus Euclidean); here more negative
values correspond to fewer required components.}
\label{tab:wilcoxon-id}
\begin{tabular}{lrrrrc}
\toprule
\textbf{Comparison} & \textbf{N Pairs} & \textbf{Median Impr.} & \textbf{$p$-value} & \textbf{$p_{\text{adj}}$} & \textbf{Sig.\ ($p<0.05$)} \\
\midrule
Bray--Curtis (Norm.)   & 16 &  12.8333  & $<0.001$ & 0.1142 & Yes \\
Canberra (Robust)      & 16 &   9.8333  & 0.00162  & 0.2086 & Yes \\
Canberra (Std.)        & 16 &   8.1667  & $<0.001$ & 0.0988 & Yes \\
Chebyshev (Std.)       & 16 &   7.7083  & 0.01864  & 1.0000 & Yes \\
Manhattan              & 16 &   7.6667  & 0.00412  & 0.4764 & Yes \\
Cosine (Unst.)         & 16 &   5.8333  & 0.01043  & 1.0000 & Yes \\
Canberra (Weighted)    & 16 &   5.2083  & 0.26768  & 1.0000 &  \\
Cosine (Stable)        & 16 &   4.3333  & 0.01127  & 1.0000 & Yes \\
Mahalanobis (Std.)     & 16 &   4.0417  & 0.12716  & 1.0000 &  \\
Bray--Curtis (Std.)    & 16 &   2.1667  & 0.34869  & 1.0000 &  \\
Bray--Curtis (Abs.)    & 16 &   1.0000  & 0.66019  & 1.0000 &  \\
Hamming (Soft)         & 16 &   0.7083  & 0.77730  & 1.0000 &  \\
Minkowski ($p{=}1.5$)  & 16 &  -0.0000  & 0.80665  & 1.0000 &  \\
Minkowski ($p{=}3.0$)  & 16 &  -0.0000  & 0.75554  & 1.0000 &  \\
Mahalanobis (Diag.)    & 16 &  -0.0000  & 0.30656  & 1.0000 &  \\
Chebyshev (Smooth)     & 16 &  -0.2083  & 0.77638  & 1.0000 &  \\
Mahalanobis (Chol.)    & 16 &  -0.3333  & 0.85062  & 1.0000 &  \\
\bottomrule
\end{tabular}
\end{table}

\begin{table}[t]
\setlength{\tabcolsep}{2pt}
\centering
\small
\caption{Wilcoxon signed--rank test comparing each non--Euclidean harmonic loss
against the Euclidean harmonic baseline on \emph{average emissions} (gCO$_2$eq).
Median score improvement is the median paired difference (non--Euclidean minus
Euclidean); negative values indicate lower emissions than the Euclidean
reference.}
\label{tab:wilcoxon-emissions}
\begin{tabular}{lrrrrc}
\toprule
\textbf{Comparison} & \textbf{N Pairs} & \textbf{Median Impr.} & \textbf{$p$-value} & \textbf{$p_{\text{adj}}$} & \textbf{Sig.\ ($p<0.05$)} \\
\midrule
Mahalanobis (Std.)     & 16 &  1.8037  & $<0.001$ & 0.114  & Yes \\
Bray--Curtis (Std.)    & 16 &  1.1880  & 0.01620  & 1.000  & Yes \\
Cosine (Unst.)         & 16 &  0.2341  & 0.05249  & 1.000  &  \\
Canberra (Std.)        & 16 &  0.0351  & 0.77611  & 1.000  &  \\
Chebyshev (Smooth)     & 16 &  0.0208  & 0.73679  & 1.000  &  \\
Canberra (Robust)      & 16 & -0.0206  & 0.73679  & 1.000  &  \\
Cosine (Stable)        & 16 & -0.0239  & 0.97937  & 1.000  &  \\
Minkowski ($p{=}1.5$)  & 16 & -0.0492  & 0.36552  & 1.000  &  \\
Hamming (Soft)         & 16 & -0.0909  & 0.26625  & 1.000  &  \\
Bray--Curtis (Abs.)    & 16 & -0.1066  & 0.20520  & 1.000  &  \\
Chebyshev (Std.)       & 16 & -0.1421  & 0.14056  & 1.000  &  \\
Bray--Curtis (Norm.)   & 16 & -0.1625  & 0.11477  & 1.000  &  \\
Manhattan              & 16 & -0.1976  & 0.12716  & 1.000  &  \\
Minkowski ($p{=}3.0$)  & 16 & -0.4158  & 0.05249  & 1.000  &  \\
Mahalanobis (Diag.)    & 16 & -0.4587  & 0.05249  & 1.000  &  \\
Canberra (Weighted)    & 16 & -0.7687  & 0.03188  & 1.000  & Yes \\
Mahalanobis (Chol.)    & 16 & -0.9431  & 0.00411  & 0.476  & Yes \\
\bottomrule
\end{tabular}
\end{table}

\subsection{Accuracy with Confidence Intervals.}
To complement the aggregate tables and Wilcoxon tests, 
Figure~\ref{fig:acc:final:curves:CI} reports accuracy curves for the
top-performing losses on each dataset/backbone pair.
For every setting we re-train each candidate with three random seeds and
plot the mean trajectory together with a shaded $95\%$ confidence interval
($n{=}3$).
%This directly addresses the concern that our conclusions might rely on
%single-seed variance.

\begin{figure*}[h!]
    \begin{centering}
    \includegraphics[page=2,width=0.3\textwidth,trim=0 20 7 0,clip]{figures/Image_Classification/vision_top_final_performers_with_CI_MLP_CNN_ResNet_PVT_copy.pdf}
    \includegraphics[page=6,width=0.3\textwidth,trim=0 20 7 0,clip]{figures/Image_Classification/vision_top_final_performers_with_CI_MLP_CNN_ResNet_PVT_copy.pdf}
    \includegraphics[page=10,width=0.3\textwidth,trim=0 20 7 0,clip]{figures/Image_Classification/vision_top_final_performers_with_CI_MLP_CNN_ResNet_PVT_copy.pdf}
        % \caption{Curve Plots: MLP}
        \label{fig:acc:final:mlp}
    \end{centering}\\
    
%\end{figure}
%\begin{figure}[h!]
    \begin{centering}
    \includegraphics[page=1,width=0.3\textwidth,trim=0 20 7 0,clip]{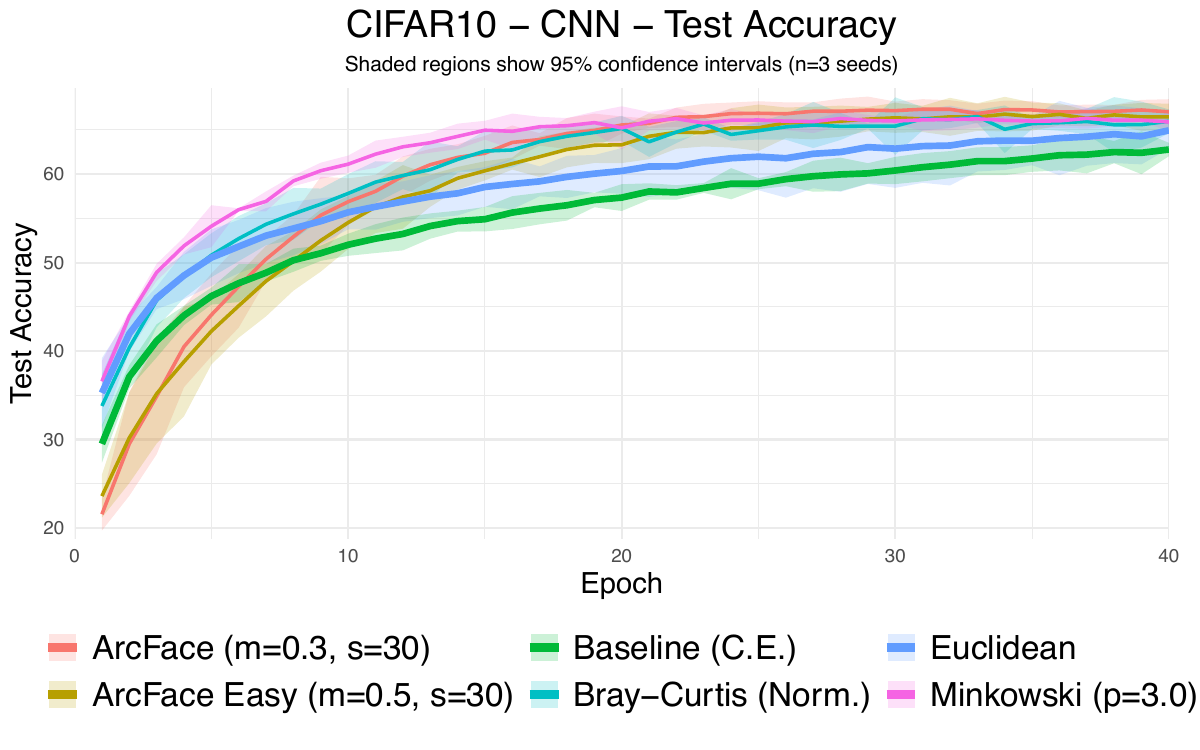}
    \includegraphics[page=5,width=0.3\textwidth,trim=0 20 7 0,clip]{figures/Image_Classification/vision_top_final_performers_with_CI_MLP_CNN_ResNet_PVT_copy.pdf}
    \includegraphics[page=9,width=0.3\textwidth,trim=0 20 7 0,clip]{figures/Image_Classification/vision_top_final_performers_with_CI_MLP_CNN_ResNet_PVT_copy.pdf}
        %\caption{Curve Plots: CNN}
        \label{fig:acc:final:cnn}
    \end{centering}\\
    
%\end{figure}
%\begin{figure}[h!]
    \begin{centering}
    \includegraphics[page=4,width=0.3\textwidth,trim=0 20 20 0,clip]{figures/Image_Classification/vision_top_final_performers_with_CI_MLP_CNN_ResNet_PVT_copy.pdf}
    \includegraphics[page=8,width=0.3\textwidth,trim=0 20 20 0,clip]{figures/Image_Classification/vision_top_final_performers_with_CI_MLP_CNN_ResNet_PVT_copy.pdf}
    \includegraphics[page=12,width=0.3\textwidth,trim=0 20 20 0,clip]{figures/Image_Classification/vision_top_final_performers_with_CI_MLP_CNN_ResNet_PVT_copy.pdf}
        %\caption{Curve Plots: ResNet}
        \label{fig:acc:final:ResNet}
    \end{centering}\\
    
%\end{figure}
    \begin{centering}
	    \includegraphics[page=3,width=0.3\textwidth,trim=0 20 20 0,clip]{figures/Image_Classification/vision_top_final_performers_with_CI_MLP_CNN_ResNet_PVT_copy.pdf}
	    \includegraphics[page=7,width=0.3\textwidth,trim=0 20 20 0,clip]{figures/Image_Classification/vision_top_final_performers_with_CI_MLP_CNN_ResNet_PVT_copy.pdf}
	    \includegraphics[page=11,width=0.3\textwidth,trim=0 20 20 0,clip]{figures/Image_Classification/vision_top_final_performers_with_CI_MLP_CNN_ResNet_PVT_copy.pdf}
	        \caption{Vision: Accuracy curves with Confidence Intervals. Shaded regions show 95\% confidence intervals (n = 3 seeds)}
	        \Description[Accuracy curves with 95\% confidence intervals]{A multi-panel set of line plots showing test accuracy over training for multiple vision datasets and model backbones. Each panel overlays accuracy trajectories for selected loss functions, and translucent shaded bands around the lines indicate 95\% confidence intervals computed over three random seeds. The figure emphasizes both the mean learning dynamics and the uncertainty across runs.}
	        \label{fig:acc:final:curves:CI}
	   \end{centering}
\end{figure*}

Across datasets and architectures, two consistent patterns emerge.
First, the ranking suggested by our radar plots and summary tables
is preserved under multi-seed training: distance-based harmonic losses
that previously appeared as strong contenders (e.g., cosine, Bray--Curtis,
Minkowski) continue to track at least as well as, and often above, the
cross-entropy and Euclidean baselines throughout training.  
In several regimes (notably CIFAR-10/CIFAR-100 with ResNet50 and MNIST with
ResNet50/PVT), the confidence bands of the leading non-Euclidean harmonic
loss lie systematically above those of the baselines in the later epochs,
indicating that the final accuracy gains are not artifacts of seed choice
but persist under sampling noise.

Second, the width of the confidence intervals is often comparable
or smaller for harmonic losses than for standard baselines.  
On datasets where optimization is more fragile (e.g., CIFAR-100 with PVT), %,MarathiSign with CNN/PVT), 
cross-entropy and some regularized baselines
(Focal, Center Loss) display visibly wider bands and occasional late-epoch fluctuations, whereas harmonic distances yield smoother trajectories with tighter intervals, echoing our gradient stability findings. Importantly, we do not observe any case where a harmonic loss that outperforms Euclidean in the aggregate tables suffers a reversal when confidence intervals are taken into account.

Overall, these multi-seed curves provide statistical depth to our vision
experiments: performance improvements for non-Euclidean harmonic losses
are accompanied by tight, stable confidence bands, supporting the claim
that their advantages over Euclidean and cross-entropy are robust rather
than due to random initialization.

\clearpage
\newpage

\section{Vision: Radar Plots: Additional datasets (MNIST, CIFAR10)}
\label{sec:spider-extra}

Figure~\ref{fig:acc:spider:vision:extra} reports results for distance–based harmonic losses on MNIST and CIFAR10 across all four backbones (MLP, CNN, ResNet50, PVT).  
%
%Although MNIST is comparatively easy, the plots already illustrate the
%three-way trade-off between performance, geometry, and sustainability.
%
In summary, the MNIST and CIFAR-10 radar plots confirm what has been observed on other datasets: even on smaller benchmarks, non–Euclidean
harmonic losses (particularly cosine, Bray-Curtis, and Chebyshev) can
enhance representation structure and, on harder datasets, improve
accuracy, all while maintaining comparable or better sustainability than
Euclidean harmonic loss and the cross–entropy baseline.

\subsection{MNIST}
\textbf{RQ1: Model Performance (F1, Accuracy).}
Across all MNIST backbones, accuracy and F1 are
saturated for several distances, but cosine–based harmonic
losses (stable/unstable) and Bray--Curtis (normalized) remain
among the most reliable high–performers.  
On MLP and CNN, these distances match or slightly exceed both Euclidean
harmonic loss and the cross–entropy baseline.
On ResNet50 and PVT, where capacity is ample, almost all distances reach
nearly perfect accuracy, confirming that changing the distance in the
harmonic head does not harm performance. %  even in overparameterized
%regimes.

\textbf{RQ2: Interpretability (PC2 EV, PCA 90\%).}
Even on this simple dataset, non–Euclidean distances already reshape the
embedding geometry.  
Bray--Curtis (normalized) and Chebyshev (standard) produce noticeably
higher PC2 explained variance and reduce the number of components needed
to reach $90\%$ EV, indicating compact, prototype–aligned clusters.
Cosine harmonic losses also improve EV relative to Euclidean while
maintaining top accuracy.
Mahalanobis and Minkowski variants (on ResNet50) further concentrate
variance, but their interpretability advantage is less pronounced on
MNIST because the task is almost linearly separable.

\textbf{RQ3: Sustainability (Duration/Epoch/GFLOPs, Emissions).}
For MNIST, the harmonic head constitutes a tiny fraction of the overall
compute, so all distances exhibit similar Duration/Epoch/GFLOPs and
emissions.  
Cosine and Bray--Curtis are essentially neutral relative to Euclidean
and cross–entropy; small differences arise mainly from minor variations
in convergence speed rather than per–step cost.  
The key takeaway from MNIST is therefore that non–Euclidean harmonic
losses can improve representation structure without sacrificing
accuracy or sustainability. % on easy vision tasks.

\subsection{CIFAR10}
\textbf{RQ1: Model Performance (F1, Accuracy).}
On CIFAR-10, cosine harmonic losses become clearly advantageous.
For MLP and CNN, cosine (stable/unstable) and Bray--Curtis (normalized)
consistently occupy the highest or near–highest F1 and accuracy,
outperforming Euclidean harmonic loss and the cross–entropy baseline.
On ResNet50 and PVT, cosine again delivers strong accuracy while
remaining competitive with the best non–Euclidean alternatives
(e.g., Minkowski $p{=}3.0$).
Overall, cosine is the most robust choice across architectures once the
task requires nontrivial feature extraction.

\textbf{RQ2: Interpretability (PC2 EV, PCA 90\%).}
CIFAR-10 further highlights the interpretability benefits of
non–Euclidean geometry.  
Bray--Curtis and Chebyshev systematically
increase PC2 EV and reduce PCA~90\% dimensionality on MLP, CNN, and
ResNet50, yielding sharper, more compact embeddings than Euclidean or
cross–entropy.  
Cosine harmonic losses also improve EV over Euclidean, providing a
favorable accuracy/interpretability compromise.  
On PVT, Canberra–weighted and Bray--Curtis variants similarly enhance
variance concentration while preserving strong performance, reinforcing
the observation that prototype–friendly distances induce more structured
feature spaces.

\textbf{RQ3: Sustainability (Duration/Epoch/GFLOPs, Emissions).}
On CIFAR-10, sustainability trends mirror those seen on larger datasets.
Cosine harmonic loss is typically neutral to slightly favorable in
Duration/Epoch/GFLOPs and emissions relative to Euclidean and
cross–entropy, especially on CNN and ResNet50 where convergence is
faster.  
Bray--Curtis and Canberra variants introduce modest overhead, reflecting
their more complex computations, but remain within the same qualitative
efficiency regime.  
In all cases, the harmonic head is lightweight compared to the backbone,
so the main sustainability differences arise from reduced steps–to–high
accuracy rather than large per–step cost.

%%%%%%%%%%%%%%%%%
\begin{figure*}[h!]
    \centering
    \begin{centering}   
    \includegraphics[page=10,width=0.3\textwidth,trim=45 80 80 15,clip]{figures/Color-Blind_Accessible_Images/color_blind_vision_spiders.pdf}
    \includegraphics[page=2,width=0.3\textwidth,trim=45 80 80 15,clip]{figures/Color-Blind_Accessible_Images/color_blind_vision_spiders.pdf}\\
    \includegraphics[page=9,width=0.3\textwidth,trim=45 80 80 15,clip]{figures/Color-Blind_Accessible_Images/color_blind_vision_spiders.pdf}
    \includegraphics[page=1,width=0.3\textwidth,trim=45 80 80 15,clip]{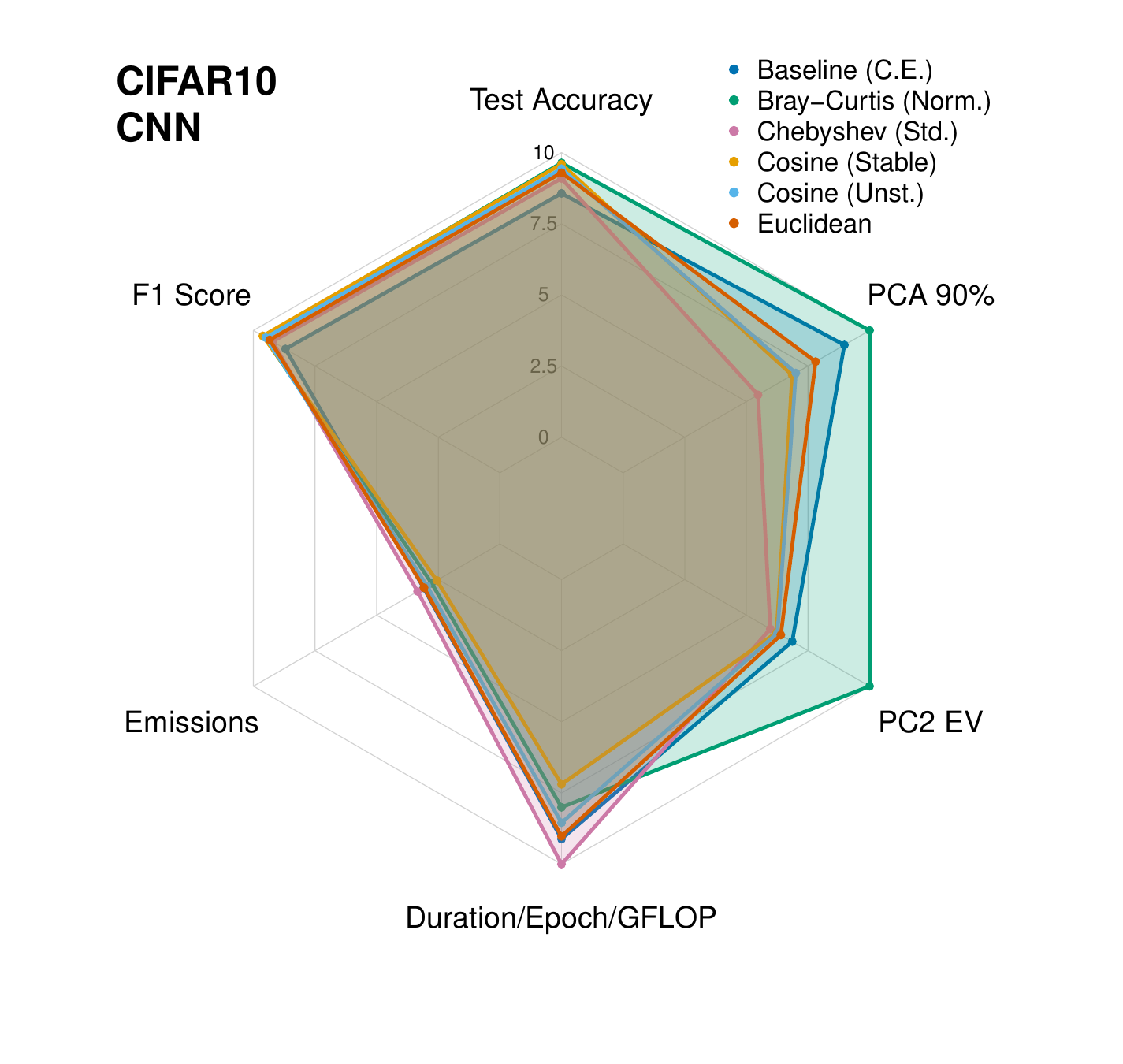}\\
    \includegraphics[page=12,width=0.3\textwidth,trim=45 80 80 15,clip]{figures/Color-Blind_Accessible_Images/color_blind_vision_spiders.pdf} 
    \includegraphics[page=4,width=0.3\textwidth,trim=45 80 80 15,clip]{figures/Color-Blind_Accessible_Images/color_blind_vision_spiders.pdf}\\
	    \includegraphics[page=11,width=0.3\textwidth,trim=45 80 80 15,clip]{figures/Color-Blind_Accessible_Images/color_blind_vision_spiders.pdf} 
	    \includegraphics[page=3,width=0.3\textwidth,trim=45 80 80 15,clip]{figures/Color-Blind_Accessible_Images/color_blind_vision_spiders.pdf}
	    \end{centering}
	    \caption{Vision: Radar plots: 1) \textit{Model Performance} (F1, Accuracy); 2) \textit{Interpretability} (PC2 EV, PCA 90\%), and 3) \textit{Sustainability} (Duration/Epoch/GFLOPs, Emissions). Plots feature Baseline (Cross-Entropy), Euclidean harmonic, and the four top-performing non-Euclidean harmonic losses.}
	    \Description[Additional vision radar plots]{A collection of radar (spider) plots for vision experiments, each summarizing three categories of metrics: performance (F1 and accuracy), interpretability (variance explained by the first two principal components and the number of PCs to reach 90\% explained variance), and sustainability (training duration per epoch, GFLOPs where applicable, and CO2 emissions). Each radar chart overlays multiple methods (cross-entropy, Euclidean harmonic, and several non-Euclidean harmonic losses) so readers can compare the shapes of trade-offs across metrics at a glance.}
	    \label{fig:acc:spider:vision:extra}
\end{figure*}

\section{Results with Alternative Loss Functions}
%\subsection{Additional Loss Baselines: Design Goals and Practical Use}
%\label{sec:baselines:preamble}
To contextualize our distance–based harmonic losses, we benchmark against four widely used alternatives that are often motivated by calibration, robustness in low–data regimes, or representational compactness. Below we summarize each loss, its objective, and why it is relevant along our three axes: effectiveness, sustainability, and interpretability. Unless otherwise noted, these baselines are applied with a conventional linear head and softmax; they can also be evaluated on distance–parameterized logits (e.g., $-\mathrm{dist}(\mathbf{h},\mathbf{w}_c)$) to ensure architectural parity.

\paragraph{Focal Loss (calibration, anti–grokking, class imbalance).}
Focal Loss reweights examples by their difficulty:
\[
\mathcal{L}_{\text{focal}}(\mathbf{z}, y)
= -\,\alpha\,(1 - p_y)^{\gamma}\,\log p_y,
\quad
p_y=\frac{e^{z_y}}{\sum_j e^{z_j}},
\]
with focusing parameter $\gamma\!\ge\!0$ and class weight $\alpha\!\in\!(0,1]$. By down–weighting well–classified (overconfident) samples, it yields smoother gradient signals, often improving calibration and mitigating late–stage overfitting behaviors akin to grokking. In our grids we consider $\gamma\!\in\!\{2,3\}$ and $\alpha\!\in\!\{0.25,0.5\}$. \emph{Sustainability:} modest compute overhead (same forward/backward shape as CE), but potentially fewer effective updates on easy samples; net carbon effect is typically neutral to slightly higher than CE, depending on convergence behavior.

\paragraph{Label Smoothing (reduced overconfidence, low–data stability).}
Label Smoothing replaces the one–hot target with
\[
\tilde{\mathbf{y}}=(1-\varepsilon)\,\mathbf{e}_y + \frac{\varepsilon}{K}\,\mathbf{1},
\quad
\mathcal{L}_{\text{LS}}(\mathbf{z}, y) = -\sum_{c=1}^K \tilde{y}_c \log p_c,
\]
where $\varepsilon\!\in\![0,1)$ controls smoothing (we use $\varepsilon\!\in\!\{0.1,0.2\}$). The softened targets reduce overconfidence and improve generalization in scarce–label settings; they also stabilize optimization by shrinking logit magnitudes. \emph{Interpretability:} mild regularization can yield more isotropic features; \emph{sustainability:} training cost matches CE, with potential reductions in steps–to–target when overconfidence previously harmed convergence.

\paragraph{Center Loss (prototype compactness, cluster interpretability).}
Center Loss explicitly penalizes the distance to a class prototype:
\[
\mathcal{L}_{\text{center}}(\mathbf{h}, y)
=\frac{1}{2}\,\big\|\mathbf{h}-\mathbf{c}_{y}\big\|_2^2,
\qquad
\mathcal{L}=\mathcal{L}_{\text{CE}}+\lambda\,\mathcal{L}_{\text{center}},
\]
with learnable class centers $\{\mathbf{c}_k\}$ and trade–off $\lambda>0$ (we test $\lambda\!\in\!\{0.5,1,2\}$). This encourages intra–class compactness and inter–class separability—properties that make feature clusters and decision prototypes easier to inspect. \emph{Sustainability:} small extra memory and updates for $\mathbf{c}_k$; the overhead is minor relative to the backbone but measurable in long runs.

\paragraph{Confidence Penalty (entropy regularization, anti–grokking).}
Confidence Penalty adds a negative–entropy term to discourage over–peaked posteriors:
\[
\mathcal{L}_{\text{CP}}(\mathbf{z}, y)
= \mathcal{L}_{\text{CE}}(\mathbf{z}, y) \;-\; \beta\,\mathcal{H}(\mathbf{p}),
\quad
\mathcal{H}(\mathbf{p})=-\sum_{c=1}^K p_c \log p_c,
\]
with $\beta\!>\!0$ (we use $\beta\!\in\!\{0.1,0.2\}$). By explicitly rewarding higher predictive entropy when appropriate, it reduces brittle overconfidence and can temper delayed generalization (grokking–like) effects. \emph{Sustainability:} essentially identical compute to CE; any carbon changes stem from altered convergence trajectories rather than per–step cost.

\medskip
\paragraph{ArcFace (angular margins, maximized class separation).}
ArcFace~\citep{deng2019arcface} introduces an \emph{additive angular margin} that enlarges the decision boundary between classes on the unit hypersphere. 
Given normalized features $\mathbf{h}$ and normalized class weights $\mathbf{w}_c$, the cosine similarity 
$\cos\theta_c = \langle \mathbf{h}, \mathbf{w}_c\rangle$ 
is modified for the target class $y$ by adding a fixed angular margin $m$:
\[
\cos(\theta_y + m)
= \cos\theta_y \cos m - \sin\theta_y \sin m.
\]
ArcFace replaces the final linear classifier with a scaled angular softmax:
\[
\mathcal{L}_{\text{arcface}}(\mathbf{h}, y)
= -\log
\frac{\exp\left(s\,\cos(\theta_y + m)\right)}
     {\exp\left(s\,\cos(\theta_y + m)\right)
      + \sum_{c\neq y}\exp\left(s\,\cos\theta_c\right)},
\]
where $s$ is a feature-scale parameter (typically $s\!=\!30$–$64$).  
By manipulating angles rather than norms, ArcFace enforces tighter class clustering and larger inter-class separation, and is widely regarded as a \emph{strong margin-based baseline} in metric learning. 
This makes it a particularly relevant comparator for harmonic losses: both approaches normalize features to a hypersphere and control geometry around class prototypes, but ArcFace explicitly pushes angular margins, whereas harmonic losses adjust the entire distance landscape.  
\emph{Sustainability:} ArcFace is lightweight (same complexity as cosine classifiers), but the angular margin can slightly increase optimization stiffness, occasionally raising per-step compute or slowing convergence. In our experiments we consider $m\!\in\!\{0.3,0.5\}$ and $s\!\in\!\{30,64\}$.

\medskip
%\noindent\textbf{Why these baselines here?}
Each baseline addresses a failure mode that harmonic losses also target but via different inductive biases: Focal/Label Smoothing emphasize calibration and data efficiency; Center Loss operationalizes prototype compactness; Confidence Penalty discourages pathological overconfidence. This makes them natural comparators for our \emph{distance–based} formulation, which subsumes prototype reasoning in its very parameterization and, as we show, can simultaneously improve accuracy, reduce emissions, and enhance interpretability.

\subsection{Vision: Fine-grained results}
%\textbf{Behavior of Additional Baseline Losses (PVT and ResNet50)}
A cross–backbone inspection in Tables \ref{tab:all-losses-cifar100-pvt}--\ref{tab:all-losses-marathi-resnet} shows that the additional baselines
(Focal, Label Smoothing, Confidence Penalty, Center Loss, ArcFace) are
competitive on accuracy, but do not displace the non-Euclidean harmonic
losses as the most balanced options.

On \textbf{CIFAR-10 and CIFAR-100 with ResNet-50}, cosine-based harmonic heads
remain among the strongest configurations: they deliver the largest accuracy and
F1 gains over cross-entropy (up to $\approx 11\%$ relative on CIFAR-10 and
$5\%$ on CIFAR-100), while simultaneously reducing emissions by
$10$--$30\%$ and sharply concentrating the representation (PC90\% dropping from
$50$ to $5$--$8$ components). Focal and ArcFace variants sometimes match top
accuracy, but typically exhibit weaker EV/PC90\% improvements or higher
emissions, so they do not dominate the multi-criteria trade-off.

For \textbf{PVT backbones}, where capacity and input resolution are higher, the
picture is more nuanced. On CIFAR-10 PVT, calibration-oriented losses (Label
Smoothing, Focal) offer slight accuracy improvements and sizable emissions
reductions, yet Euclidean harmonic achieves the most compact geometry (PC90\%
from $17$ to $3$) at only a small performance cost. On CIFAR-100 PVT, focal and
confidence-penalty losses are best in accuracy, but Euclidean harmonic again
produces the most concentrated feature spaces (PC90\% $4$ vs.\ $50$), highlighting
an interpretability advantage even when it is not the accuracy winner.

On the \textbf{high-resolution Marathi Sign dataset}, nearly all methods
saturate accuracy ($\ge 0.999$), so the comparison is driven by structure and
sustainability. Here, cosine-based harmonic losses for ResNet-50 (and to a
lesser extent for PVT) achieve substantial EV/PC90\% gains. e.g., Cosine
(Unst.) reduces PC90\% from $15.5$ to $6.5$ while slightly lowering
emissions, demonstrating that our non-Euclidean harmonic heads remain
competitive even in settings where strong baselines like ArcFace and Center
Loss are present.

\begin{table*}[htbp]
\small
\setlength{\tabcolsep}{2pt}
\centering
\caption{Results for CIFAR100 PVT (top-8 losses) and \% changes w.r.t. Baseline (CE).}
\label{tab:cifar100-pvt-top8}
\begin{tabular}{llllll}
\toprule
\textbf{Method} & \textbf{Acc} & \textbf{F1} & \textbf{gCO$_2$eq} & \textbf{EV} & \textbf{PC90\%} \\
\midrule
Baseline                 & 0.3994 & 0.3970 & 3.67  & 0.0973 & 50.0 \\
Focal ($\gamma{=}2,\alpha{=}0.25$) & 0.4017 ({\scriptsize 0.59\%}) & 0.3996 ({\scriptsize 0.65\%}) & 4.9609 ({\scriptsize -35.35\%}) & 0.1257 ({\scriptsize 29.22\%}) & 50.0 ({\scriptsize 0\%}) \\
Focal ($\gamma{=}3,\alpha{=}0.25$) & 0.4015 ({\scriptsize 0.53\%}) & 0.3999 ({\scriptsize 0.72\%}) & 4.9567 ({\scriptsize -35.24\%}) & 0.1364 ({\scriptsize 40.21\%}) & 50.0 ({\scriptsize 0\%}) \\
Focal ($\gamma{=}2,\alpha{=}0.5$)  & 0.4010 ({\scriptsize 0.40\%}) & 0.3997 ({\scriptsize 0.66\%}) & 3.9182 ({\scriptsize -6.90\%}) & 0.1271 ({\scriptsize 30.64\%}) & 50.0 ({\scriptsize 0\%}) \\
Conf. Penalty ($\beta{=}0.1$) & 0.3999 ({\scriptsize 0.13\%}) & 0.3977 ({\scriptsize 0.17\%}) & 5.2574 ({\scriptsize -43.44\%}) & 0.0963 ({\scriptsize -0.96\%}) & 50.0 ({\scriptsize 0\%}) \\
Label Smoothing ($\varepsilon{=}0.1$) & 0.3894 ({\scriptsize -2.50\%}) & 0.3888 ({\scriptsize -2.08\%}) & 2.6626 ({\scriptsize 27.36\%}) & 0.0818 ({\scriptsize -15.92\%}) & 50.0 ({\scriptsize 0\%}) \\
Conf. Penalty ($\beta{=}0.2$) & 0.3859 ({\scriptsize -3.38\%}) & 0.3834 ({\scriptsize -3.45\%}) & 2.1718 ({\scriptsize 40.74\%}) & 0.0749 ({\scriptsize -23.00\%}) & 50.0 ({\scriptsize 0\%}) \\
Label Smoothing ($\varepsilon{=}0.2$) & 0.3847 ({\scriptsize -3.67\%}) & 0.3851 ({\scriptsize -3.00\%}) & 2.2097 ({\scriptsize 39.71\%}) & 0.0742 ({\scriptsize -23.74\%}) & 50.0 ({\scriptsize 0\%}) \\
ArcFace  ($m{=}0.5,s{=}30$) & 0.3728 ({\scriptsize -6.65\%}) & 0.3772 ({\scriptsize -5.00\%}) & 2.6171 ({\scriptsize 28.60\%}) & 0.1259 ({\scriptsize 29.41\%}) & 50.0 ({\scriptsize 0\%}) \\
Euclidean              & 0.2864 ({\scriptsize -28.29\%}) & 0.2945 ({\scriptsize -25.83\%}) & 6.4329 ({\scriptsize -75.51\%}) & 0.8414 ({\scriptsize 765.15\%}) & 4.0 ({\scriptsize 92.00\%}) \\
\bottomrule
\label{tab:all-losses-cifar100-pvt}
\end{tabular}
\end{table*}

\begin{table*}[htbp]
\small
\setlength{\tabcolsep}{2pt}
\centering
\caption{Results for CIFAR100 ResNet50 (top-8 losses) and \% changes w.r.t. Baseline (CE).}
\label{tab:cifar100-resnet-top8}
\begin{tabular}{llllll}
\toprule
\textbf{Method} & \textbf{Acc} & \textbf{F1} & \textbf{gCO$_2$eq} & \textbf{EV} & \textbf{PC90\%} \\
\midrule
Baseline            & 0.7006 & 0.6993 & 89.64 & 0.1069 & 50.0 \\
Cosine (Stable)     & 0.7381 ({\scriptsize 5.35\%}) & 0.7384 ({\scriptsize 5.59\%}) & 79.2831 ({\scriptsize 11.55\%}) & 0.5915 ({\scriptsize 453.51\%}) & 8.0 ({\scriptsize 84.00\%}) \\
Focal ($\gamma{=}2,\alpha{=}0.25$) & 0.7349 ({\scriptsize 4.90\%}) & 0.7342 ({\scriptsize 5.00\%}) & 72.1795 ({\scriptsize 19.48\%}) & 0.1468 ({\scriptsize 37.32\%}) & 50.0 ({\scriptsize 0\%}) \\
Focal ($\gamma{=}2,\alpha{=}0.5$)  & 0.7341 ({\scriptsize 4.79\%}) & 0.7332 ({\scriptsize 4.85\%}) & 78.7234 ({\scriptsize 12.18\%}) & 0.1224 ({\scriptsize 14.52\%}) & 50.0 ({\scriptsize 0\%}) \\
Cosine (Unst.)      & 0.7340 ({\scriptsize 4.77\%}) & 0.7349 ({\scriptsize 5.09\%}) & 72.5413 ({\scriptsize 19.07\%}) & 0.5891 ({\scriptsize 451.21\%}) & 8.0 ({\scriptsize 84.00\%}) \\
Focal ($\gamma{=}3,\alpha{=}0.25$) & 0.7311 ({\scriptsize 4.36\%}) & 0.7308 ({\scriptsize 4.51\%}) & 81.5875 ({\scriptsize 8.98\%}) & 0.1554 ({\scriptsize 45.41\%}) & 50.0 ({\scriptsize 0\%}) \\
Label Smoothing ($\varepsilon{=}0.1$) & 0.7261 ({\scriptsize 3.64\%}) & 0.7248 ({\scriptsize 3.65\%}) & 79.8223 ({\scriptsize 10.95\%}) & 0.1469 ({\scriptsize 37.44\%}) & 50.0 ({\scriptsize 0\%}) \\
Label Smoothing ($\varepsilon{=}0.2$) & 0.7221 ({\scriptsize 3.08\%}) & 0.7206 ({\scriptsize 3.04\%}) & 81.1883 ({\scriptsize 9.43\%}) & 0.1524 ({\scriptsize 42.59\%}) & 50.0 ({\scriptsize 0\%}) \\
ArcFace ($m{=}0.7,s{=}30$) & 0.7166 ({\scriptsize 2.29\%}) & 0.7150 ({\scriptsize 2.25\%}) & 70.5590 ({\scriptsize 21.29\%}) & 0.6059 ({\scriptsize 466.93\%}) & 48.33 ({\scriptsize 3.33\%}) \\
Euclidean           & 0.7047 ({\scriptsize 0.59\%}) & 0.7055 ({\scriptsize 0.89\%}) & 87.7280 ({\scriptsize 2.13\%}) & 0.4301 ({\scriptsize 302.49\%}) & 33.67 ({\scriptsize 32.67\%}) \\
\bottomrule
\end{tabular}
\label{tab:all-losses-cifar100-resnet}
\end{table*}

\begin{table*}[htbp]
\small
\setlength{\tabcolsep}{2pt}
\centering
\caption{Results for MarathiSign PVT (top-8 losses) and \% changes w.r.t. Baseline (CE).}
\label{tab:marathi-pvt-top8}
\begin{tabular}{llllll}
\toprule
\textbf{Method} & \textbf{Acc} & \textbf{F1} & \textbf{gCO$_2$eq} & \textbf{EV} & \textbf{PC90\%} \\
\midrule
Baseline                & 0.9965 & 0.9964 & 2.85  & 0.2135 & 24.75 \\
ArcFace ($m{=}0.7,s{=}30$) & 0.9997 ({\scriptsize 0.33\%}) & 0.9997 ({\scriptsize 0.33\%}) & 4.2108 ({\scriptsize -47.88\%}) & 0.1132 ({\scriptsize -47.00\%}) & 36.0 ({\scriptsize -45.45\%}) \\
Focal ($\gamma{=}3,\alpha{=}0.25$) & 0.9997 ({\scriptsize 0.32\%}) & 0.9996 ({\scriptsize 0.32\%}) & 2.7240 ({\scriptsize 4.34\%}) & 0.2749 ({\scriptsize 28.74\%}) & 19.33 ({\scriptsize 21.89\%}) \\
Center Loss ($\lambda{=}1$) & 0.9995 ({\scriptsize 0.31\%}) & 0.9995 ({\scriptsize 0.31\%}) & 5.1746 ({\scriptsize -81.72\%}) & 0.1824 ({\scriptsize -14.56\%}) & 29.67 ({\scriptsize -19.87\%}) \\
Conf. Penalty ($\beta{=}0.2$) & 0.9995 ({\scriptsize 0.30\%}) & 0.9994 ({\scriptsize 0.30\%}) & 3.6912 ({\scriptsize -29.63\%}) & 0.1504 ({\scriptsize -29.58\%}) & 33.67 ({\scriptsize -36.03\%}) \\
Focal ($\gamma{=}2,\alpha{=}0.25$) & 0.9993 ({\scriptsize 0.29\%}) & 0.9993 ({\scriptsize 0.28\%}) & 2.5648 ({\scriptsize 9.93\%}) & 0.2853 ({\scriptsize 33.63\%}) & 19.33 ({\scriptsize 21.89\%}) \\
ArcFace ($m{=}0.3,s{=}30$) & 0.9992 ({\scriptsize 0.28\%}) & 0.9992 ({\scriptsize 0.28\%}) & 7.5158 ({\scriptsize -163.94\%}) & 0.1756 ({\scriptsize -17.76\%}) & 28.0 ({\scriptsize -13.13\%}) \\
Label Smoothing ($\varepsilon{=}0.2$) & 0.9992 ({\scriptsize 0.28\%}) & 0.9991 ({\scriptsize 0.27\%}) & 2.6977 ({\scriptsize 5.26\%}) & 0.1190 ({\scriptsize -44.26\%}) & 35.0 ({\scriptsize -41.41\%}) \\
Cosine (Unst.)         & 0.9991 ({\scriptsize 0.27\%}) & 0.9991 ({\scriptsize 0.26\%}) & 7.3447 ({\scriptsize -157.93\%}) & 0.5552 ({\scriptsize 160.04\%}) & 7.0 ({\scriptsize 71.72\%}) \\
Euclidean              & 0.9994 ({\scriptsize 0.30\%}) & 0.9994 ({\scriptsize 0.30\%}) & 4.3621 ({\scriptsize -53.19\%}) & 0.5035 ({\scriptsize 135.83\%}) & 14.25 ({\scriptsize 42.42\%}) \\
\bottomrule
\end{tabular}
\label{tab:all-losses-marathi-pvt}
\end{table*}

\begin{table*}[htbp]
\small
\setlength{\tabcolsep}{2pt}
\centering
\caption{Results for MarathiSign ResNet50 (top-8 losses) and \% changes w.r.t. Baseline (CE).}
\label{tab:marathi-resnet-top8}
\begin{tabular}{llllll}
\toprule
\textbf{Method} & \textbf{Acc} & \textbf{F1} & \textbf{gCO$_2$eq} & \textbf{EV} & \textbf{PC90\%} \\
\midrule
Baseline                  & 0.9998 & 0.9998 & 35.41 & 0.4507 & 15.5 \\
Conf. Penalty ($\beta{=}0.1$) & 0.9999 ({\scriptsize 0.01\%}) & 0.9999 ({\scriptsize 0.01\%}) & 29.6596 ({\scriptsize 16.24\%}) & 0.4393 ({\scriptsize -2.55\%}) & 17.5 ({\scriptsize -12.90\%}) \\
Cosine (Stable)           & 0.9999 ({\scriptsize 0.01\%}) & 0.9999 ({\scriptsize 0.01\%}) & 54.8139 ({\scriptsize -54.80\%}) & 0.7321 ({\scriptsize 62.42\%}) & 5.5 ({\scriptsize 64.52\%}) \\
Conf. Penalty ($\beta{=}0.2$) & 0.9998 ({\scriptsize 0.00\%}) & 0.9998 ({\scriptsize 0.00\%}) & 30.3697 ({\scriptsize 14.23\%}) & 0.3288 ({\scriptsize -27.05\%}) & 33.0 ({\scriptsize -112.90\%}) \\
Focal ($\gamma{=}2,\alpha{=}0.25$) & 0.9998 ({\scriptsize 0.00\%}) & 0.9998 ({\scriptsize 0.00\%}) & 24.95 ({\scriptsize 29.54\%}) & 0.4016 ({\scriptsize -10.89\%}) & 17.0 ({\scriptsize -9.68\%}) \\
Focal ($\gamma{=}3,\alpha{=}0.25$) & 0.9998 ({\scriptsize 0.00\%}) & 0.9998 ({\scriptsize 0.00\%}) & 28.1206 ({\scriptsize 20.58\%}) & 0.4286 ({\scriptsize -4.91\%}) & 16.5 ({\scriptsize -6.45\%}) \\
Cosine (Unst.)            & 0.9997 ({\scriptsize -0.01\%}) & 0.9997 ({\scriptsize -0.01\%}) & 33.8287 ({\scriptsize 4.46\%}) & 0.6427 ({\scriptsize 42.58\%}) & 6.5 ({\scriptsize 58.06\%}) \\
Label Smoothing ($\varepsilon{=}0.1$) & 0.9997 ({\scriptsize -0.01\%}) & 0.9997 ({\scriptsize -0.01\%}) & 33.5266 ({\scriptsize 5.31\%}) & 0.1758 ({\scriptsize -60.99\%}) & 36.0 ({\scriptsize -132.26\%}) \\
Label Smoothing ($\varepsilon{=}0.2$) & 0.9997 ({\scriptsize -0.01\%}) & 0.9997 ({\scriptsize -0.01\%}) & 38.7093 ({\scriptsize -9.32\%}) & 0.1641 ({\scriptsize -63.60\%}) & 36.0 ({\scriptsize -132.26\%}) \\
Euclidean                & 0.9984 ({\scriptsize -0.14\%}) & 0.9983 ({\scriptsize -0.15\%}) & 50.2050 ({\scriptsize -41.79\%}) & 0.2787 ({\scriptsize -38.18\%}) & 50.0 ({\scriptsize -222.58\%}) \\
\bottomrule
\end{tabular}
\label{tab:all-losses-marathi-resnet}
\end{table*}

% Finally, on \textbf{MNIST}, ArcFace and focal variants tend to capture the very
% top accuracy, but harmonic distances (Bray--Curtis, cosine, Mahalanobis,
% Euclidean harmonic) are still present among the top entries and often provide
% the sharpest prototype structure (largest EV, smallest PC90\%) with neutral or
% favorable emissions. Overall, these appendical results confirm that adding
% powerful baselines does not overturn the central message: non-Euclidean
% harmonic losses continue to offer attractive accuracy--interpretability--sustainability
% trade-offs compared to both cross-entropy and a set of widely used alternative
% classification losses.

\newpage

\subsection{Vision: Radar Plots with Additional Losses (MNIST, CIFAR10, CIFAR100)}
Figure~\ref{fig:acc:spider:vision:new:losses} presents an expanded multi-criteria analysis across MNIST, CIFAR-10, and CIFAR-100 using MLP, CNN, ResNet50, and PVT backbones.
This comparison tests whether the advantages previously attributed to harmonic losses persist when measured against widely adopted alternatives for regularization, interpretability, and robustness.

\textbf{RQ1: Model Performance (F1, Test Accuracy).}
Across datasets and architectures, the \textbf{harmonic losses} -- particularly the \emph{cosine-} and \emph{Bray--Curtis–based} variants -- remain the strongest overall performers.
While \textbf{Focal Loss} and \textbf{Label Smoothing} occasionally narrow the gap on more complex datasets such as CIFAR-100, they do not consistently surpass harmonic losses across backbones.
Cosine-based harmonic loss maintains higher accuracy and smoother convergence, especially for CNN and ResNet50, showing greater robustness to data imbalance and optimization noise than either Focal or Confidence Penalty Loss.
Even when Center Loss improves class compactness, it rarely translates into superior end-task accuracy, reinforcing that distance-based formulations bring more balanced generalization benefits.

\textbf{RQ2: Interpretability (PC2 EV, PCA 90\%).}
The advantage of harmonic losses extends beyond performance: \textbf{non-Euclidean harmonics}, especially Bray--Curtis and Chebyshev, consistently yield the most structured latent geometries.
They capture more variance with fewer principal components and align features more distinctly around class prototypes.
Although \textbf{Center Loss} achieves comparable compactness in isolated cases, its representations tend to be less stable across architectures.
\textbf{Label Smoothing} and \textbf{Confidence Penalty} slightly improve feature spread, but their effects remain shallow compared to the systematic geometric alignment achieved by harmonic formulations.
This supports the notion that explicit metric-based geometry is a stronger driver of interpretability than indirect regularization.

\textbf{RQ3: Sustainability (Duration/Epoch, Emissions).}
When considering efficiency, harmonic losses continue to hold their edge.
They achieve competitive or lower CO$_2$ emissions than both Euclidean and cross-entropy baselines. %, with \textbf{cosine} remaining the most sustainable variant.
Among the new baselines, only \textbf{Label Smoothing} approaches similar energy efficiency, while \textbf{Focal Loss} incurs additional computational cost due to its per-sample weighting.
Despite this, none of the conventional alternatives outperform the best-performing harmonic distances on a joint accuracy–emission axis, confirming that the added geometric structure of harmonic loss does not come at a sustainability penalty.

%\textbf{Cross-cutting Insights.}
Three key findings emerge:
i) \textbf{Cosine- and Bray--Curtis–based harmonic losses} remain the most consistently effective across accuracy, interpretability, and sustainability;
ii) \textbf{Conventional regularized losses} such as Focal or Label Smoothing can mitigate specific failure modes (imbalance, overconfidence) but do not achieve the same balance across criteria;
iii) The geometric grounding of harmonic losses continues to provide superior inductive structure, yielding smoother optimization, clearer feature organization, and greener training.
Overall, these results reaffirm the \emph{general dominance and stability of harmonic loss formulations}, even against strong baselines optimized for robustness and interpretability.

%%%%%%%%%%%%%%%%%
% MODEL ON ROWS
%%%%%%%%%%%%%%%%%
\begin{figure*}[h!]
    \begin{centering}
    % MLP
    \includegraphics[page=10,width=0.3\textwidth,trim=45 80 80 15,clip]{figures/Image_Classification/Spider/euclidean_plus_top5_new_losses_added.pdf}
    \includegraphics[page=2,width=0.3\textwidth,trim=45 80 80 15,clip]{figures/Image_Classification/Spider/euclidean_baseline_plus_top4.pdf}
    \includegraphics[page=6,width=0.3\textwidth,trim=45 80 80 15,clip]{figures/Image_Classification/Spider/euclidean_plus_top5_new_losses_added.pdf}
    % CNN 
    \includegraphics[page=9,width=0.3\textwidth,trim=45 80 80 15,clip]{figures/Image_Classification/Spider/euclidean_plus_top5_new_losses_added.pdf}
    \includegraphics[page=1,width=0.3\textwidth,trim=45 80 80 15,clip]{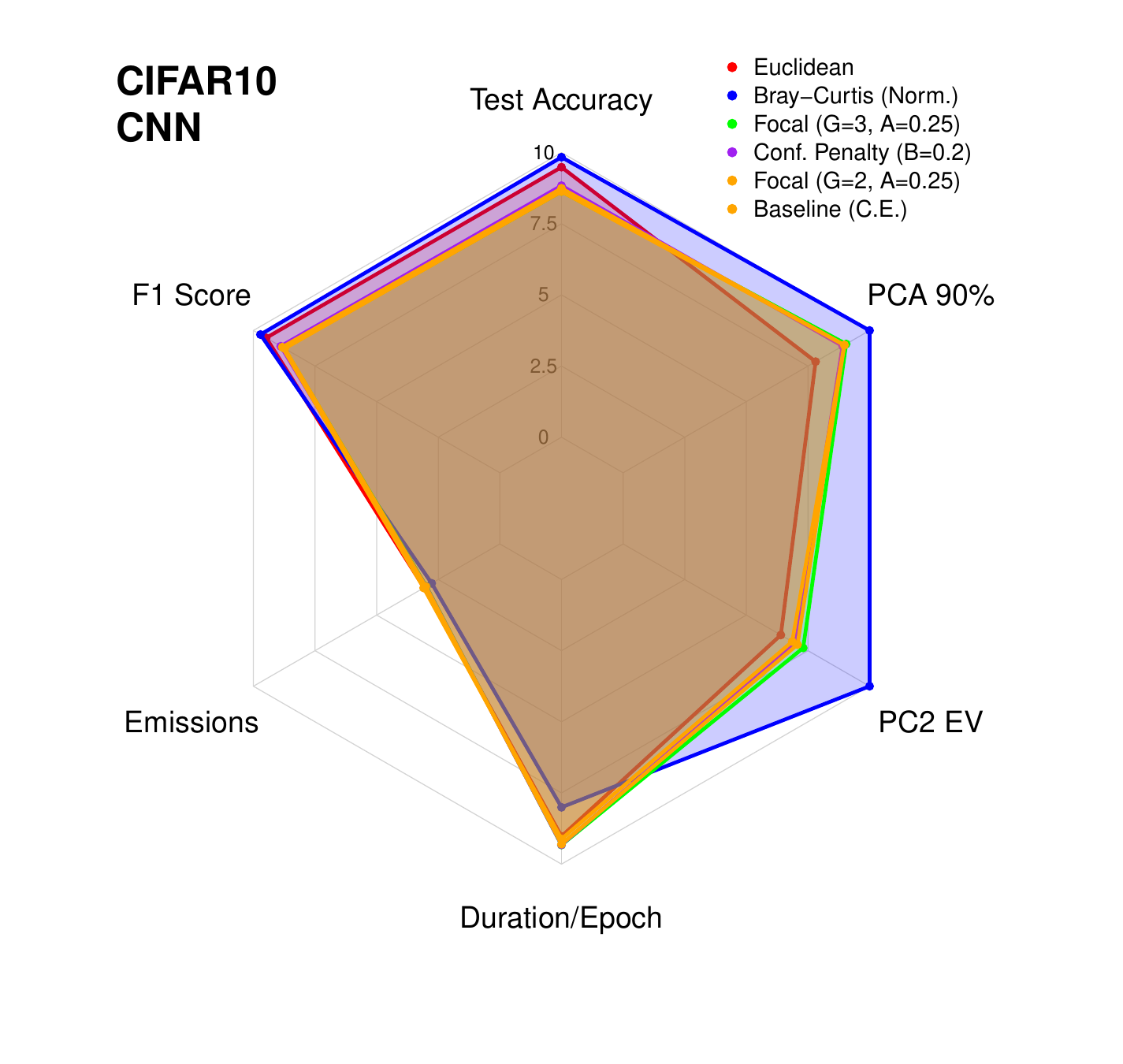}
    \includegraphics[page=5,width=0.3\textwidth,trim=45 80 80 15,clip]{figures/Image_Classification/Spider/euclidean_plus_top5_new_losses_added.pdf}\\
    % RESNET
    \includegraphics[page=12,width=0.3\textwidth,trim=45 80 80 15,clip]{figures/Image_Classification/Spider/euclidean_plus_top5_new_losses_added.pdf} 
    \includegraphics[page=4,width=0.3\textwidth,trim=45 80 80 15,clip]{figures/Image_Classification/Spider/euclidean_plus_top5_new_losses_added.pdf}
    \includegraphics[page=8,width=0.3\textwidth,trim=45 80 80 15,clip]{figures/Image_Classification/Spider/euclidean_plus_top5_new_losses_added.pdf}\\         
    % PVT
	    \includegraphics[page=11,width=0.3\textwidth,trim=45 80 80 15,clip]{figures/Image_Classification/Spider/euclidean_plus_top5_new_losses_added.pdf} 
	    \includegraphics[page=3,width=0.3\textwidth,trim=45 80 80 15,clip]{figures/Image_Classification/Spider/euclidean_plus_top5_new_losses_added.pdf}
	    \includegraphics[page=7,width=0.3\textwidth,trim=45 80 80 15,clip]{figures/Image_Classification/Spider/euclidean_plus_top5_new_losses_added.pdf}\\ 
	        \caption{Vision: Radar plots -- MNIST, CIFAR10, CIFAR100: 1) \textit{Model Performance} (F1, Accuracy); 2) \textit{Interpretability} (PC2 EV, PCA 90\%), and 3) \textit{Sustainability} (Duration/Epoch, Emissions). Plots feature Baseline (Cross-Entropy), Euclidean harmonic, and the four top-performing losses.}
	        \Description[Vision radar plots with additional loss baselines]{A grid of radar (spider) charts for MNIST, CIFAR-10, and CIFAR-100 vision experiments. Each radar compares multiple training objectives across three metric families: model performance (F1 and accuracy), interpretability (principal-component explained variance metrics), and sustainability (training duration per epoch and CO2 emissions). The overlaid shapes represent the cross-entropy baseline, Euclidean harmonic loss, and the top-performing loss variants (including additional baseline losses), highlighting how different objectives trade off accuracy, interpretability, and emissions.}
	        \label{fig:acc:spider:vision:new:losses}
	    \end{centering}
\end{figure*}

\begin{figure*}[h!]
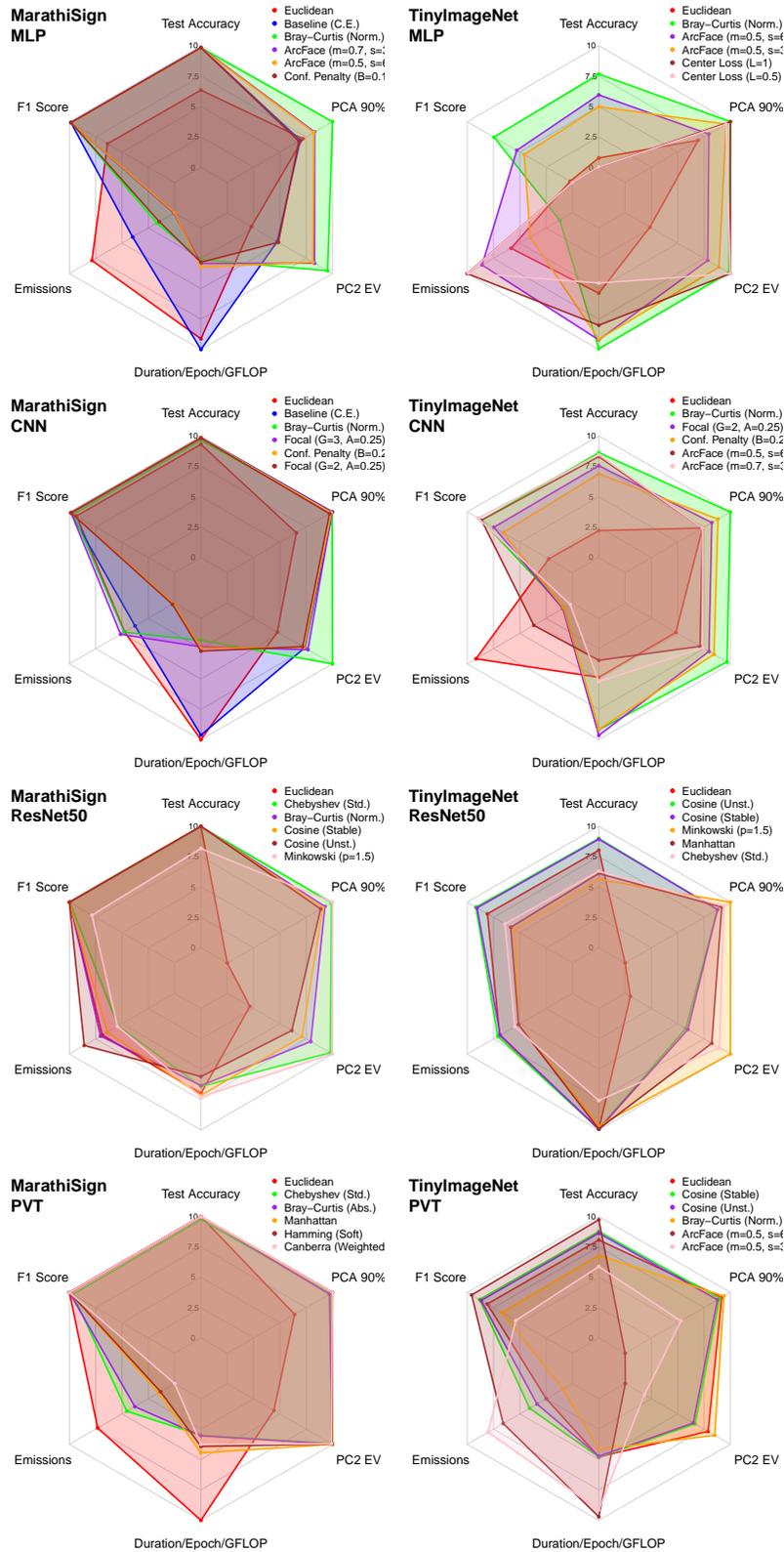

    \begin{centering}
    \includegraphics[page=14,width=0.3\textwidth,trim=45 80 80 15,clip]{figures/Image_Classification/Spider/euclidean_new_losses_MNIST_to_TinyImageNet.pdf}
    \includegraphics[page=18,width=0.3\textwidth,trim=45 80 80 15,clip]{figures/Image_Classification/Spider/euclidean_new_losses_MNIST_to_TinyImageNet.pdf}\\
    \includegraphics[page=13,width=0.3\textwidth,trim=45 80 80 15,clip]{figures/Image_Classification/Spider/euclidean_new_losses_MNIST_to_TinyImageNet.pdf}
    \includegraphics[page=17,width=0.3\textwidth,trim=45 80 80 15,clip]{figures/Image_Classification/Spider/euclidean_new_losses_MNIST_to_TinyImageNet.pdf} \\
	    \includegraphics[page=16,width=0.3\textwidth,trim=45 80 80 15,clip]{figures/Image_Classification/Spider/euclidean_new_losses_MNIST_to_TinyImageNet.pdf}
	    \includegraphics[page=20,width=0.3\textwidth,trim=45 80 80 15,clip]{figures/Image_Classification/Spider/euclidean_new_losses_MNIST_to_TinyImageNet.pdf} \\
	    \includegraphics[page=15,width=0.3\textwidth,trim=45 80 80 15,clip]{figures/Image_Classification/Spider/euclidean_new_losses_MNIST_to_TinyImageNet.pdf}
	    \includegraphics[page=19,width=0.3\textwidth,trim=45 80 80 15,clip]{figures/Image_Classification/Spider/euclidean_new_losses_MNIST_to_TinyImageNet.pdf}
	    \caption{Vision: Radar plots -- Marathi Sign Language, TinyImageNet: 1) \textit{Model Performance} (F1, Accuracy); 2) \textit{Interpretability} (PC2 EV, PCA 90\%), and 3) \textit{Sustainability} (Duration/Epoch, Emissions). Plots feature Baseline (Cross-Entropy), Euclidean harmonic, and the four top-performing losses.}
	        \Description[Vision radar plots for Marathi Sign and TinyImageNet]{A set of radar (spider) charts summarizing results on Marathi Sign Language and TinyImageNet. Each radar plot includes axes for performance (F1 and accuracy), interpretability (explained-variance metrics from PCA), and sustainability (training duration per epoch and CO2 emissions). Multiple overlaid polygons compare the cross-entropy baseline, Euclidean harmonic loss, and top-performing loss variants, enabling visual comparison of trade-offs across these criteria.}
	        \label{fig:acc:spider:vision:new:losses:part:2}
	    \end{centering}
\end{figure*}
}

{
\clearpage
\newpage
\subsection{Vision: Radar Plots with Additional Losses (Marathi Sign Language, TinyImageNet)}
Figure~\ref{fig:acc:spider:vision:new:losses:part:2} extends the radar analysis to
higher–resolution benchmarks (Marathi Sign Language, TinyImageNet) and augments
the comparison set with strong loss baselines such as Focal Loss, ArcFace, Center
Loss, and Confidence Penalty.  % As before, each radar summarizes 
%\emph{Model Performance} (Accuracy, F1), \emph{Interpretability}
%(PC2~EV, PCA~90\%), and \emph{Sustainability} (Duration/Epoch/GFLOPs, Emissions).
%When a baseline loss does not appear in the legend for a given backbone, it means
%it failed to enter the top–8 methods, i.e., it was dominated by harmonic configurations
%on the combined criteria.

\textbf{RQ1: Model Performance (Accuracy, F1).}
On Marathi Sign, the added losses make MLP and CNN particularly competitive:
ArcFace and Focal occasionally obtain strong accuracy/F1, yet
non–Euclidean harmonic losses remain among the best methods.
For MLP, \textbf{Bray--Curtis (Normalized)} tracks or exceeds both
cross entropy and ArcFace while preserving a smooth performance profile.
For CNN, \textbf{Bray--Curtis (Normalized)} and \textbf{cosine (stable/unstable)}
consistently occupy the top accuracy/F1 slices; Focal and Confidence Penalty
are competitive but never clearly dominate.
On deeper backbones, the picture is even clearer: for
\textbf{ResNet50} and \textbf{PVT} on Marathi Sign,
all top–performing methods are harmonic losses, indicating that distance–based harmonic heads
outperform alternative losses outright in this regime.

On TinyImageNet, a harder and more fine–grained benchmark, a similar pattern
emerges.  For MLP, ArcFace and Center Loss join \textbf{Bray--Curtis (Normalized)}
and Euclidean in the top–performing losses, but Bray--Curtis remains competitive in accuracy while providing different geometric and sustainability properties.
For CNN, the strongest Focal and ArcFace variants reach high F1, yet
\textbf{Bray--Curtis (Normalized)} again sits near the performance frontier.
On \textbf{ResNet50}, the top–performing losses are \emph{entirely harmonic} (cosine, Minkowski,
Chebyshev, Euclidean), and on \textbf{PVT} TinyImageNet the leaders are dominated
by cosine and Bray--Curtis, with ArcFace appearing only as an alternative angular
baseline.  Overall, even in the presence of sophisticated angular–margin and
confidence–shaping losses, non–Euclidean harmonic heads remain on or very near
the performance Pareto frontier.

\textbf{RQ2: Interpretability (PC2~EV, PCA~90\%).}
The higher–resolution datasets accentuate differences in representation geometry.
On Marathi Sign, harmonic distances such as
\textbf{Bray--Curtis (Normalized/Absolute)}, \textbf{Chebyshev (Standard)},
and \textbf{Canberra/Hamming} for PVT yield the strongest PCA structure:
they maximize PC2 explained variance and minimize the number of components
required to reach $90\%$ EV, indicating compact, prototype–aligned embeddings.
ArcFace and Focal improve angular separation but generally do not achieve
the same variance concentration as the best harmonic distances.

TinyImageNet confirms this trend.  On MLP and CNN, Bray--Curtis and Chebyshev
produce markedly higher PC2~EV and lower PCA~90\% dimensionality than Euclidean
and most additional baselines, including Center Loss and Focal.  For ResNet50 and
PVT, \textbf{cosine} and \textbf{Bray--Curtis} continue to enlarge the PCA wedges
relative to Euclidean, whereas ArcFace’s contribution is mainly on performance
rather than on variance concentration.  Thus, across both Marathi Sign and
TinyImageNet, the most interpretable geometries are consistently induced by
non–Euclidean harmonic losses rather than by the newly added baselines.

\textbf{RQ3: Sustainability (Duration/Epoch/GFLOPs, Emissions).}
The sustainability axes show that richer loss design does not necessarily
translate into greener training.  On Marathi Sign MLP/CNN, several harmonic
distances (e.g., \textbf{Bray--Curtis (Normalized)}, \textbf{Chebyshev}) attain
\emph{equal or lower} normalized Duration/Epoch/GFLOPs and emissions than
cross–entropy and the added baselines; Focal and ArcFace occasionally incur
slightly higher emissions due to their sharper gradients and additional
computations.  For ResNet50 and PVT, where backbone FLOPs dominate, all harmonic
variants remain sustainability–competitive. % , and no additional loss enters the
%top–8.

On TinyImageNet, the pattern persists.  ArcFace and Focal may match harmonic
losses in accuracy, but they usually do so with similar or higher emissions.
Cosine and Bray--Curtis heads on ResNet50 and PVT often achieve comparable or
better emissions than Euclidean, while still improving representation structure.
Center Loss introduces modest overhead but does not surpass harmonic distances in
overall sustainability.

%\textbf{Summary.}
Across Marathi Sign and TinyImageNet, adding strong baselines such as ArcFace,
Focal, Center Loss, and Confidence Penalty \emph{does not displace} the
non–Euclidean harmonic losses from the top tier.  Whenever these baselines are
competitive in accuracy, harmonic distances typically offer superior
interpretability and comparable or lower emissions. %; on deeper backbones they
%are more often outperformed  do not enter the top–8 at all.  
This reinforces our central claim that distance–tailored harmonic heads provide a robust, geometry–aware alternative to contemporary loss designs, remaining competitive or superior across performance, structure, and sustainability, even on challenging high–resolution
vision benchmarks.
}

\subsection{Vision: Aggregated Emissions}
\label{sec:vision:sustainability}

Figure~\ref{fig:emissions:cumulative} reports the cumulative CO$_2$ emissions
for all vision experiments (MNIST, CIFAR-10/100, Marathi Sign, and
TinyImageNet), expressed as the difference in grams of CO$_2$ relative to the
cross–entropy baseline (total CE emissions $=650.49$\,gCO$_2$eq over 680
runs).  All methods lie within a band of roughly $\pm 8\%$ of this baseline,
showing that changing the distance in the harmonic head affects emissions in a
controlled rather than catastrophic.

\paragraph{Harmonic losses remain competitive or greener.}
The most sustainable region of the plot is dominated by non–Euclidean
harmonic losses.  In particular, \textbf{Bray--Curtis (Normalized)},
\textbf{Bray--Curtis (Absolute)}, \textbf{Canberra (Weighted)}, and
\textbf{Mahalanobis (Cholesky)} consistently achieve \emph{lower} cumulative
emissions than cross–entropy, even after adding the more demanding Marathi
Sign and TinyImageNet settings.  Cosine variants (\emph{Cosine (Stable)} and
\emph{Cosine (Unstable)}) and Euclidean harmonic loss sit very close to the
baseline, indicating that distance–based heads introduce essentially no
sustainability penalty while still improving accuracy and representation
structure.

\paragraph{Behavior of additional baselines.}
Among the newly added conventional baselines, \textbf{Label Smoothing} and
\textbf{Confidence Penalty} occupy the middle of the spectrum: their
emissions are comparable to, but generally not better than, those of the
best harmonic distances.  In contrast, more aggressive objectives such as
\textbf{Focal Loss} and large–margin \textbf{Center Loss} variants tend to
cluster on the higher–emission side, reflecting the extra computation and
slower convergence induced by power–scaled gradients and auxiliary center
updates.  ArcFace configurations behave similarly: moderate settings can be
near baseline, but high margin/scale choices increase emissions relative to
the most efficient harmonic distances.

Across all four datasets and backbones, the qualitative picture is stable.
Non–Euclidean harmonic losses provide some of the \emph{greenest} options,
often achieving lower or baseline–level emissions while simultaneously
improving accuracy and interpretability.  The main exception is
\textbf{Mahalanobis (Standard)}, which remains the least sustainable
configuration due to its covariance estimation cost—consistent with our
earlier observation that Mahalanobis emphasizes representation clarity at a
computational price.  Overall, the expanded analysis confirms that
distance choice in the harmonic head materially affects the carbon
footprint of training, and that carefully chosen non–Euclidean geometries
(e.g., Bray--Curtis, Canberra, cosine) offer a favorable trade-off between 
performance, interpretability, and sustainability compared to both
Euclidean harmonic loss and modern regularized baselines.

%\begin{wrapfigure}{r}{0.5\textwidth}
\begin{figure}[h!]
\begin{centering}
\includegraphics[width=0.48\textwidth,trim=0 20 7 50,clip]{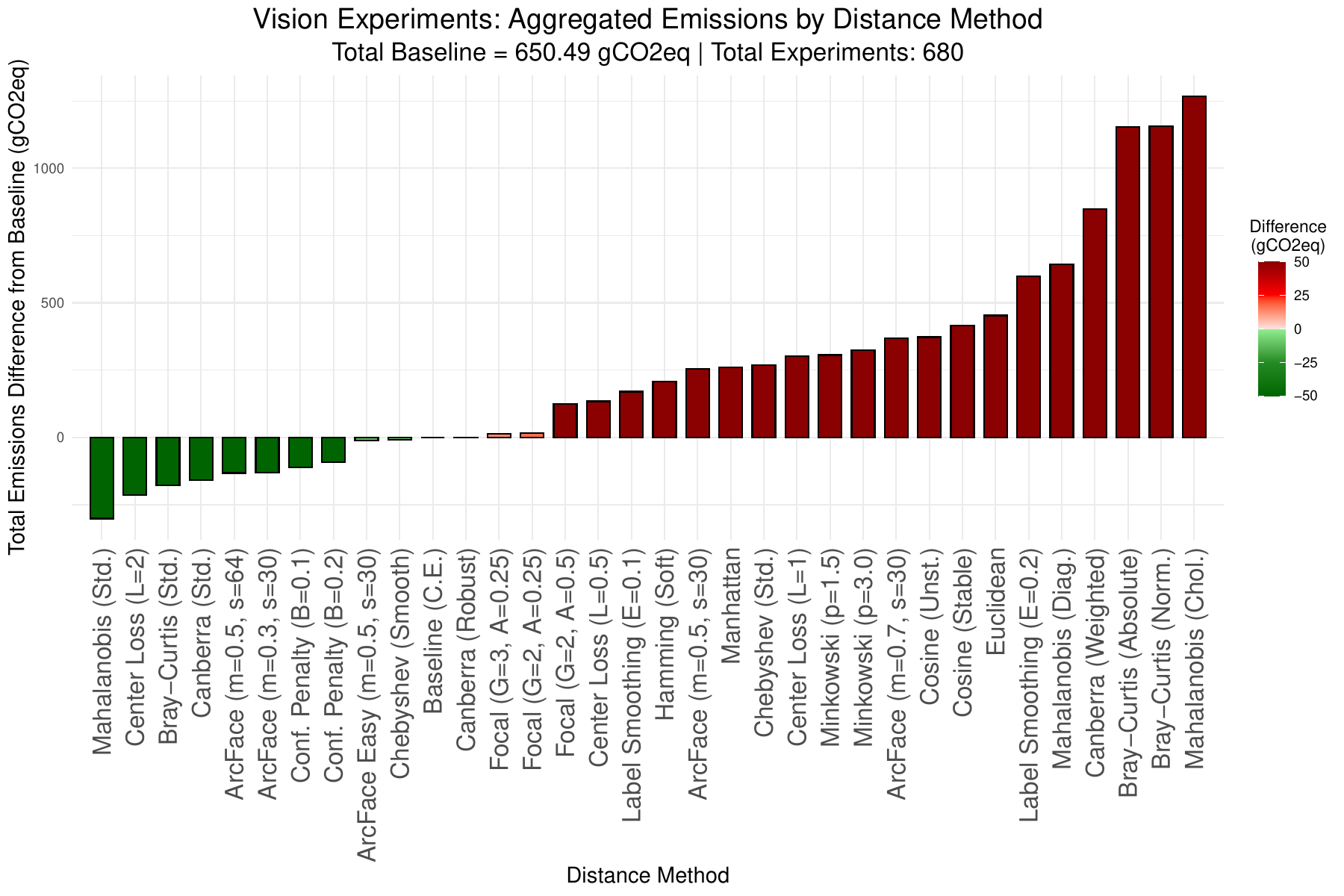}
\caption{Vision: Emissions Averaged Across Seeds and Aggregated Over all 12 Model Backbones.}
\Description[Aggregated vision emissions across losses]{A single summary plot aggregating carbon-emission differences for vision experiments across multiple datasets and 12 model backbones. The graphic groups results by loss function and shows how each distance-based harmonic loss variant changes estimated CO2 emissions relative to the cross-entropy baseline when averaged across random seeds. The overall purpose is to highlight which losses tend to be more carbon-efficient (negative deltas) or less efficient (positive deltas) when aggregated over many configurations.}
\label{fig:emissions:cumulative}
\end{centering}
\end{figure}
%\end{wrapfigure}
Figure~ \ref{fig:emissions:cumulative} reports cumulative emission \emph{differences} (gCO$_2$eq) for each custom loss function across all 12 model/dataset combinations.  (Total Baseline $=181.2$ gCO$_2$eq).
%
%We identify three patterns. 
%by distance across all 228 (without new loss) vision experiments (348 with new loss functions) 

\textbf{Lower-than-baseline emissions}: 
\emph{Mahalanobis (Standard)} shows the largest \emph{positive} delta, indicating consistently lower emissions; \emph{Bray--Curtis (Standard)} and \emph{Cosine (Unstable)} also sit on the positive side, with \emph{Canberra (Standard)} and \emph{Cosine (Stable)} slightly above zero. Euclidean and Manhattan are close to baseline. 
Of the new baseline loss functions introduced, Confidence Penalty performs on par with Cosine (Unstable) and the most efficiently compared to its counterparts. Almost all new losses are more efficient than Cross Entropy Loss, with varying degrees of success. Other distances are characterized by higher emissions, as shown by the red cluster. %distances 
%\textbf{High emissions cluster:} 
%\emph{Bray--Curtis (Normalized)} and \emph{Hamming (Soft)}  contribute a \emph{negative} aggregate delta (net savings vs.\ baseline). \emph{Canberra (Weighted)}, \emph{Mahalanobis (Diagonal)}, and the \emph{Minkowski} variants ($p{=}1.5,3.0$) also lean negative, while both \emph{Chebyshev} variants and \emph{Mahalanobis (Cholesky)} show some of the strongest cumulative reductions. 
%\textbf{iii) Takeaway.} 
Results reinforce that non-Euclidean harmonic losses can be more sustainable than their Euclidean counterpart, and that the choice of distance materially affects the carbon footprint of model training.  
%If emissions are the priority, \emph{Bray--Curtis (Normalized)}, \emph{Hamming (Soft)}, \emph{Minkowski}, and \emph{Chebyshev}/\emph{Mahalanobis (Cholesky)} are generally greener. % if accuracy or other criteria favor \emph{Cosine} or \emph{Bray--Curtis (Standard)}, the results suggest budgeting for a modest emissions premium relative to the cross-entropy baseline.

{

\section{Convergence analysis}
\label{sec:convergence}
% Figures~\ref{fig:convergence-loss-resnet-bray-canberra}--        \ref{fig:convergence-loss-resnet-manhattan-mahlanobis}

\textbf{Vision:}
Figure~\ref{fig:convergence-loss-all}
 reports the training and
validation loss trajectories for PVT and ResNet50 across all datasets
and all non-Euclidean harmonic losses.  
A key concern is whether distances that introduce
nontrivial geometric structure  such as cosine and Mahalanobis lead to unstable optimization or distorted convergence landscapes. Empirically, we observe no such issues.

Across MNIST, CIFAR-10, CIFAR-100, and Marathi Sign,  
all non--Euclidean harmonic losses exhibit smooth, monotonic
decrease in the training objective and stable validation trends, with no
oscillation, divergence, or gradient explosion.  
Even distances with stronger geometric bias (e.g., Mahalanobis, Chebyshev,
Bray-Curtis) converge at rates comparable to or faster than Euclidean harmonic
loss.  
Cosine variants in particular show the
\emph{fastest early descent}, followed by steady tightening of the validation curves, consistent with their
angularly flatter basins.

Notably, none of the distances introduce optimization barriers,
despite their differing curvature properties.  
Mahalanobis maintains stable descent even though anisotropic
curvature could, in principle, yield direction-dependent gradients.
Likewise, Canberra, Hamming, Manhattan, and Minkowski losses converge smoothly,
indicating that the harmonic formulation effectively normalizes distance
geometry into a well-conditioned optimization surface.

Overall, the loss curves demonstrate that the harmonic link function
absorbs geometric variability and translates heterogeneous distance
metrics into similarly well-behaved training dynamics.  
This provides experimental evidence that alternative geometries do not impair
convergence nor destabilize class separation boundaries.

\textbf{Language:} Figure~\ref{fig:convergence-llm} reports training/validation loss, training
accuracy, and (for GPT--2B) training and validation perplexity for
cross--entropy, Euclidean harmonic, and Minkowski ($p{=}2$) heads across
BERT--0.1B, GPT--0.1B, GPT--2B, and QWEN2--0.5B.
Across all architectures, the distance--based harmonic losses exhibit
smooth optimization dynamics: losses decrease
monotonically with no oscillatory or unstable regimes, and accuracy curves
increase steadily towards a plateau.

For BERT--0.1B, Euclidean and Minkowski harmonic losses reduce both
training and validation loss more quickly than cross--entropy and converge
to a lower plateau, while achieving higher final training accuracy.  GPT--0.1B
shows a similar pattern: all three heads converge, but the harmonic
variants reach a given accuracy earlier and with gently sloping curves,
indicating stable gradients.
For GPT--2B and QWEN2--0.5B, the three heads track each other closely in
both loss and accuracy, confirming that the change of geometry does not
impede convergence even at larger scale.
The validation loss curves mirror the training behaviour: no divergence or
late--stage degradation is observed for any harmonic configuration.

The GPT--2B perplexity panel further corroborates this picture.
Training and validation perplexity decrease rapidly and stabilize to
comparable levels for all heads; the harmonic variants sometimes achieve
slightly faster early reductions, but do not introduce pathological
behaviour.  Overall, these results show that replacing the linear
classifier with a distance--based harmonic head preserves, and in some cases marginally improves, the convergence properties
of standard cross--entropy while enabling the geometric and
interpretability benefits discussed in the main paper.

% NEW PLOT

\begin{figure*}[h!]
    \begin{centering}
	    \includegraphics[width=0.99\textwidth]{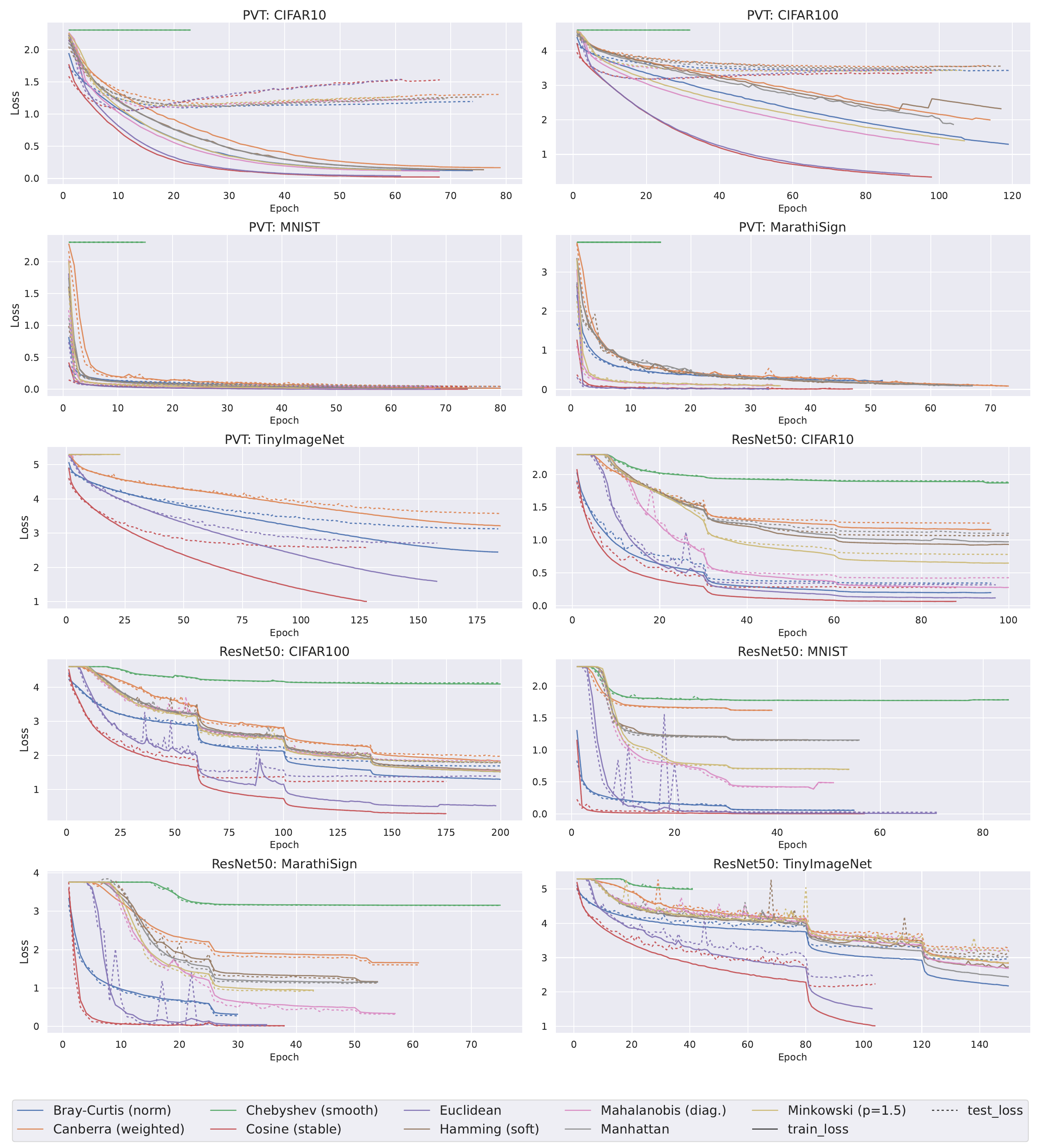}
	    \caption{Loss convergence behavior with PVT and ResNet50: Training and Validation loss across all datasets with different non-Euclidean harmonic losses.}
	        \Description[Vision-model loss convergence curves]{A multi-panel convergence figure showing training and validation loss trajectories for PVT and ResNet50 across several vision datasets. Each panel plots loss versus training progress and overlays curves for different non-Euclidean distance choices used in the harmonic loss. The figure is intended to show whether different distance metrics change convergence speed or stability, by comparing the shapes and smoothness of the loss curves across methods.}
	        \label{fig:convergence-loss-all}
	    \end{centering}
\end{figure*}

\begin{figure*}[h!]
    \begin{centering}
    \includegraphics[width=0.49\textwidth,page=1]{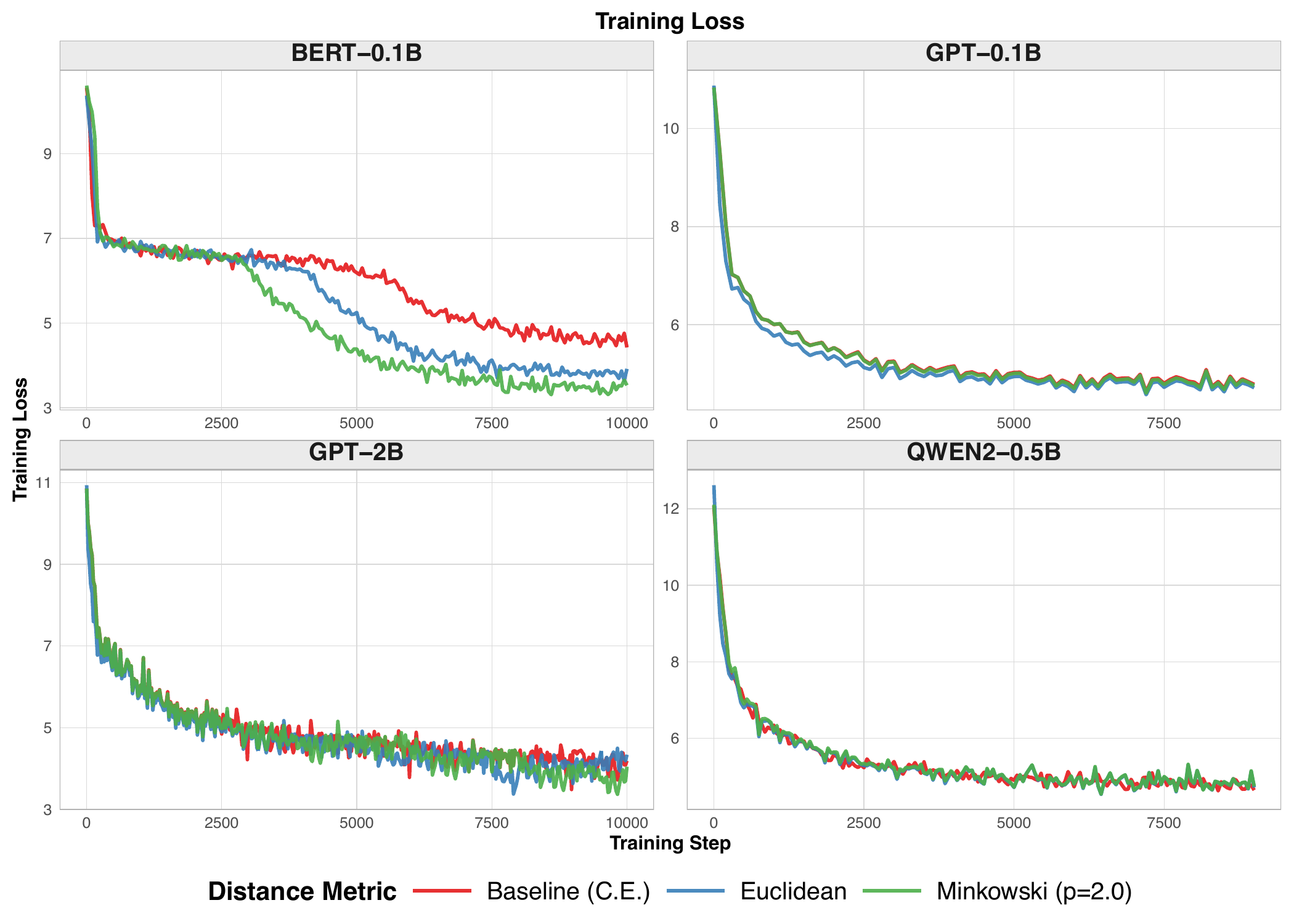}
	    \includegraphics[width=0.49\textwidth,page=2]{figures/LLMs/llm_distance_comparison.pdf}
	    \includegraphics[width=0.49\textwidth,page=3]{figures/LLMs/llm_distance_comparison.pdf}
	    \includegraphics[width=0.49\textwidth,trim=5 5 5 10,clip]{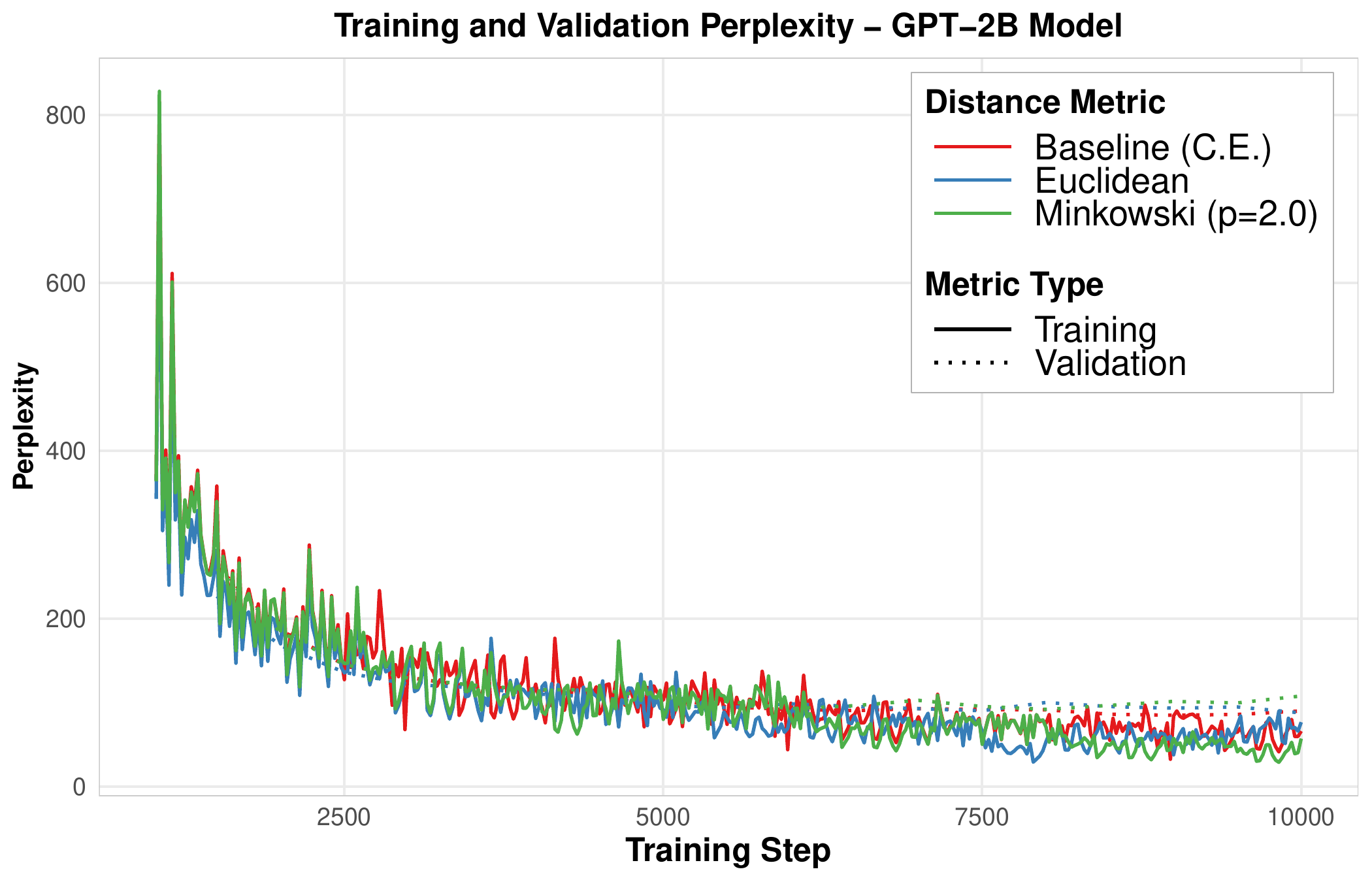}
	    \caption{Loss convergence behavior with language models (BERT-0.1B, GPT-0.1B, QWEN2-0.5B, GPT-2B).}
	        \Description[Language-model convergence curves]{A collection of convergence plots for language models, comparing cross-entropy to distance-based harmonic loss variants. Separate panels show training and validation trajectories (loss or perplexity) over training progress for multiple model families (BERT, GPT, Qwen, and GPT-2B). The panels overlay curves for different distance choices, allowing readers to visually assess convergence stability and relative rates across objectives.}
	        \label{fig:convergence-llm}
	    \end{centering}
\end{figure*}

}

\clearpage
\newpage
{
\section{Grokking analysis: Modulo Addition}

Figure~\ref{fig:circle} summarizes the behavior of standard MLPs and H--MLPs trained on the synthetic modulo--addition task, a setting known to exhibit pronounced grokking effects under cross--entropy. The top two rows illustrate training curves and corresponding 2D embeddings for baseline cross--entropy models (first two columns) and Euclidean harmonic loss (third and fourth columns). The remaining rows extend this comparison to alternative non--Euclidean harmonic losses.

\paragraph{Cross--entropy exhibits clear grokking.}
For both the standard MLP and its lightly regularized variant, cross--entropy produces the characteristic grokking pattern: training accuracy rapidly converges while test accuracy improves only after a long delay. This decoupling is consistent with prior observations in algorithmic tasks, where cross--entropy tends to overfit memorization pathways before discovering the true modular arithmetic structure. The PCA plots confirm this: the learned embeddings under cross--entropy have diffuse, irregular geometry, and the first two principal components explain only a small fraction of the variance (EV $\approx$ 20--30\%).

\paragraph{Euclidean harmonic loss eliminates grokking and induces a perfect geometric structure.}
In contrast, the Euclidean harmonic model reaches high train \emph{and} test accuracy simultaneously. No grokking delay is observed. The PCA projection reveals a striking property: the latent representation forms a \emph{perfect 2D circle}, and the first two principal components explain nearly all variance (EV $\approx 100\%$). This matches theoretical expectations for harmonic distance--based classification on cyclic group structure: the model learns an isometric embedding of $\mathbb{Z}_n$ into the plane, validating the geometric alignment induced by harmonic objectives.

\paragraph{Other distance--based harmonic losses replicate the circle structure with similarly fast generalization.}
The bottom rows show that this desirable behavior is \emph{not} unique to Euclidean distance. Cosine, stable cosine, Manhattan (1--norm), several Canberra variants, Hamming losses, Minkowski $p=3$, Chebyshev, and others all produce the same qualitative outcome:

\begin{itemize}
    \item \textbf{Immediate or near--immediate generalization}, with no grokking phase.
    \item \textbf{Highly structured 2D embeddings}, often forming a near--perfect circle.
    \item \textbf{Explained variance approaching 100\%}, indicating strong alignment to a low--dimensional manifold reflecting the algebraic symmetry of the task.
\end{itemize}

Some distances (e.g., Hamming and Chebyshev) produce slightly rotated or warped circles, but the essential geometric structure and variance concentration remain intact. This demonstrates that harmonic losses robustly recover the underlying modular arithmetic structure \emph{regardless of the distance family}.

\paragraph{Harmonic losses reduce grokking compared to cross--entropy.}
Across all non--Euclidean distances tested, harmonic losses exhibit two consistent advantages over cross--entropy:

\begin{enumerate}
    \item \textbf{Reduced grokking or complete elimination of delayed generalization}.  
    Training and test accuracy rise together, indicating that the model discovers the algorithmic rule rather than memorizing individual cases.
    \item \textbf{Improved interpretability via stable geometric structure}.  
    The emergence of a low--dimensional circular manifold with EV close to 1.0 serves as a quantitative and visual certificate of representation clarity.
\end{enumerate}

These results reinforce the core claims of the paper: harmonic losses promote structured, prototype--aligned representations and smoother, more reliable optimization dynamics, even on tasks where cross--entropy typically groks. The fact that many distances achieve EV $\approx 100\%$ highlights that the benefits of harmonic classification do not depend on Euclidean geometry alone, but arise from the broader class of distance--based harmonic objectives.
}

\begin{figure*}[h!]
	    \begin{centering}
\includegraphics[page=1,width=0.995\textwidth,trim=400 2710 250 0,clip]{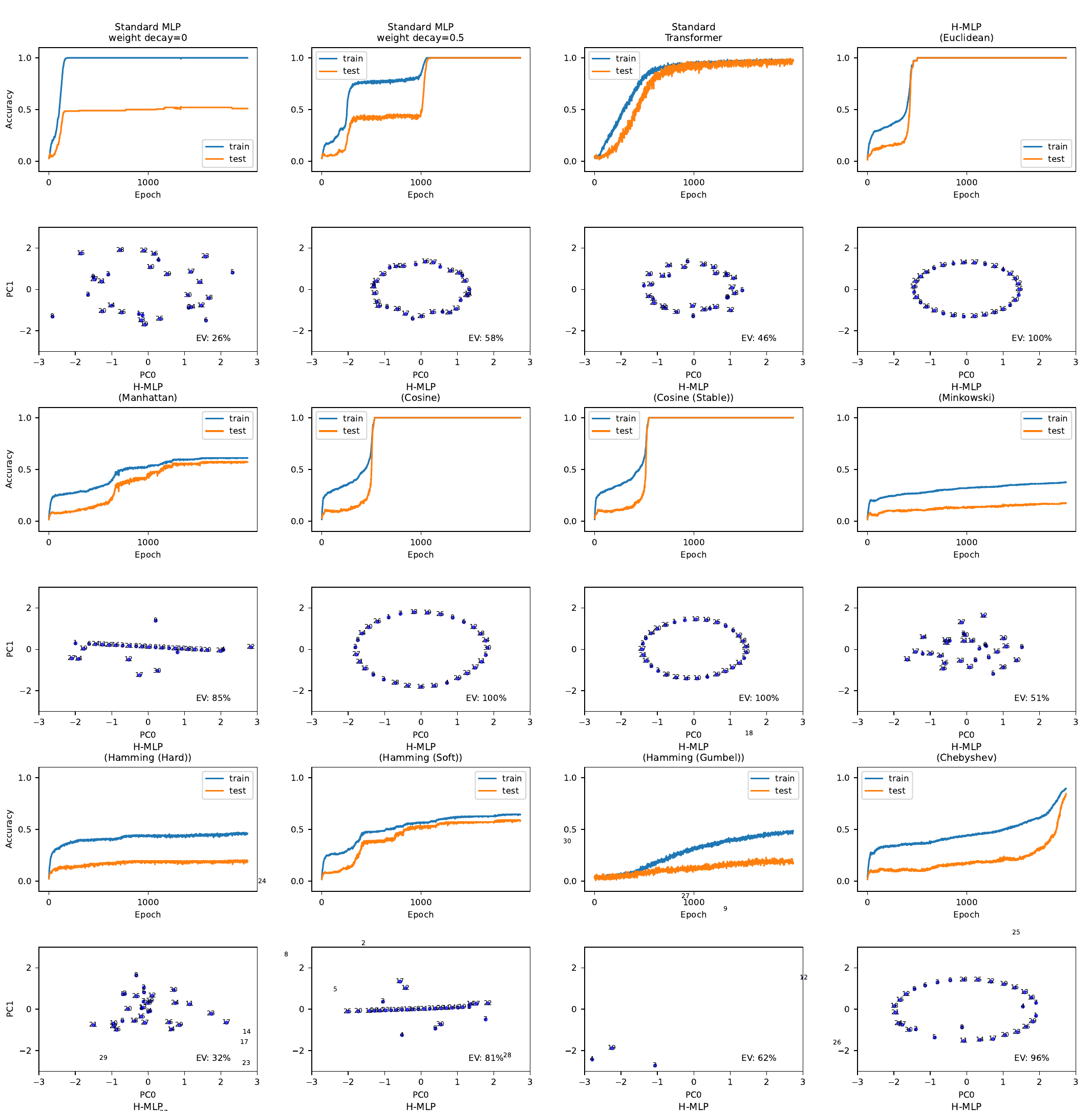}
	  \caption{Results on standard MLP trained for modular addition. 
	  %Generalization is only achieved with the addition of strong weight decay; however, (a) significant grokking occurs, and (b) while the first two principal components form an approximate circle, they explain far less than the total variance. In contrast, 
	  The harmonic model trained for modular addition generalizes quickly without grokking. Moreover, the embedding forms a perfect 2D circle. EV in the plot represents the explained variance by the first two principal components of the embedding.}
	  \Description[Modular-addition case study: grokking vs harmonic loss]{A multi-panel case study on the modular addition (toy) task using an MLP. The figure contrasts standard training behavior with a harmonic-loss setup by showing learning dynamics (training versus test performance over time) and a two-dimensional PCA visualization of the learned embeddings. In the harmonic-loss case, the plot highlights rapid generalization without a delayed “grokking” phase and an embedding that lies on a near-perfect circle in the top two principal components. The annotation “EV” reports the explained variance of the first two PCA components.}
	    \label{fig:circle}
	    \end{centering}
\end{figure*}

\section{Computational complexity: FLOPS}

% \begin{table}[t]
% \centering
% \caption{Approximate FLOPs per forward pass (in GFLOPs) for each backbone and dataset. 
% Values are computed from the convolutional/transformer backbone only; the choice of
% harmonic distance or alternative loss affects the last layer and has negligible impact on FLOPs.}
% \label{tab:flops}
% \setlength{\tabcolsep}{4pt}
% \begin{tabular}{lccccc}
% \toprule
% \textbf{Backbone} & \textbf{CIFAR-10} & \textbf{CIFAR-100} & \textbf{MarathiSign} & \textbf{MNIST} & \textbf{TinyImageNet} \\
% \midrule
% CNN      & 0.0123 & 0.0123 & 0.0123 & 0.0085 & 0.6030 \\
% MLP      & 0.0034 & 0.0035 & 0.0034 & 0.0011 & 0.3100 \\
% PVT      & 0.0383 & 0.0383 & 0.0383 & 0.0383 & 1.9000 \\
% ResNet50 & 0.0796 & 0.0800 & 4.0961 & 0.0600 & 4.0967 \\
% \bottomrule
% \end{tabular}
% \end{table}

%\paragraph{FLOPs and architectural cost.}
Table~\ref{tab:flops} reports the approximate floating–point operations per forward pass
for each backbone–dataset.
On $32{\times}32$ inputs (MNIST resized, CIFAR-10/100, MarathiSign), the FLOP hierarchy
is consistent: MLP is cheapest (${<}0.004$\,GFLOPs), CNN roughly $3{\times}$ more
expensive ($\approx 0.012$\,GFLOPs), PVT adds another ${\sim}3\times$ ($\approx
0.038$\,GFLOPs), and ResNet50 is about $2\times$ PVT ($\approx 0.08$\,GFLOPs).
Moving to high–resolution inputs ($224{\times}224$ for TinyImageNet and our
high–resolution MarathiSign runs) increases cost by two orders of magnitude:
PVT rises to ${\sim}1.9$\,GFLOPs and ResNet50 to ${\sim}4.1$\,GFLOPs per forward pass.
These numbers highlight that i) sustainability differences across \emph{architectures}
are dominated by backbone FLOPs, while ii) swapping Euclidean harmonic loss for
alternative distances or baselines changes only the final classifier head, adding an
$O(Cd)$ cost that is negligible compared to the convolutional / transformer body.
Consequently, the per–step FLOP budget is effectively distance–invariant, and
our sustainability comparisons across losses can be interpreted as differences in
optimization dynamics (steps-to-target, stability) rather than raw arithmetic cost.

\begin{table*}[htbp]
\centering
\setlength{\tabcolsep}{4pt}
%\small
\caption{Per-sample FLOPs, GFLOPs, and parameter counts for each backbone and dataset.}
\label{tab:flops-normalization}
\begin{tabular}{l l r r r r r r r}
\toprule
\textbf{Model} & \textbf{Dataset} & \textbf{In Ch.} & \textbf{H} & \textbf{W} &
\textbf{\#Cls} & \textbf{FLOPs} & \textbf{Params} & \textbf{GFLOPs} \\
\midrule
CNN      & CIFAR10      & 3 &  32 &  32 &  10 &   12307072  &   545098  & 0.0123 \\
CNN      & CIFAR100     & 3 &  32 &  32 & 100 &   12330112  &   556708  & 0.0123 \\
CNN      & MarathiSign  & 3 &  32 &  32 &  43 &   12315520  &   549355  & 0.0123 \\
CNN      & MNIST        & 1 &  28 &  28 &  10 &    8520064  &   421642  & 0.0085 \\
CNN      & TinyImageNet & 3 & 224 & 224 & 200 &  602966144  & 25735432  & 0.6029 \\
\midrule
MLP      & CIFAR10      & 3 &  32 &  32 &  10 &   3413760   &  1707274  & 0.0034 \\
MLP      & CIFAR100     & 3 &  32 &  32 & 100 &   3459840   &  1730404  & 0.0034 \\
MLP      & MarathiSign  & 3 &  32 &  32 &  43 &   3430656   &  1715755  & 0.0034 \\
MLP      & MNIST        & 1 &  28 &  28 &  10 &   1070848   &   535818  & 0.0010 \\
MLP      & TinyImageNet & 3 & 224 & 224 & 200 & 309536256   & 154769096 & 0.3095 \\
\midrule
PVT      & CIFAR10      & 3 &  32 &  32 &  10 &  38268630   & 12746560  & 0.0382 \\
PVT      & CIFAR100     & 3 &  32 &  32 & 100 &  38314710   & 12769600  & 0.0383 \\
PVT      & MarathiSign  & 3 &  32 &  32 &  43 &  38285526   & 12755008  & 0.0382 \\
PVT      & MNIST        & 3 &  32 &  32 &  10 &  38268630   & 12746560  & 0.0382 \\
PVT      & TinyImageNet & 3 & 224 & 224 & 200 & 1899590400 & 12795200  & 1.8995 \\
\midrule
ResNet50 & CIFAR10      & 3 &  32 &  32 &  10 &   79618429  & 23472480  & 0.0796 \\
ResNet50 & CIFAR100     & 3 &  32 &  32 & 100 &   79987069  & 23656800  & 0.0799 \\
ResNet50 & MarathiSign  & 3 & 224 & 224 &  43 & 4096080128  & 23540064  & 4.0960 \\
ResNet50 & MNIST        & 1 &  28 &  28 &  10 &  60007460   & 23472480  & 0.0600 \\
ResNet50 & TinyImageNet & 3 & 224 & 224 & 200 & 4096723200  & 23861600  & 4.0967 \\
\bottomrule
\end{tabular}
\label{tab:flops}
\end{table*}

\clearpage
\newpage
{
\section{Geometric Insights}

%\textbf{MNIST + ResNet50:}
To better illustrate how different harmonic distances shape the embedding geometry, we visualize the last–layer representations of ResNet50 on MNIST (see Figure \ref{fig:geometric-insights-mnist}) and CIFAR10 (see Figure \ref{fig:geometric-insights-cifar10}) using 2D PCA, with class prototypes overlaid as markers.
For the Euclidean harmonic head, the class clusters are roughly spherical and separated by (approximately) straight boundaries in the projection: decision regions are controlled mainly by radial distance to each prototype, yielding isotropic attraction basins around each center.

Under Cosine harmonic loss, the picture changes markedly. Features and prototypes concentrate on (or very near to) a common hypersphere, so the PCA plot shows clusters arranged along a circle. Classes are separated primarily by their angle rather than their norm, and decision boundaries correspond to angular bisectors between prototypes. This matches our geometric claim that cosine harmonic removes radial curvature and constrains optimization to an angular manifold: as training proceeds, points slide along the sphere towards their prototype, producing wide, smooth basins and stable gradient norms.

By contrast, Mahalanobis harmonic loss induces anisotropic curvature. After whitening by $\Sigma^{-1/2}$, the decision boundaries are linear, but in the original feature space they correspond to ellipsoidal contours. In the PCA plots this appears as elongated clusters and distorted attraction basins around prototypes, with some directions exhibiting much tighter concentration than others. When the empirical covariance is well–conditioned this yields very sharp, well–separated clusters (high variance concentration), but when eigenvalues are highly unbalanced the same anisotropy can make optimization more sensitive to particular directions.

% Overall, these visualizations support our geometric narrative: cosine harmonic leads to near–spherical (angular) class structure with smooth basins, whereas Mahalanobis introduces directional stretching that sharpens separation but can amplify curvature along poorly conditioned directions.

\begin{figure*}[h!]
    \begin{centering}
    \includegraphics[width=0.49\textwidth,trim=5 5 5 37,clip]{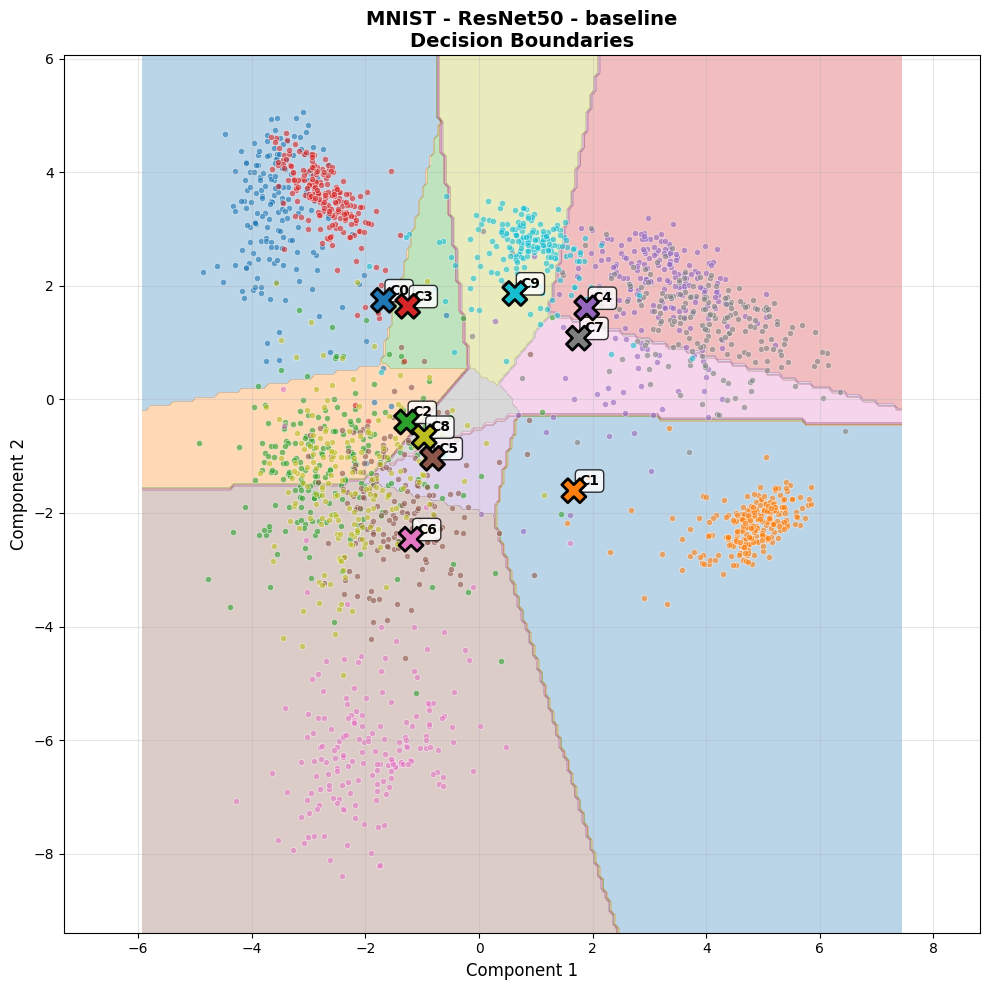}
 %  \ \ \ (a) \ \ \ \ \ \ \ \ \ \ \ \ \ \ \ \ \ \ \ \ \ \ \ \ \ \ \ \ \ \ \ \ \ \ \ \ \ \ \ \ \ \ \ \ \ \ \ \ \ \ \ \ \ \ \ \ \ \ \ \ \ \ \ \ \ \ \ \ \ \ \ \ \ \ \  (b) \\
    %
%    \includegraphics[width=0.49\textwidth,trim=5 5 5 37,clip]{figures/Geometric/MNIST_ResNet50_comparison/MNIST_ResNet50_euclidean/scatter_pca.png}
    \includegraphics[width=0.49\textwidth,trim=5 5 5 37,clip]{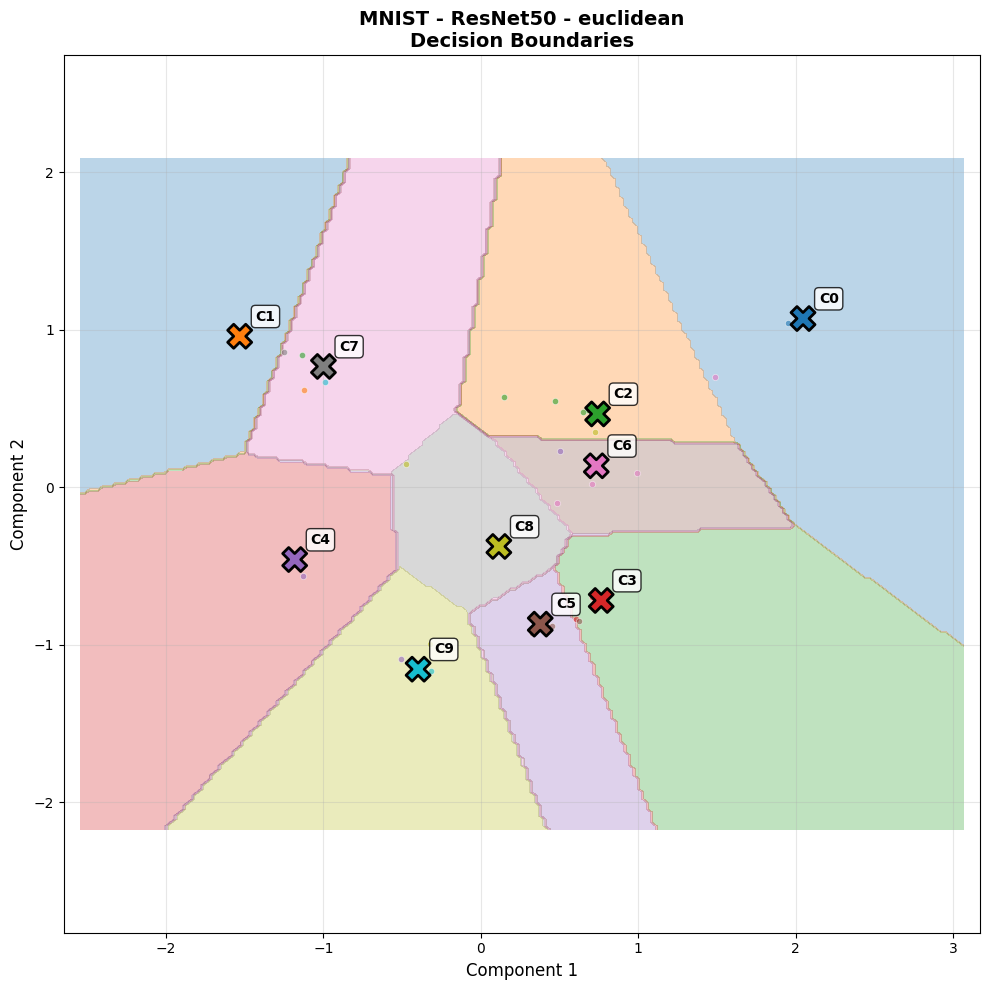}   
  \ \ \ \ \ \ (a) \ \ \ \ \ \ \ \ \ \ \ \ \ \ \ \ \ \ \ \ \ \ \ \ \ \ \ \ \ \ \ \ \ \ \ \ \ \ \ \ \ \ \ \ \ \ \ \ \ \ \ \ \ \ \ \ \ \ \ \ \ \ \ \ \ \ \ \ \ \ \ \ \ \ \  (b) \\
    \includegraphics[width=0.49\textwidth,trim=5 5 5 37,clip]{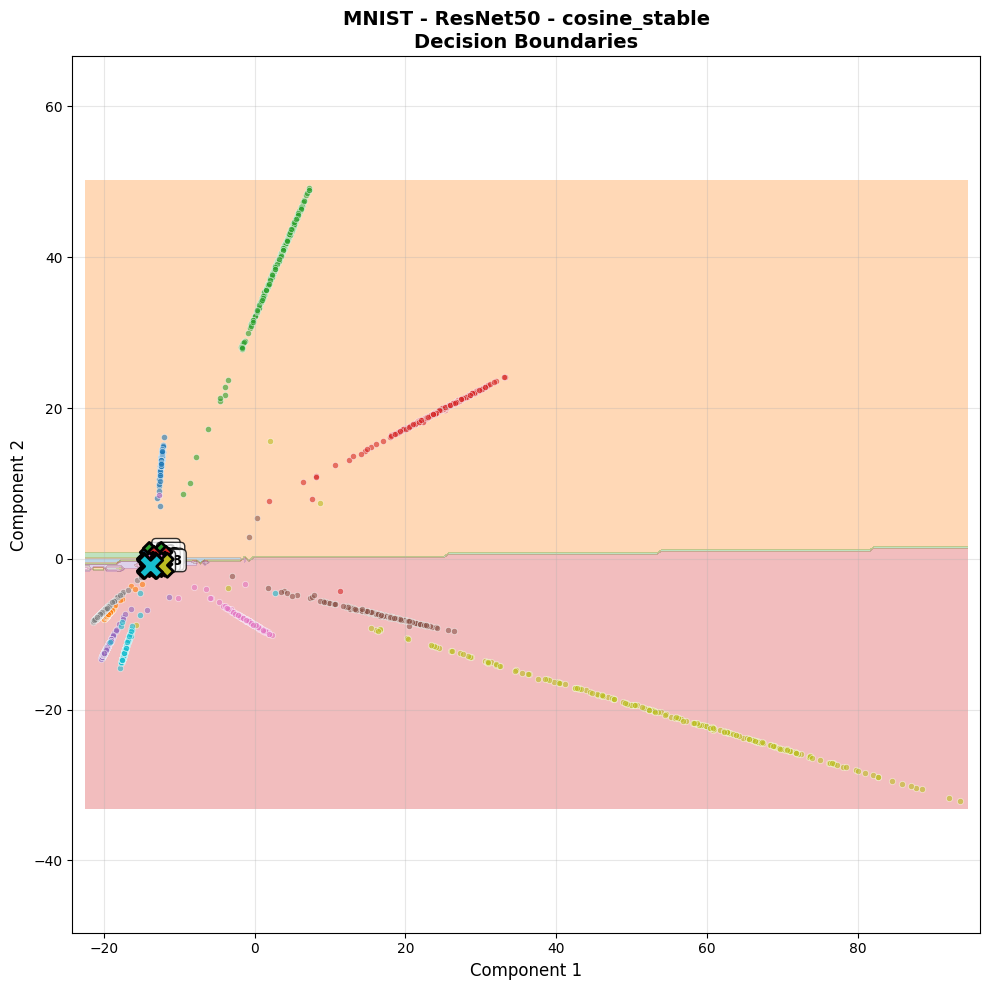}   
    \includegraphics[width=0.49\textwidth,trim=5 5 5 37,clip]{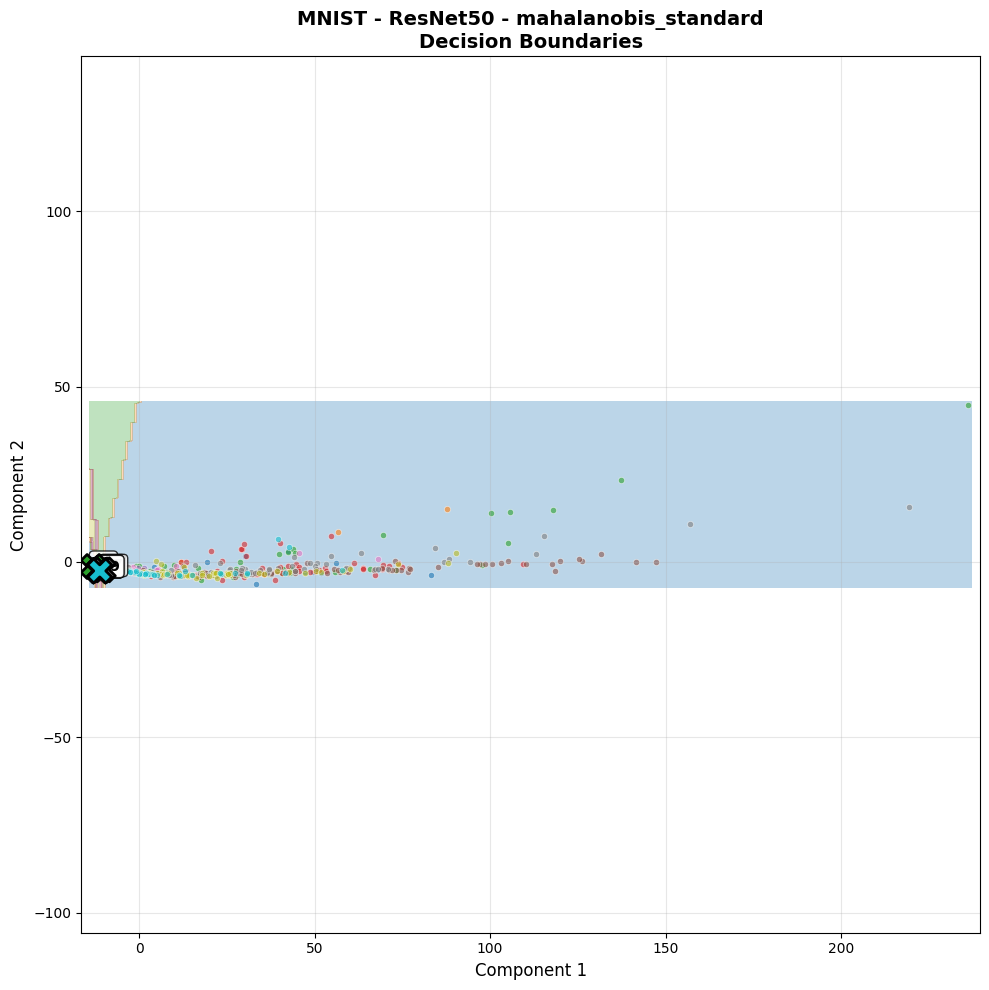}   
   \ \ \ \ \ \ \ \ \ (c) \ \ \ \ \ \ \ \ \ \ \ \ \ \ \ \ \ \ \ \ \ \ \ \ \ \ \ \ \ \ \ \ \ \ \ \ \ \ \ \ \ \ \ \ \ \ \ \ \ \ \ \ \ \ \ \ \ \ \ \ \ \ \ \ \ \ \ \ \ \ \ \ \ \ \  (d) \\
	    \caption{Geometric effect of distance--based harmonic losses on ResNet50 embeddings (MNIST). From top to bottom: Baseline (a), Euclidean harmonic loss (b),  cosine harmonic loss (c), and Mahalanobis harmonic loss (d). 
%Cross--entropy produces irregular, radius--dependent clusters with uneven norms, whereas Euclidean harmonic loss already regularizes class means toward prototype directions. 
%Cosine harmonic loss further collapses the representation onto an approximately spherical manifold, where classes are separated primarily by angle and clusters drift smoothly toward their prototypes, illustrating the ``angular'' geometry and stable convergence behaviour. 
%Mahalanobis harmonic loss induces anisotropic curvature: clusters become elongated along a few dominant directions determined by $\Sigma^{-1}$, yielding strong variance concentration but more directional sensitivity. 
%Together, these plots empirically confirm that changing the distance in the harmonic head alters the curvature of the embedding space \emph{without} introducing pathological optimization issues: all variants converge smoothly, while non--Euclidean distances produce more structured and interpretable decision geometries.
}
	    \Description[PCA embedding plots for MNIST under different losses]{Four panels visualizing the 2D PCA projection of ResNet50 penultimate-layer embeddings on MNIST, with decision regions and class prototypes overlaid. Panels (a) through (d) correspond to: a baseline classifier, Euclidean harmonic loss, cosine harmonic loss, and Mahalanobis harmonic loss. Across panels, the points form class clusters in the PCA plane and the background/contours indicate how the classifier partitions the space, illustrating how changing the distance metric in the harmonic head alters cluster shape and decision geometry.}
	        \label{fig:geometric-insights-mnist}
	    \end{centering}
\end{figure*}

\begin{figure*}[h!]
    \begin{centering}
    \includegraphics[width=0.49\textwidth,trim=5 5 5 37,clip]{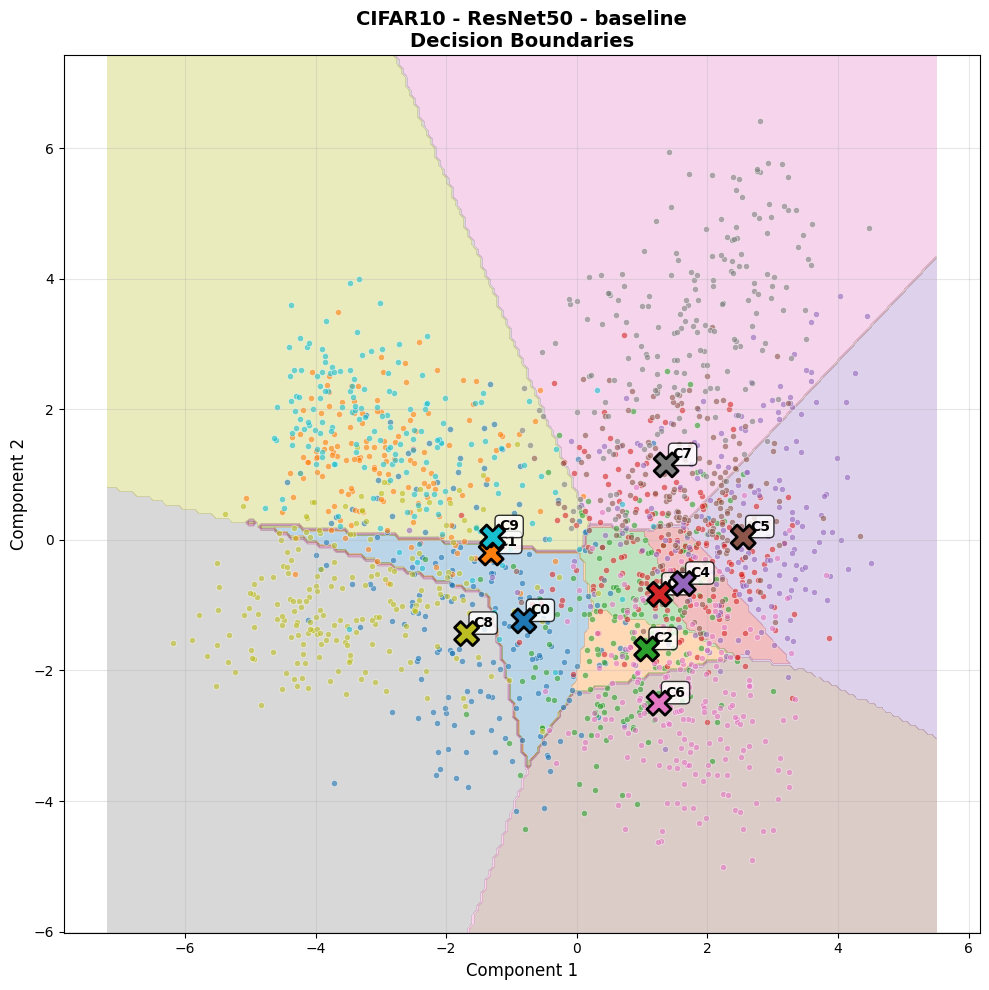}
 %  \ \ \ (a) \ \ \ \ \ \ \ \ \ \ \ \ \ \ \ \ \ \ \ \ \ \ \ \ \ \ \ \ \ \ \ \ \ \ \ \ \ \ \ \ \ \ \ \ \ \ \ \ \ \ \ \ \ \ \ \ \ \ \ \ \ \ \ \ \ \ \ \ \ \ \ \ \ \ \  (b) \\
    %
    \includegraphics[width=0.49\textwidth,trim=5 5 5 37,clip]{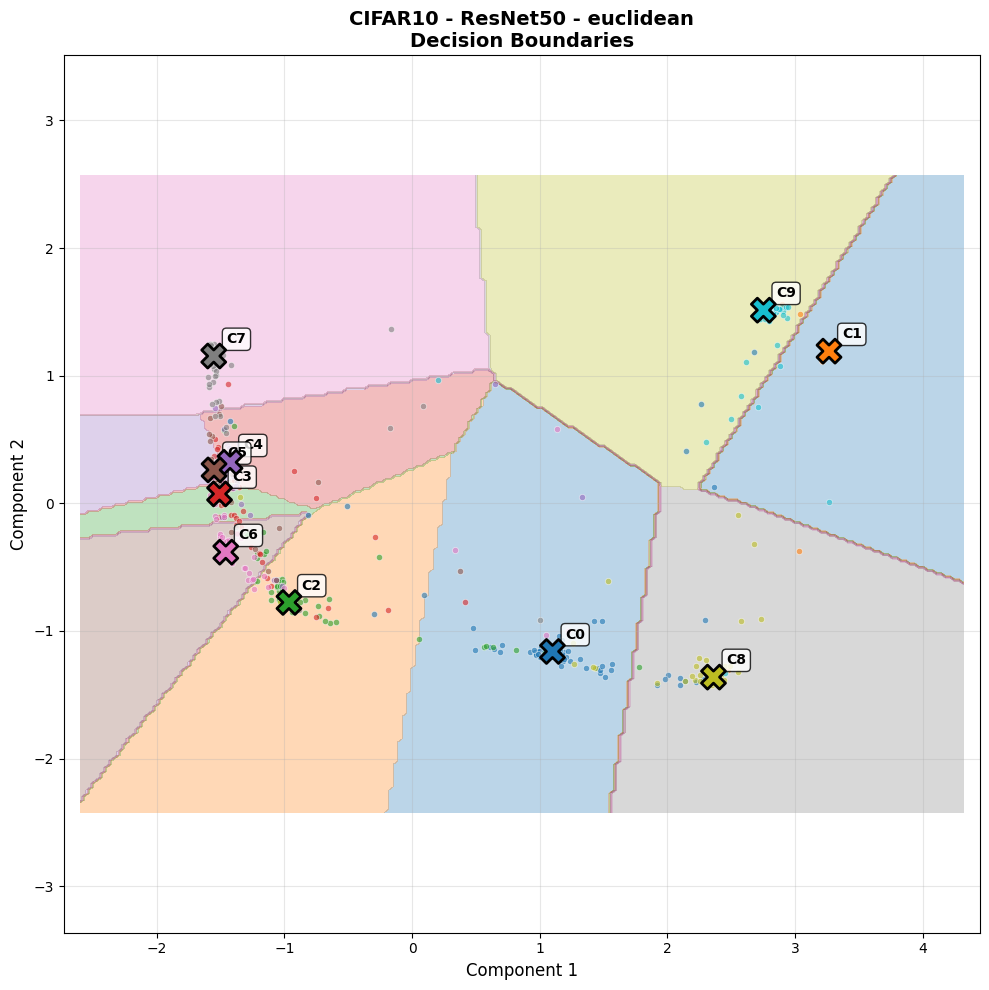}   
  \ \ \ \ \ \ (a) \ \ \ \ \ \ \ \ \ \ \ \ \ \ \ \ \ \ \ \ \ \ \ \ \ \ \ \ \ \ \ \ \ \ \ \ \ \ \ \ \ \ \ \ \ \ \ \ \ \ \ \ \ \ \ \ \ \ \ \ \ \ \ \ \ \ \ \ \ \ \ \ \ \ \  (b) \\
	    \includegraphics[width=0.49\textwidth,trim=5 5 5 37,clip]{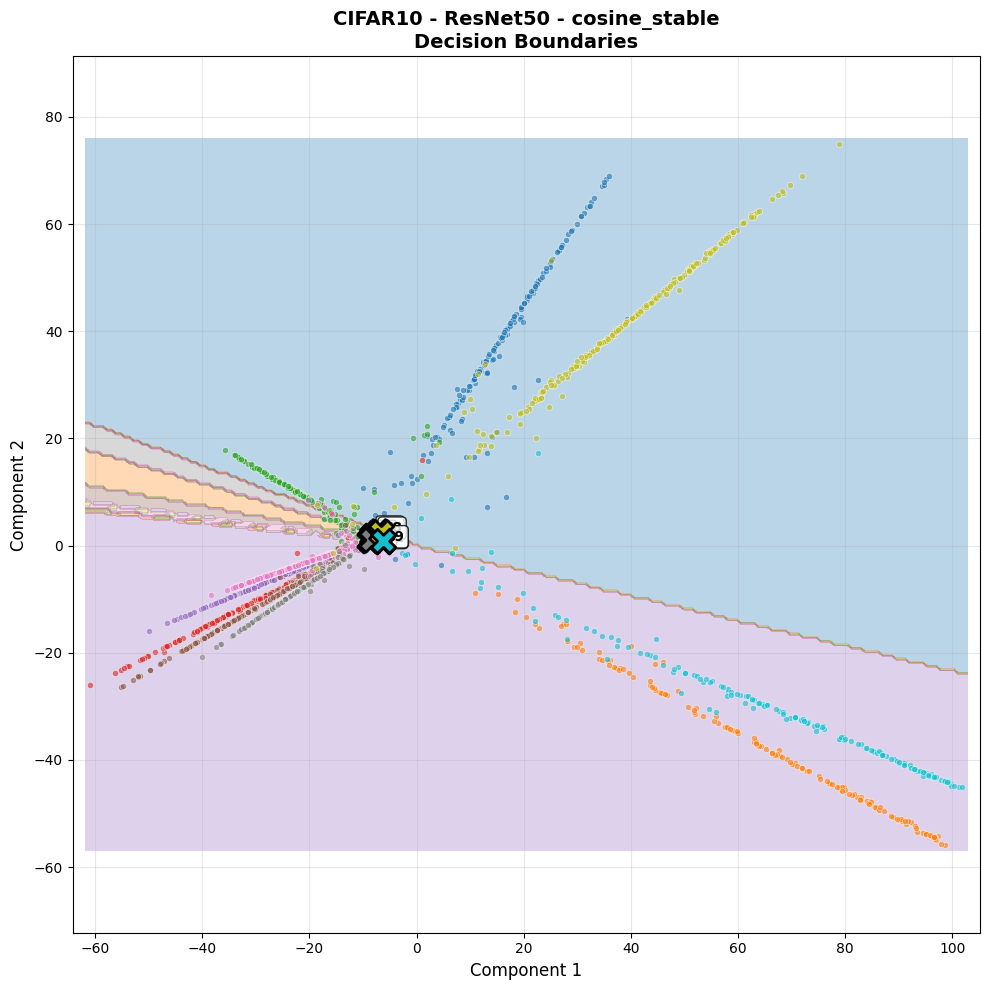}       \includegraphics[width=0.49\textwidth,trim=5 5 5 37,clip]{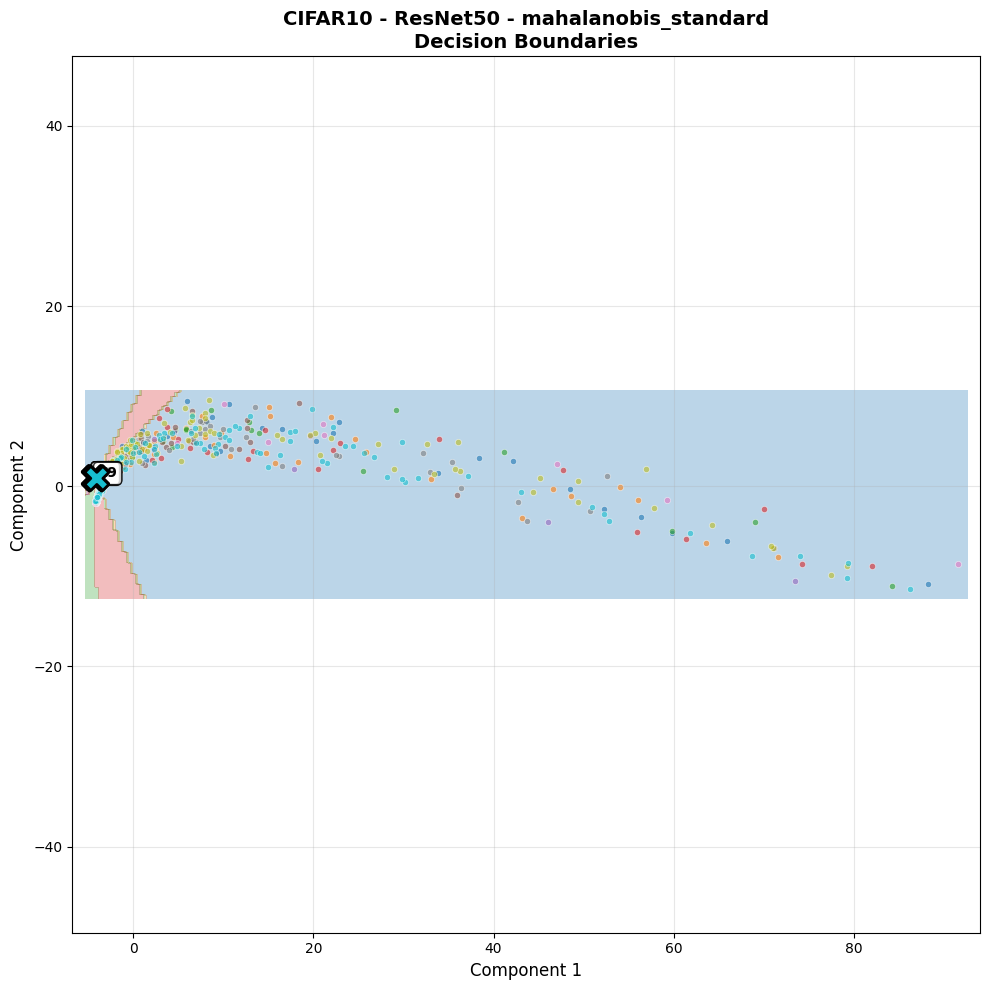}   
	   \ \ \ \ \ \ \ \ \ (c) \ \ \ \ \ \ \ \ \ \ \ \ \ \ \ \ \ \ \ \ \ \ \ \ \ \ \ \ \ \ \ \ \ \ \ \ \ \ \ \ \ \ \ \ \ \ \ \ \ \ \ \ \ \ \ \ \ \ \ \ \ \ \ \ \ \ \ \ \ \ \ \ \ \ \  (d) \\
	    \caption{Geometric effect of distance--based harmonic losses on ResNet50 embeddings (CIFAR10). From top to bottom: Baseline (a), Euclidean harmonic loss (b),  cosine harmonic loss (c), and Mahalanobis harmonic loss (d).
}
	    \Description[PCA embedding plots for CIFAR-10 under different losses]{Four panels visualizing a 2D PCA projection of ResNet50 embeddings on CIFAR-10, including the induced decision regions and class prototypes. Panels (a)–(d) show the baseline classifier and three harmonic-loss variants using different distances (Euclidean, cosine, and Mahalanobis). The figure highlights how the embedding clusters and the separating boundaries change as the distance metric in the harmonic head changes.}
	        \label{fig:geometric-insights-cifar10}
	    \end{centering}
\end{figure*}

\clearpage
\newpage

}

\newpage
\section{Additional results}
{
\color{red}
}

\subsection{Vision: Tables}
The empirical evaluation of non-Euclidean harmonic losses across MNIST, CIFAR-10, and CIFAR-100 with MLP, CNN, and ResNet50 backbones reveals several consistent patterns.

\textbf{Model Performance.}
Cosine distance emerges as the most reliable performer across architectures and datasets. In both stable and unstable variants, cosine harmonic loss consistently improves test accuracy and F1 relative to Euclidean, with gains most pronounced in deeper models (CNNs and ResNets) and in medium-complexity datasets such as CIFAR-10. Bray--Curtis offers modest gains in certain contexts but is less consistent, while Mahalanobis can improve accuracy on simple datasets (e.g., MNIST) but often lags behind cosine in more challenging regimes. Euclidean harmonic loss, while better than cross-entropy in terms of stability, is consistently outperformed by cosine-based alternatives.

\textbf{Interpretability.}
Distances strongly reshape the geometry of the learned representations. Cosine and Bray--Curtis often yield large improvements in explained variance (EV), indicating more compact feature spaces aligned with class prototypes. Mahalanobis produces the most dramatic gains in EV, frequently approaching full variance explanation, but this comes at the cost of stability and efficiency. Prototype coverage (PC90\%) tends to shrink under cosine and Mahalanobis, highlighting sharper clustering effects: models assign fewer prototypes to cover 90\% of variance, making the representation space more interpretable but less evenly distributed.

\textbf{Sustainability.}
Sustainability outcomes mirror performance trends. Cosine distances typically reduce carbon emissions relative to Euclidean, in some cases by up to 40\%, making them both effective and energy-efficient. Bray--Curtis shows mixed results, with occasional emission savings but less consistent behavior. Mahalanobis tends to incur higher emissions, reflecting the computational overhead of covariance estimation and matrix operations. Shallow architectures (MLPs) show less differentiation across distances in emissions, while deeper backbones amplify both the benefits (cosine) and costs (Mahalanobis).

\textbf{Trade-offs.}
Taken together, the results confirm that distance choice is not neutral in harmonic loss. Cosine provides the most favorable balance across performance, interpretability, and sustainability, representing the strongest general-purpose alternative to Euclidean. Bray--Curtis occupies a middle ground, offering interpretability benefits without always delivering accuracy or efficiency gains. Mahalanobis maximizes interpretability at a clear sustainability cost, making it attractive primarily when prototype clarity outweighs computational expense. Euclidean serves as a stable but suboptimal baseline. 

\textbf{Conclusion.}
This systematic study establishes that non-Euclidean harmonic losses provide a flexible and effective design space. In particular, cosine distance offers a compelling replacement for cross-entropy and Euclidean harmonic loss in vision tasks, consistently improving accuracy, interpretability, and sustainability. These findings position distance-tailored harmonic losses as a promising avenue for advancing deep learning models that are not only accurate but also more transparent and energy-conscious.

\begin{table*}
%\small
\setlength{\tabcolsep}{8pt}
\centering
\caption{Results for CIFAR100 CNN. Parentheses: \% changes w.r.t. Baseline (Cross-Entropy).}
\label{tab:cifar100-cnn}
\begin{tabular}{llllll}
\toprule
\textbf{Method} &                             \textbf{Acc} &                              \textbf{F1} &                       \textbf{gCO$_2$eq} &                             \textbf{EV} &                          \textbf{PC90\%} \\
\midrule
Baseline            &                          0.3795 &                          0.3795 &                            1.18 &                       0.459295 &                        49.3333 \\
Bray-Curtis (Norm.) &  0.3229 ({\scriptsize -14.91\%}) &  0.3182 ({\scriptsize -16.16\%}) &   0.8132 ({\scriptsize 30.94\%}) &     0.9094 ({\scriptsize 98\%}) &  2.6667 ({\scriptsize 94.59\%}) \\
Mahalanobis (Chol.) &  0.2927 ({\scriptsize -22.86\%}) &  0.2921 ({\scriptsize -23.04\%}) &    0.727 ({\scriptsize 38.25\%}) &  0.341 ({\scriptsize -25.75\%}) &      50 ({\scriptsize -1.35\%}) \\
Cosine (Unst.)      &  0.2602 ({\scriptsize -31.44\%}) &  0.2667 ({\scriptsize -29.73\%}) &  2.1156 ({\scriptsize -79.68\%}) &  0.5306 ({\scriptsize 15.52\%}) &       45 ({\scriptsize 8.78\%}) \\
Cosine (Stable)     &  0.2501 ({\scriptsize -34.09\%}) &  0.2516 ({\scriptsize -33.71\%}) &  1.4263 ({\scriptsize -21.14\%}) &  0.5216 ({\scriptsize 13.57\%}) &       45 ({\scriptsize 8.78\%}) \\
Euclidean           &   0.2413 ({\scriptsize -36.4\%}) &  0.2431 ({\scriptsize -35.95\%}) &   1.2866 ({\scriptsize -9.28\%}) &  0.4362 ({\scriptsize -5.02\%}) &      50 ({\scriptsize -1.35\%}) \\
\bottomrule
\end{tabular}
\end{table*}

% \textbf{Discussion (CIFAR100 MLP).}  \textbf{Performance:} Baseline achieves the highest acc among tested distances. Baseline also yields the best F1, confirming its robustness. \textbf{Sustainability:} Cosine (Unst.) achieves the lowest gCO2eq, indicating higher sustainability. \textbf{Interpretability:} Bray-Curtis (Norm.) produces the highest EV, suggesting more structured latent spaces. Bray-Curtis (Norm.) reduces prototype coverage most strongly, indicating sharper clustering effects. Overall, the results highlight trade-offs between accuracy, sustainability, and interpretability: cosine and related distances often provide balanced improvements, while Mahalanobis tends to emphasize interpretability at the expense of efficiency.

\begin{table*}
%\small
\setlength{\tabcolsep}{8pt}
\centering
\caption{Results for CIFAR100 MLP. Parentheses: \% changes w.r.t. Baseline (Cross-Entropy).}
\label{tab:cifar100-mlp}
\begin{tabular}{llllll}
\toprule
\textbf{Method} &                             \textbf{Acc} &                              \textbf{F1} &                       \textbf{gCO$_2$eq} &                             \textbf{EV} &                          \textbf{PC90\%} \\
\midrule
Baseline            &                          0.2617 &                          0.2582 &                             0.85 &                        0.285203 &                            50.0 \\
Bray-Curtis (Norm.) &   0.226 ({\scriptsize -13.64\%}) &  0.2191 ({\scriptsize -15.15\%}) &    0.8938 ({\scriptsize -5.35\%}) &  0.9843 ({\scriptsize 245.11\%}) &           1 ({\scriptsize 98\%}) \\
Mahalanobis (Chol.) &  0.1833 ({\scriptsize -29.96\%}) &  0.1811 ({\scriptsize -29.85\%}) &   1.0889 ({\scriptsize -28.34\%}) &  0.0354 ({\scriptsize -87.57\%}) &          50 ({\scriptsize -0\%}) \\
Bray-Curtis (Abs.)  &  0.1444 ({\scriptsize -44.81\%}) &  0.1392 ({\scriptsize -46.07\%}) &  2.1255 ({\scriptsize -150.51\%}) &   0.5317 ({\scriptsize 86.43\%}) &   47.6667 ({\scriptsize 4.67\%}) \\
Cosine (Unst.)      &  0.1237 ({\scriptsize -52.74\%}) &  0.1186 ({\scriptsize -54.06\%}) &    0.5064 ({\scriptsize 40.31\%}) &   0.3799 ({\scriptsize 33.21\%}) &  40.6667 ({\scriptsize 18.67\%}) \\
Euclidean           &   0.119 ({\scriptsize -54.53\%}) &  0.1222 ({\scriptsize -52.69\%}) &     0.589 ({\scriptsize 30.58\%}) &  0.2437 ({\scriptsize -14.55\%}) &          50 ({\scriptsize -0\%}) \\
\bottomrule
\end{tabular}
\end{table*}

% \textbf{Discussion (CIFAR100 ResNet50).}  \textbf{Performance:} Cosine (Stable) achieves the highest acc among tested distances. Cosine (Stable) also yields the best F1, confirming its robustness. \textbf{Sustainability:} Cosine (Unst.) achieves the lowest gCO2eq, indicating higher sustainability. \textbf{Interpretability:} Mahalanobis (Chol.) produces the highest EV, suggesting more structured latent spaces. Bray-Curtis (Norm.) reduces prototype coverage most strongly, indicating sharper clustering effects. Overall, the results highlight trade-offs between accuracy, sustainability, and interpretability: cosine and related distances often provide balanced improvements, while Mahalanobis tends to emphasize interpretability at the expense of efficiency.

\begin{table*}
%\small
\setlength{\tabcolsep}{8pt}
\centering
\caption{Results for CIFAR100 ResNet50. Parentheses: \% changes w.r.t. Baseline (Cross-Entropy).}
\label{tab:cifar100-resnet50}
\begin{tabular}{llllll}
\toprule
\textbf{Method} &                             \textbf{Acc} &                              \textbf{F1} &                       \textbf{gCO$_2$eq} &                             \textbf{EV} &                          \textbf{PC90\%} \\
\midrule
Baseline            &                          0.6983 &                          0.6969 &                             87.77 &                        0.107216 &                      50.0 \\
Cosine (Stable)     &    0.7357 ({\scriptsize 5.35\%}) &     0.736 ({\scriptsize 5.61\%}) &    72.9745 ({\scriptsize 16.85\%}) &  0.5979 ({\scriptsize 457.66\%}) &     8 ({\scriptsize 84\%}) \\
Cosine (Unst.)      &    0.7323 ({\scriptsize 4.87\%}) &    0.7332 ({\scriptsize 5.21\%}) &    71.7592 ({\scriptsize 18.24\%}) &  0.5857 ({\scriptsize 446.27\%}) &     8 ({\scriptsize 84\%}) \\
Bray-Curtis (Norm.) &    0.655 ({\scriptsize -6.19\%}) &   0.6513 ({\scriptsize -6.54\%}) &  106.4049 ({\scriptsize -21.24\%}) &  0.7131 ({\scriptsize 565.08\%}) &     6 ({\scriptsize 88\%}) \\
Mahalanobis (Chol.) &  0.6274 ({\scriptsize -10.15\%}) &  0.6239 ({\scriptsize -10.47\%}) &   138.9317 ({\scriptsize -58.3\%}) &  0.7353 ({\scriptsize 585.81\%}) &  17.5 ({\scriptsize 65\%}) \\
Euclidean           &    0.7055 ({\scriptsize 1.03\%}) &    0.7062 ({\scriptsize 1.33\%}) &    97.432 ({\scriptsize -11.01\%}) &  0.5679 ({\scriptsize 429.66\%}) &  25.5 ({\scriptsize 49\%}) \\
\bottomrule
\end{tabular}
\end{table*}

% \textbf{Discussion (CIFAR10 CNN).}  \textbf{Performance:} Mahalanobis (Chol.) achieves the highest acc among tested distances. Mahalanobis (Chol.) also yields the best F1, confirming its robustness. \textbf{Sustainability:} Mahalanobis (Chol.) achieves the lowest gCO2eq, indicating higher sustainability. \textbf{Interpretability:} Bray-Curtis (Norm.) produces the highest EV, suggesting more structured latent spaces. Bray-Curtis (Norm.) reduces prototype coverage most strongly, indicating sharper clustering effects. Overall, the results highlight trade-offs between accuracy, sustainability, and interpretability: cosine and related distances often provide balanced improvements, while Mahalanobis tends to emphasize interpretability at the expense of efficiency.

\begin{table*}
%\small
\setlength{\tabcolsep}{8pt}
\centering
\caption{Results for CIFAR10 CNN. Parentheses: \% changes w.r.t. Baseline (Cross-Entropy).}
\label{tab:cifar10-cnn}
\begin{tabular}{llllll}
\toprule
\textbf{Method} &                             \textbf{Acc} &                              \textbf{F1} &                       \textbf{gCO$_2$eq} &                             \textbf{EV} &                          \textbf{PC90\%} \\
\midrule
Baseline            &                        0.6278 &                        0.6269 &                           1.12 &                        0.688081 &                               9.0 \\
Mahalanobis (Chol.) &  0.6644 ({\scriptsize 5.82\%}) &  0.6642 ({\scriptsize 5.95\%}) &   1.1139 ({\scriptsize 0.68\%}) &  0.4752 ({\scriptsize -30.93\%}) &       50 ({\scriptsize -455.56\%}) \\
Bray-Curtis (Norm.) &  0.6597 ({\scriptsize 5.08\%}) &   0.6551 ({\scriptsize 4.5\%}) &  1.1489 ({\scriptsize -2.45\%}) &   0.8913 ({\scriptsize 29.54\%}) &     4.3333 ({\scriptsize 51.85\%}) \\
Minkowski (p=3.0)   &  0.6589 ({\scriptsize 4.95\%}) &  0.6593 ({\scriptsize 5.17\%}) &  1.1598 ({\scriptsize -3.42\%}) &  0.5425 ({\scriptsize -21.15\%}) &       50 ({\scriptsize -455.56\%}) \\
Cosine (Stable)     &  0.6584 ({\scriptsize 4.87\%}) &  0.6566 ({\scriptsize 4.74\%}) &  1.1663 ({\scriptsize -3.99\%}) &    0.647 ({\scriptsize -5.97\%}) &  18.6667 ({\scriptsize -107.41\%}) \\
Euclidean           &  0.6495 ({\scriptsize 3.45\%}) &  0.6476 ({\scriptsize 3.31\%}) &  1.1228 ({\scriptsize -0.12\%}) &   0.6582 ({\scriptsize -4.34\%}) &   14.3333 ({\scriptsize -59.26\%}) \\
\bottomrule
\end{tabular}
\end{table*}

% \textbf{Discussion (CIFAR10 MLP).}  \textbf{Performance:} Baseline achieves the highest acc among tested distances. Baseline also yields the best F1, confirming its robustness. \textbf{Sustainability:} Euclidean achieves the lowest gCO2eq, indicating higher sustainability. \textbf{Interpretability:} Bray-Curtis (Norm.) produces the highest EV, suggesting more structured latent spaces. Bray-Curtis (Norm.) reduces prototype coverage most strongly, indicating sharper clustering effects. Overall, the results highlight trade-offs between accuracy, sustainability, and interpretability: cosine and related distances often provide balanced improvements, while Mahalanobis tends to emphasize interpretability at the expense of efficiency.

\begin{table*}
%\small
\setlength{\tabcolsep}{8pt}
\centering
\caption{Results for CIFAR10 MLP. Parentheses: \% changes w.r.t. Baseline (Cross-Entropy).}
\label{tab:cifar10-mlp}
\begin{tabular}{llllll}
\toprule
\textbf{Method} &                             \textbf{Acc} &                              \textbf{F1} &                       \textbf{gCO$_2$eq} &                             \textbf{EV} &                          \textbf{PC90\%} \\
\midrule
Baseline            &                         0.5397 &                         0.5385 &                            0.53 &                        0.346504 &                           47.0 \\
Bray-Curtis (Norm.) &  0.5224 ({\scriptsize -3.21\%}) &  0.5201 ({\scriptsize -3.41\%}) &    0.5264 ({\scriptsize 0.81\%}) &   0.967 ({\scriptsize 179.07\%}) &       1 ({\scriptsize 97.87\%}) \\
Mahalanobis (Chol.) &  0.5087 ({\scriptsize -5.75\%}) &  0.5088 ({\scriptsize -5.51\%}) &     0.458 ({\scriptsize 13.7\%}) &  0.0522 ({\scriptsize -84.94\%}) &      50 ({\scriptsize -6.38\%}) \\
Bray-Curtis (Abs.)  &  0.4934 ({\scriptsize -8.59\%}) &  0.4924 ({\scriptsize -8.55\%}) &  0.6313 ({\scriptsize -18.96\%}) &  0.2434 ({\scriptsize -29.76\%}) &      50 ({\scriptsize -6.38\%}) \\
Bray-Curtis (Std.)  &  0.4931 ({\scriptsize -8.64\%}) &  0.4935 ({\scriptsize -8.35\%}) &  0.6435 ({\scriptsize -21.25\%}) &  0.2906 ({\scriptsize -16.14\%}) &      50 ({\scriptsize -6.38\%}) \\
Euclidean           &  0.4871 ({\scriptsize -9.74\%}) &   0.4852 ({\scriptsize -9.9\%}) &   0.4303 ({\scriptsize 18.92\%}) &   0.4303 ({\scriptsize 24.19\%}) &  42.3333 ({\scriptsize 9.93\%}) \\
\bottomrule
\end{tabular}
\end{table*}

% \textbf{Discussion (CIFAR10 ResNet50).}  \textbf{Performance:} Cosine (Stable) achieves the highest acc among tested distances. Cosine (Stable) also yields the best F1, confirming its robustness. \textbf{Sustainability:} Cosine (Unst.) achieves the lowest gCO2eq, indicating higher sustainability. \textbf{Interpretability:} Chebyshev (Std.) produces the highest EV, suggesting more structured latent spaces. Chebyshev (Std.) reduces prototype coverage most strongly, indicating sharper clustering effects. Overall, the results highlight trade-offs between accuracy, sustainability, and interpretability: cosine and related distances often provide balanced improvements, while Mahalanobis tends to emphasize interpretability at the expense of efficiency.

\begin{table*}
%\small
\setlength{\tabcolsep}{8pt}
\centering
\caption{Results for CIFAR10 ResNet50. Parentheses: \% changes w.r.t. Baseline (Cross-Entropy).}
\label{tab:cifar10-resnet50}
\begin{tabular}{llllll}
\toprule
\textbf{Method} &                             \textbf{Acc} &                              \textbf{F1} &                       \textbf{gCO$_2$eq} &                             \textbf{EV} &                          \textbf{PC90\%} \\
\midrule
Baseline            &                         0.843 &                        0.8431 &                           48.65 &                        0.257211 &                      50.0 \\
Cosine (Stable)     &  0.9262 ({\scriptsize 9.87\%}) &  0.9262 ({\scriptsize 9.86\%}) &  40.6776 ({\scriptsize 16.39\%}) &   0.7559 ({\scriptsize 193.9\%}) &     5 ({\scriptsize 90\%}) \\
Cosine (Unst.)      &  0.9234 ({\scriptsize 9.54\%}) &  0.9234 ({\scriptsize 9.53\%}) &  29.3968 ({\scriptsize 39.58\%}) &   0.761 ({\scriptsize 195.86\%}) &     5 ({\scriptsize 90\%}) \\
Bray-Curtis (Norm.) &  0.9193 ({\scriptsize 9.05\%}) &  0.9192 ({\scriptsize 9.02\%}) &   45.6222 ({\scriptsize 6.23\%}) &  0.7883 ({\scriptsize 206.49\%}) &     5 ({\scriptsize 90\%}) \\
Chebyshev (Std.)    &   0.905 ({\scriptsize 7.36\%}) &   0.905 ({\scriptsize 7.34\%}) &   48.5505 ({\scriptsize 0.21\%}) &  0.9995 ({\scriptsize 288.59\%}) &     1 ({\scriptsize 98\%}) \\
Euclidean           &  0.9185 ({\scriptsize 8.96\%}) &  0.9185 ({\scriptsize 8.94\%}) &   45.8759 ({\scriptsize 5.71\%}) &   0.683 ({\scriptsize 165.56\%}) &  25.5 ({\scriptsize 49\%}) \\
\bottomrule
\end{tabular}
\end{table*}

% \textbf{Discussion (MNIST CNN).}  \textbf{Performance:} Bray-Curtis (Norm.) achieves the highest acc among tested distances. Bray-Curtis (Norm.) also yields the best F1, confirming its robustness. \textbf{Sustainability:} Mahalanobis (Chol.) achieves the lowest gCO2eq, indicating higher sustainability. \textbf{Interpretability:} Bray-Curtis (Norm.) produces the highest EV, suggesting more structured latent spaces. Baseline reduces prototype coverage most strongly, indicating sharper clustering effects. Overall, the results highlight trade-offs between accuracy, sustainability, and interpretability: cosine and related distances often provide balanced improvements, while Mahalanobis tends to emphasize interpretability at the expense of efficiency.

\begin{table*}
%\small
\setlength{\tabcolsep}{8pt}
\centering
\caption{Results for MNIST CNN. Parentheses: \% changes w.r.t. Baseline (Cross-Entropy).}
\label{tab:mnist-cnn}
\begin{tabular}{llllll}
\toprule
\textbf{Method} &                             \textbf{Acc} &                              \textbf{F1} &                       \textbf{gCO$_2$eq} &                             \textbf{EV} &                          \textbf{PC90\%} \\
\midrule
Baseline            &                        0.9782 &                        0.9782 &                           1.19 &                        0.585633 &                           10.6667 \\
Bray-Curtis (Norm.) &  0.9889 ({\scriptsize 1.09\%}) &  0.9888 ({\scriptsize 1.09\%}) &   1.1348 ({\scriptsize 4.42\%}) &   0.7225 ({\scriptsize 23.38\%}) &   13.6667 ({\scriptsize -28.12\%}) \\
Mahalanobis (Chol.) &     0.9879 ({\scriptsize 1\%}) &  0.9879 ({\scriptsize 0.99\%}) &  1.0639 ({\scriptsize 10.39\%}) &   0.4673 ({\scriptsize -20.2\%}) &  36.3333 ({\scriptsize -240.63\%}) \\
Minkowski (p=3.0)   &  0.9877 ({\scriptsize 0.97\%}) &  0.9876 ({\scriptsize 0.96\%}) &   1.1154 ({\scriptsize 6.06\%}) &  0.4195 ({\scriptsize -28.37\%}) &   49.3333 ({\scriptsize -362.5\%}) \\
Hamming (Soft)      &  0.9833 ({\scriptsize 0.52\%}) &  0.9832 ({\scriptsize 0.51\%}) &   1.1815 ({\scriptsize 0.49\%}) &  0.3089 ({\scriptsize -47.26\%}) &       50 ({\scriptsize -368.75\%}) \\
Euclidean           &   0.9831 ({\scriptsize 0.5\%}) &   0.9831 ({\scriptsize 0.5\%}) &   1.1543 ({\scriptsize 2.78\%}) &  0.4413 ({\scriptsize -24.65\%}) &   20.3333 ({\scriptsize -90.62\%}) \\
\bottomrule
\end{tabular}
\end{table*}

% \textbf{Discussion (MNIST MLP).}  \textbf{Performance:} Euclidean achieves the highest acc among tested distances. Euclidean also yields the best F1, confirming its robustness. \textbf{Sustainability:} Euclidean achieves the lowest gCO2eq, indicating higher sustainability. \textbf{Interpretability:} Chebyshev (Std.) produces the highest EV, suggesting more structured latent spaces. Chebyshev (Std.) reduces prototype coverage most strongly, indicating sharper clustering effects. Overall, the results highlight trade-offs between accuracy, sustainability, and interpretability: cosine and related distances often provide balanced improvements, while Mahalanobis tends to emphasize interpretability at the expense of efficiency.

\begin{table*}
%\small
\setlength{\tabcolsep}{8pt}
\centering
\caption{Results for MNIST MLP. Parentheses: \% changes w.r.t. Baseline (Cross-Entropy).}
\label{tab:mnist-mlp}
\begin{tabular}{llllll}
\toprule
\textbf{Method} &                             \textbf{Acc} &                              \textbf{F1} &                       \textbf{gCO$_2$eq} &                             \textbf{EV} &                          \textbf{PC90\%} \\
\midrule
Baseline            &                          0.976 &                         0.9758 &                           0.55 &                        0.565723 &                        10.3333 \\
Cosine (Unst.)      &     0.978 ({\scriptsize 0.2\%}) &    0.9778 ({\scriptsize 0.2\%}) &   0.5264 ({\scriptsize 3.58\%}) &   0.382 ({\scriptsize -32.48\%}) &       10 ({\scriptsize 3.23\%}) \\
Mahalanobis (Chol.) &   0.9774 ({\scriptsize 0.14\%}) &   0.9771 ({\scriptsize 0.14\%}) &  0.5611 ({\scriptsize -2.78\%}) &   0.092 ({\scriptsize -83.74\%}) &    50 ({\scriptsize -383.87\%}) \\
Cosine (Stable)     &   0.9766 ({\scriptsize 0.06\%}) &   0.9764 ({\scriptsize 0.06\%}) &   0.5266 ({\scriptsize 3.54\%}) &  0.4033 ({\scriptsize -28.71\%}) &   9.3333 ({\scriptsize 9.68\%}) \\
Chebyshev (Std.)    &  0.9756 ({\scriptsize -0.04\%}) &  0.9754 ({\scriptsize -0.04\%}) &  0.5881 ({\scriptsize -7.73\%}) &   0.7865 ({\scriptsize 39.03\%}) &  5.6667 ({\scriptsize 45.16\%}) \\
Euclidean           &    0.9799 ({\scriptsize 0.4\%}) &   0.9798 ({\scriptsize 0.41\%}) &   0.5221 ({\scriptsize 4.35\%}) &   0.358 ({\scriptsize -36.72\%}) &        9 ({\scriptsize 12.9\%}) \\
\bottomrule
\end{tabular}
\end{table*}

% \textbf{Discussion (MNIST ResNet50).}  \textbf{Performance:} Bray-Curtis (Norm.) achieves the highest acc among tested distances. Bray-Curtis (Norm.) also yields the best F1, confirming its robustness. \textbf{Sustainability:} Euclidean achieves the lowest gCO2eq, indicating higher sustainability. \textbf{Interpretability:} Euclidean produces the highest EV, suggesting more structured latent spaces. Mahalanobis (Chol.) reduces prototype coverage most strongly, indicating sharper clustering effects. Overall, the results highlight trade-offs between accuracy, sustainability, and interpretability: cosine and related distances often provide balanced improvements, while Mahalanobis tends to emphasize interpretability at the expense of efficiency.

\begin{table*}
%\small
\setlength{\tabcolsep}{8pt}
\centering
\caption{Results for MNIST ResNet50. Parentheses: \% changes w.r.t. Baseline (Cross-Entropy).}
\label{tab:mnist-resnet50}
\begin{tabular}{llllll}
\toprule
\textbf{Method} &                             \textbf{Acc} &                              \textbf{F1} &                       \textbf{gCO$_2$eq} &                             \textbf{EV} &                          \textbf{PC90\%} \\
\midrule
Baseline            &                        0.9909 &                        0.9909 &                           29.36 &                        0.420353 &                   50.0 \\
Bray-Curtis (Norm.) &  0.9962 ({\scriptsize 0.52\%}) &  0.9961 ({\scriptsize 0.53\%}) &  25.2889 ({\scriptsize 13.86\%}) &  0.8453 ({\scriptsize 101.09\%}) &  4 ({\scriptsize 92\%}) \\
Cosine (Unst.)      &   0.996 ({\scriptsize 0.51\%}) &   0.996 ({\scriptsize 0.52\%}) &   26.1851 ({\scriptsize 10.8\%}) &   0.6888 ({\scriptsize 63.87\%}) &  6 ({\scriptsize 88\%}) \\
Cosine (Stable)     &  0.9953 ({\scriptsize 0.44\%}) &  0.9953 ({\scriptsize 0.45\%}) &  26.4064 ({\scriptsize 10.05\%}) &   0.6974 ({\scriptsize 65.91\%}) &  6 ({\scriptsize 88\%}) \\
Mahalanobis (Chol.) &  0.9938 ({\scriptsize 0.29\%}) &   0.9938 ({\scriptsize 0.3\%}) &  31.9246 ({\scriptsize -8.75\%}) &  0.9966 ({\scriptsize 137.09\%}) &  1 ({\scriptsize 98\%}) \\
Euclidean           &  0.9934 ({\scriptsize 0.25\%}) &  0.9934 ({\scriptsize 0.25\%}) &   24.457 ({\scriptsize 16.69\%}) &  0.9998 ({\scriptsize 137.84\%}) &  1 ({\scriptsize 98\%}) \\
\bottomrule
\end{tabular}
\end{table*}

% \subsection{Vision: Interpretability} % Regular MLP + GRAD-CAM for CNN and PVT
% % 4 architectures x 3 datasets

% \textbf{MNIST}

% \textbf{CIFAR10}

% \textbf{CIFAR100}

% \textbf{Insights across datasets}

% \textbf{Cross-architecture insights}

% \subsection{Vision: Sustainability}
% \subsubsection{Insights across datasets}
% \subsubsection{Cross-architecture insights}
% \subsubsection{Image datasets: MNIST}
% \subsubsection{Image datasets: CIFAR10}
% \subsubsection{Image datasets: CIFAR100}

\subsection{Vision: Sustainability}
% \textbf{MNIST:} The relative emissions differences are modest due to the dataset’s simplicity and the small computational footprint of training. These results are presented in Figure \ref{fig:emissions:vision:MNIST}.
% For \textbf{MLPs}, several distances such as Euclidean, Manhattan, and Canberra variants slightly increase emissions, while others (e.g., Mahalanobis diagonal or Chebyshev smooth) yield small reductions compared to cross-entropy. 
% In \textbf{CNNs}, more pronounced negative bars are observed: for example, Hamming-gumbel and Canberra-standard produce lower carbon costs than the baseline. 
% For \textbf{PVT}, emissions are generally higher than the baseline, particularly with Mahalanobis and Hamming variants (see Figure \ref{fig:emissions:vision:MNIST}), reflecting higher computational overhead. 
% In \textbf{ResNet50}, differences are small but several distances (e.g., Euclidean, Hamming-soft, Mahalanobis-cholesky) achieve slightly lower emissions, suggesting that even deep models can sometimes benefit from distance-based losses in sustainability terms.

\subsubsection{MNIST}
Figure~\ref{fig:emissions:vision:MNIST} summarizes the \emph{carbon deltas} (gCO$_2$eq relative to cross-entropy) when swapping the training objective for harmonic-loss variants on MNIST across four backbones.

\textbf{MLP.} Most distances reduce per–step emissions vs.\ cross-entropy (green bars), with the largest savings from heavier geometry that replaces the softmax/cross-entropy path (e.g., Mahalanobis/standardized, Chebyshev). Euclidean and Bray--Curtis yield modest savings; only a few variants show small positive overheads. Given MNIST’s simplicity and the near-saturation accuracies, these reductions likely translate into \emph{net} greener runs because steps-to-target are comparable.

\textbf{CNN.} A broad set of distances are carbon-negative vs.\ baseline. Again, standardized Mahalanobis/Chebyshev rank among the lowest-emission options; Bray--Curtis and Euclidean remain consistently frugal. Variants that introduce extra normalization or temperature schedules can erode part of the gain but rarely flip the sign.

\textbf{ResNet50.} The deepest convolutional model shows the \emph{largest} per–step savings: many distances deliver substantial negative deltas relative to cross-entropy, suggesting that replacing the softmax loss with metric-based objectives amortizes well at this scale. Only a handful of choices (e.g., certain Chebyshev/Canberra parameterizations) incur small positive overheads.

\textbf{PVT (vision transformer).} In contrast to the CNN family, most distances \emph{increase} per–step emissions over the baseline. The transformer’s attention and normalization stack appears less amenable to the heavier distance computations; only a couple of standardized/normalized variants produce small savings. On PVT, greener training favors the lightest geometries or retaining cross-entropy.

\textbf{Takeaways.} i) On MNIST, distance-based harmonic losses are often \emph{carbon-favorable} for MLP/CNN/ResNet50, with the biggest gains on the deepest CNN; ii) these gains are not universal—PVT tends to pay a premium; iii) because test accuracy curves on MNIST converge similarly across losses, the per–step savings for CNN/ResNet50 likely convert into lower \emph{end-to-end} energy. Practically, we recommend Euclidean/Bray--Curtis/standardized Mahalanobis for convolutional backbones, and cautious use (or kernel-fused, mixed-precision implementations) of heavier distances on transformer-style models. Reporting both per–step emissions and energy-to-target accuracy remains essential for fair sustainability claims.

%%%% EMISSIONS : VISION: MNIST 

\begin{figure*}[h!]
    \begin{centering}
	    \includegraphics[page=6,width=0.495\textwidth]{figures/Image_Classification/Emissions/emissions_plots_image_classification_grams.pdf}
	    \includegraphics[page=3,width=0.495\textwidth]{figures/Image_Classification/Emissions/emissions_plots_image_classification_grams.pdf}
	    \includegraphics[page=12,width=0.495\textwidth]{figures/Image_Classification/Emissions/emissions_plots_image_classification_grams.pdf}
	    \includegraphics[page=9,width=0.495\textwidth]{figures/Image_Classification/Emissions/emissions_plots_image_classification_grams.pdf}
	    \caption{Carbon emission differences for MNIST across four model backbones (MLP, CNN, ResNet50, PVT) when replacing cross-entropy with harmonic loss variants. Bars show the emission difference in grams of CO$_2$eq relative to the baseline (cross-entropy). Values above zero indicate higher emissions than baseline, while negative values indicate greener, more sustainable outcomes.} % For instance, as seen in the CNN plot, Hamming-gumbel and Canberra-standard reduce emissions relative to cross-entropy, while in ResNet, several distances such as Mahalanobis and Hamming substantially exceed baseline emissions.}
	    \Description[MNIST emissions deltas by backbone and loss]{A four-panel set of bar charts reporting per-configuration carbon emission differences on MNIST when replacing cross-entropy with harmonic loss variants. Each panel corresponds to a different model backbone (MLP, CNN, ResNet50, and PVT). Within each panel, bars represent different distance-based loss variants and show the change in emissions (in grams of CO2-equivalent) relative to the cross-entropy baseline, with bars above zero indicating higher emissions and bars below zero indicating lower emissions.}
	    \label{fig:emissions:vision:MNIST}
	    \end{centering}
\end{figure*}

\subsubsection{CIFAR-10}
%
%\textbf{CIFAR-10}: 
% Emissions differences become more substantial as model complexity increases, as shown in Figure \ref{fig:emissions:vision:CIFAR10}.
% For \textbf{MLPs}, most distance variants remain close to the baseline, with Mahalanobis and Bray--Curtis showing slight increases. 
% \textbf{CNNs} show a mixed picture: Minkowski distances and Chebyshev-standard exceed baseline emissions, while some Canberra variants remain neutral. 
% In \textbf{PVT}, nearly all distances produce lower emissions, with robust Canberra, Cosine-stable, and Bray--Curtis-normalized yielding the strongest reductions. This indicates that harmonic losses can substantially improve sustainability in transformer-style architectures. 
% For \textbf{ResNet50}, most distances result in emissions close to or slightly below baseline. Bray--Curtis-absolute and Euclidean appear particularly sustainable, with negative or near-zero differences relative to cross-entropy.

%\textbf{CIFAR-10 sustainability across architectures (per--1k-step emissions).}
Figure~\ref{fig:emissions:vision:CIFAR10} reports carbon deltas in gCO$_2$eq relative to cross-entropy when training with harmonic-loss distances on CIFAR-10.

\textbf{MLP.} Most distances are \emph{carbon–negative} versus baseline, yielding small--to–moderate per–step savings. A few choices incur mild overheads (rightmost bars), indicating that added normalization or temperature scheduling can offset the gains on shallow networks.

\textbf{CNN.} The pattern strengthens: a broad set of distances reduce per–step emissions relative to cross-entropy. Only a handful of variants sit near zero or slightly positive, suggesting that, for convolutional encoders on CIFAR-10, metric-based objectives are generally more frugal per step.

\textbf{ResNet50.} Savings are \emph{uniform and largest}: all distances fall below the baseline, with substantial negative deltas. This indicates that replacing the softmax loss amortizes particularly well at depth/width, likely due to better kernel utilization and reduced softmax/backprop overhead relative to the total compute.

\textbf{PVT (vision transformer).} Most distances are again carbon–negative, though the spread is narrower than ResNet50 and a couple of variants hover around parity or slightly positive. Transformers benefit, but less dramatically than deep CNNs.

\textbf{Takeaways.} i) On CIFAR-10, distance-based harmonic losses are typically \emph{greener per step} for CNN/ResNet50/PVT, with the strongest effect on ResNet50; ii) MLP shows mixed but mostly favorable outcomes; iii) because our accuracy-vs-epoch curves on CIFAR-10 show similar or faster convergence for several distances, these per–step gains are likely to translate into lower \emph{end-to-end} energy for deep backbones. Practically, we recommend adopting the more frugal distances for convolutional and transformer models and pairing per–step reports with \emph{energy-to-target-accuracy} to substantiate sustainability claims.

%%%% EMISSIONS : VISION: CIFAR10 

\begin{figure}[h!]
    \begin{centering}
	    \includegraphics[page=4,width=0.4\textwidth]{figures/Image_Classification/Emissions/emissions_plots_image_classification_grams.pdf}
	    \includegraphics[page=1,width=0.4\textwidth]{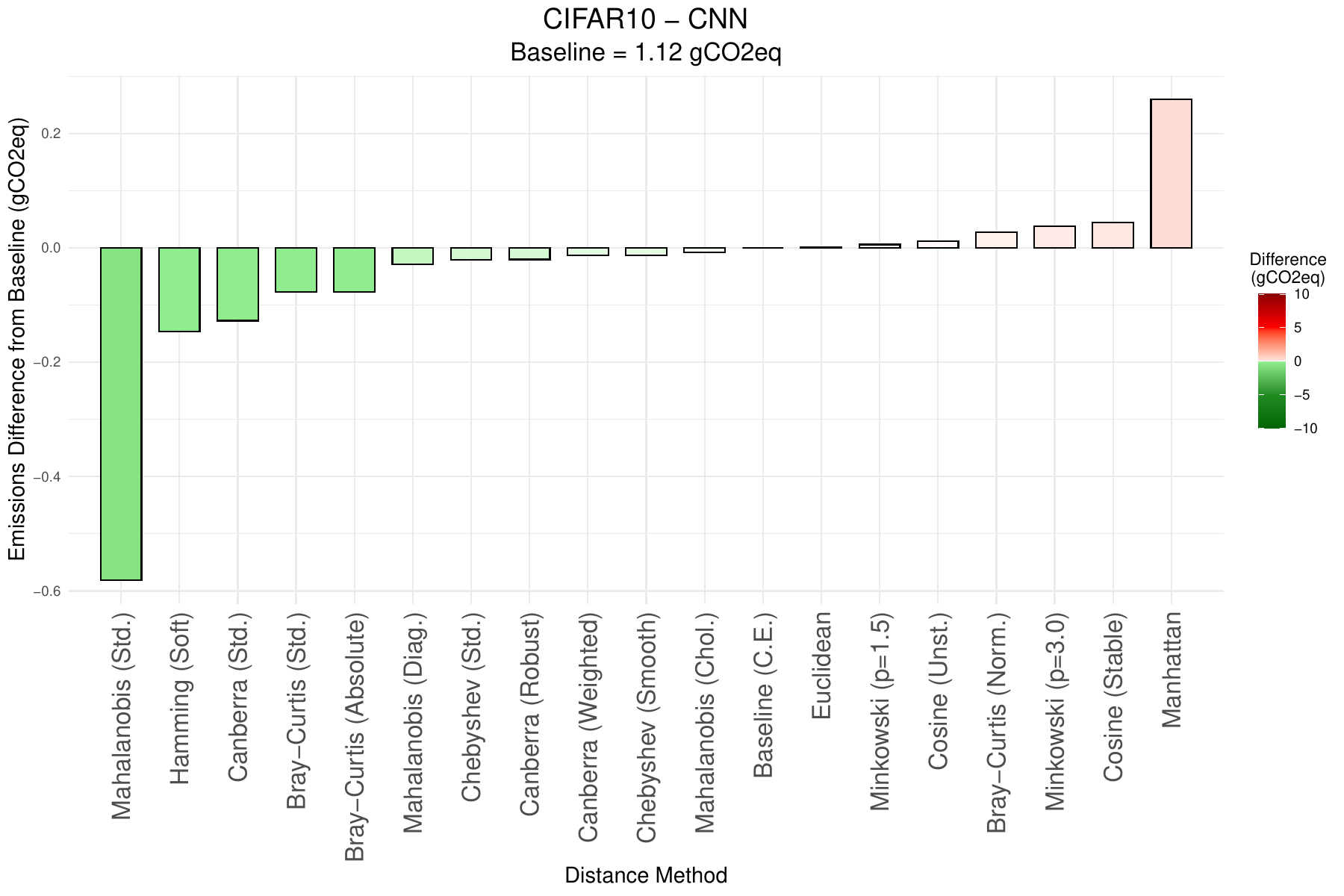}
	    \includegraphics[page=10,width=0.4\textwidth]{figures/Image_Classification/Emissions/emissions_plots_image_classification_grams.pdf}
	    \includegraphics[page=7,width=0.4\textwidth]{figures/Image_Classification/Emissions/emissions_plots_image_classification_grams.pdf}
	    \caption{Carbon emission differences for CIFAR10 across four model backbones (MLP, CNN, ResNet50, PVT) when replacing cross-entropy with harmonic loss variants. Bars show the emission difference in grams of CO$_2$eq relative to the baseline (cross-entropy). Values above zero indicate higher emissions than baseline, while negative values indicate greener, more sustainable outcomes.}
	    \Description[CIFAR-10 emissions deltas by backbone and loss]{A four-panel set of bar charts reporting per-configuration carbon emission differences on CIFAR-10 when replacing cross-entropy with harmonic loss variants. Panels correspond to four backbones (MLP, CNN, ResNet50, and PVT). Each bar shows the emissions difference (grams of CO2-equivalent) for a specific distance-based loss relative to cross-entropy, where negative values indicate reduced emissions and positive values indicate increased emissions.}
	    \label{fig:emissions:vision:CIFAR10}
	    \end{centering}
\end{figure}

\subsubsection{CIFAR-100}

% \textbf{CIFAR-100}: The sustainability patterns diverge further, as shown in Figure \ref{fig:emissions:vision:CIFAR100}. 
% For \textbf{MLPs}, results cluster near the baseline, with some minor negative differences (e.g., Minkowski-$p{=}1.5$, Canberra-weighted). 
% \textbf{CNNs} show clearer sustainability gains, with Hamming-gumbel and Canberra-weighted consistently yielding lower emissions than cross-entropy. 
% In \textbf{PVT}, most distances substantially exceed baseline emissions, with increases up to several tens of grams of CO$_2$eq. This reflects the high computational cost of transformers on complex datasets, where distance-based harmonic losses can add overhead. 
% Conversely, \textbf{ResNet50} shows strong sustainability advantages: multiple distances (e.g., Hamming-gumbel, Canberra-standard, Bray--Curtis-absolute) achieve lower emissions compared to baseline, while others remain close to neutral.

%\textbf{CIFAR-100 sustainability across architectures (per--1k-step emissions).}
Figure~\ref{fig:emissions:vision:CIFAR100} shows the carbon \emph{delta} (gCO$_2$eq vs.\ cross-entropy) when training with harmonic-loss distances on CIFAR-100.

\textbf{MLP.} Savings are modest and \emph{geometry–dependent}. Light/standardized variants (e.g., cosine, Euclidean, some Minkowski/Canberra settings) are carbon–negative, while heavier norms and covariance–based Mahalanobis parameterizations flip to positive overheads. On shallow models, extra normalization steps can outweigh gains.

\textbf{CNN.} A broad swath of distances are carbon–negative relative to the 1.18\,gCO$_2$eq baseline; several Mahalanobis and Bray–Curtis settings deliver the largest per–step reductions. A few choices (e.g., certain cosine/Canberra/Minkowski configurations) hover near parity or slightly positive, indicating mild architecture sensitivity.

\textbf{ResNet50.} The deepest convolutional model exhibits a \emph{mixed but wide} spread: many distances achieve substantial savings (left cluster of dark-green bars), yet others incur clear premiums (right cluster). Thus, distance choice materially changes footprint at scale. Notably, cosine variants are among the frugal options here, whereas some Chebyshev/Minkowski/Bray–Curtis (absolute) settings are costlier.

\textbf{PVT (vision transformer).} Most distances are \emph{carbon–positive} vs.\ the 3.67\,gCO$_2$eq baseline, with only a couple of standardized/smoothed variants slightly negative. As on MNIST/CIFAR-10, the attention/normalization stack appears less amenable to heavier metric computations.

\textbf{Takeaways.} i) On CIFAR-100, harmonic distances can be \emph{greener per step} for CNNs and selectively for ResNet50, but PVT generally pays a premium; ii) cosine tends to be frugal on deeper CNNs (and competitive on MLP), aligning with its strong accuracy dynamics, whereas several Mahalanobis/Minkowski/Chebyshev configurations increase emissions unless they deliver clear quality gains; iii) because CIFAR-100 accuracy converges differently across distances, claims of sustainability should couple per–step deltas with \emph{energy-to-target-accuracy/perplexity}. Practically, prefer cosine/Euclidean/standardized Bray–Curtis (and selected Mahalanobis settings that are both stable and frugal) for CNN/ResNet50, and use kernel fusion + mixed precision if heavier geometries are needed on transformer backbones.

%%%% EMISSIONS : VISION: CIFAR10 

\begin{figure*}[h!]
    \begin{centering}
	    \includegraphics[page=5,width=0.495\textwidth]{figures/Image_Classification/Emissions/emissions_plots_image_classification_grams.pdf}
	    \includegraphics[page=2,width=0.495\textwidth]{figures/Image_Classification/Emissions/emissions_plots_image_classification_grams.pdf}
	    \includegraphics[page=11,width=0.495\textwidth]{figures/Image_Classification/Emissions/emissions_plots_image_classification_grams.pdf}
	    \includegraphics[page=8,width=0.495\textwidth]{figures/Image_Classification/Emissions/emissions_plots_image_classification_grams.pdf}
	    \caption{Carbon emission differences for CIFAR100 across four model backbones (MLP, CNN, ResNet50, PVT) when replacing cross-entropy with harmonic loss variants. Bars show the emission difference in grams of CO$_2$eq relative to the baseline (cross-entropy). Values above zero indicate higher emissions than baseline, while negative values indicate greener, more sustainable outcomes.}
	    \Description[CIFAR-100 emissions deltas by backbone and loss]{A four-panel set of bar charts reporting per-configuration carbon emission differences on CIFAR-100 when replacing cross-entropy with harmonic loss variants. Panels correspond to the MLP, CNN, ResNet50, and PVT backbones. For each backbone, bars enumerate distance-based loss variants and report the emissions difference in grams of CO2-equivalent relative to cross-entropy; negative bars indicate lower emissions than baseline and positive bars indicate higher emissions.}
	    \label{fig:emissions:vision:CIFAR100}
	    \end{centering}
\end{figure*}

%\begin{tcolorbox}
\textbf{Insights across datasets:}
A clear trend emerges across datasets: \textbf{transformer models (PVT)} often incur higher emissions with distance-based harmonic losses, particularly on CIFAR-100 (see Figure \ref{fig:emissions:vision:CIFAR100}), whereas \textbf{convolutional and residual networks} (CNN, ResNet50) frequently yield greener outcomes (see results in Figures \ref{fig:emissions:vision:MNIST} -- \ref{fig:emissions:vision:CIFAR100}). The sustainability benefit is especially pronounced when distances incorporate robustness (Hamming-gumbel, Canberra-robust) or covariance awareness (Mahalanobis-diagonal). Simpler datasets like MNIST show limited differences, while CIFAR-10 and CIFAR-100 highlight the greater impact of distance choice on carbon footprint.
%\end{tcolorbox}

\textbf{Cross-architecture insights:} \textbf{MLPs} present a limited sustainability differences; emissions remain close to baseline across all distances. With \textbf{CNNs}, multiple distances (Hamming-gumbel, Mahalanobis-diagonal, Canberra-weighted) consistently reduce emissions, showing CNNs benefit most from harmonic loss efficiency. In \textbf{PVT}, harmonic losses generally increase emissions, especially on CIFAR-100, highlighting potential overhead in attention-based models. \textbf{ResNet50} demonstrates an effective integration with several distances (Hamming, Canberra, Bray--Curtis), which achieve significant reductions in emissions over baseline, indicating that deep CNNs can combine effectiveness with sustainability.

Overall, the sustainability analysis shows that harmonic losses can improve or degrade carbon efficiency depending on the backbone and dataset. The choice of distance measure therefore plays a critical role not only in accuracy but also in environmental impact, reinforcing the need for holistic evaluation across the accuracy--sustainability--interpretability triangle.

% \subsection{LLMs: Interpretability}
% \textbf{Cross-architecture insights}

% \textbf{GPT2 small}

% \textbf{BERT}

% \textbf{QWEN 0.5B}

\subsection{Language: Sustainability}
% \textbf{LLM pretraining emissions (OpenWebText).}
Figure~\ref{fig:emissions:LLMs} reports \emph{per–1k-step} carbon differences (gCO$_2$eq) when replacing cross-entropy with distance-based harmonic losses for BERT, GPT, and QWEN. Positive bars indicate higher emissions than the cross-entropy baseline (annotated atop each subplot).

\textbf{Overall.} Across all three backbones, distance-based losses tend to \emph{increase} per–1k-step emissions relative to cross-entropy. The magnitude of overhead correlates with the computational complexity of the distance: lightweight cosine variants add the least overhead, while Mahalanobis and Minkowski incur the most.

\textbf{BERT.} Cosine (simple or temperature-scaled) yields small overheads (low single-digit gCO$_2$eq over a 7.87 gCO$_2$eq baseline), suggesting that the extra normalization and dot-product operations have modest cost. Euclidean and Bray--Curtis sit mid-pack, whereas Mahalanobis (Cholesky/standard/diagonal) and Minkowski ($p>2$) are consistently more carbon intensive per 1k steps.

\textbf{GPT.} All distances increase emissions over the 60.36 gCO$_2$eq baseline, with a clearer spread: cosine remains the most frugal among alternatives; Euclidean and Manhattan are mid-range; Mahalanobis (any parameterization) and Minkowski/L2 are the heaviest. This indicates that the per-step FLOPs and memory traffic of covariance-related computations (and higher-order norms) become more pronounced at GPT scale.

\textbf{QWEN.} For this larger model (baseline 75.29 gCO$_2$eq), the methods we evaluated (Minkowski/L2 and Euclidean) both raise per–1k-step emissions, with Minkowski/L2 showing a substantial increase. Although the set of distances is smaller here, the pattern mirrors GPT: heavier metrics cost more per step as model width/depth grows.

\textbf{Implications.} i) If \emph{Green AI} considerations are primary, cosine-based harmonic losses are the most promising drop-in replacements, especially on encoder-style models (BERT). ii) Mahalanobis and Minkowski should be justified by clear accuracy or stability gains, as they carry the largest per-step carbon premiums. iii) Reported values are per–1k-step; end-to-end footprint also depends on \emph{steps-to-target-quality}. Thus, a distance that reduces time-to-accuracy could still yield net carbon savings even with higher per-step cost.

% \textbf{Recommendations.} For fair accounting, pair per–1k-step metrics with (a) wall-clock and energy-to-target perplexity/accuracy; (b) mixed precision and kernel fusion (e.g., fused norm/cosine) to reduce overheads; (c) batch-size tuning and \texttt{torch.compile}/XLA to lower memory-bound penalties of Mahalanobis; and (d) ablations isolating backward-pass cost, since gradient calculations dominate for the heavier distances.

\textbf{Summary.} Distance choice in harmonic loss is not carbon-neutral: cosine variants introduce minimal overhead; Euclidean/Bray--Curtis are moderate; Mahalanobis/Minkowski are expensive. Any claimed performance gains from richer geometries should be weighed against these systematic energy costs, preferably via \emph{energy-normalized} quality metrics (e.g., accuracy per kWh).

\begin{figure*}[h!]
    \begin{centering}
	    \includegraphics[page=1,width=0.495\textwidth]{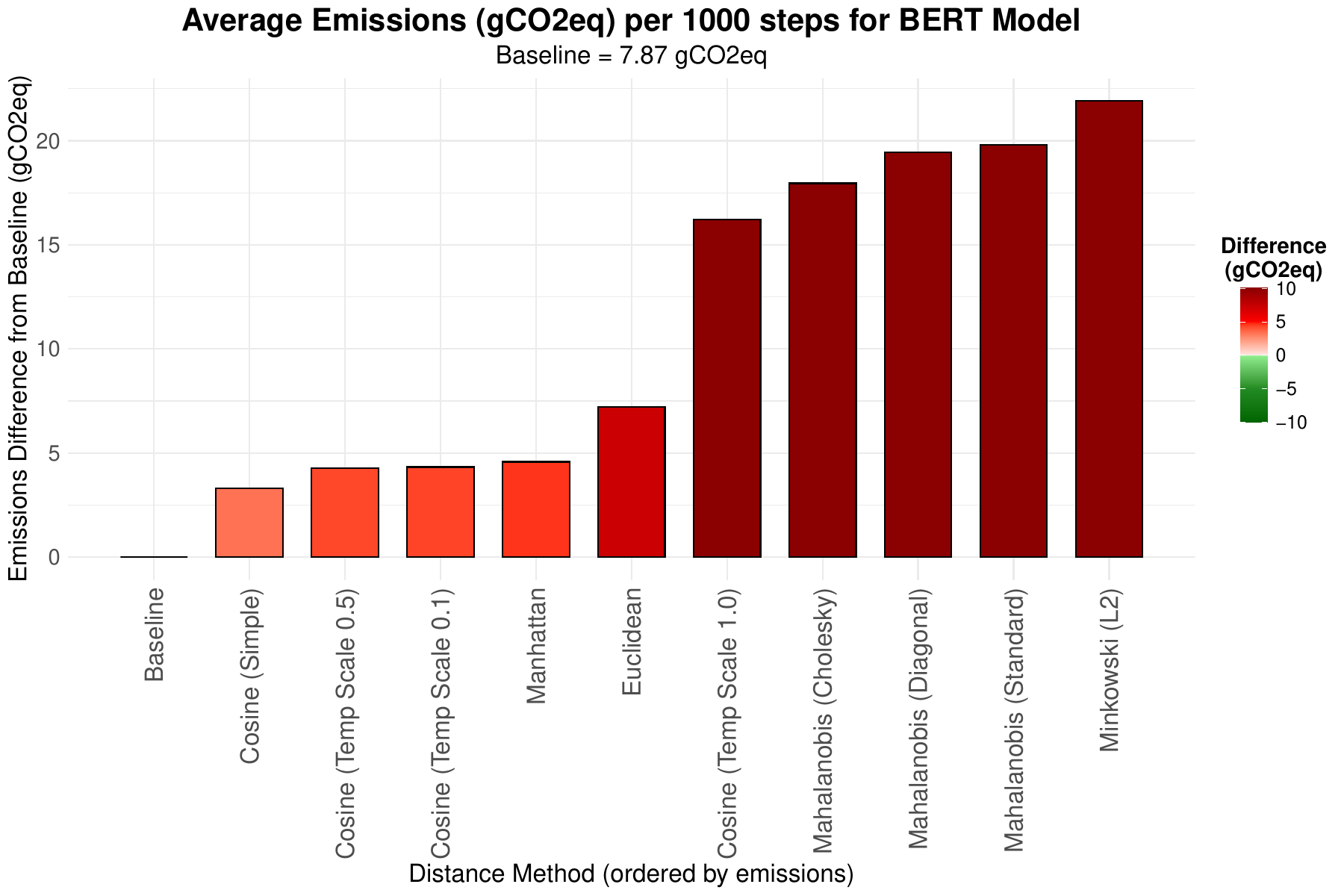}
	    \includegraphics[page=2,width=0.495\textwidth]{figures/LLMs/Emissions/emissions_plots_per_model.pdf}
	    \includegraphics[page=3,width=0.495\textwidth]{figures/LLMs/Emissions/emissions_plots_per_model.pdf}
	    \caption{Carbon emission differences for LLM pretraining on OpenWebText (BERT, GPT2, QWEN) when replacing cross-entropy with harmonic loss variants. Bars show the emission difference in grams of CO$_2$eq relative to the baseline (cross-entropy). Values above zero indicate higher emissions than baseline, while negative values indicate greener, more sustainable outcomes.}
	    \Description[LLM pretraining emissions deltas by model and loss]{A three-panel set of bar charts showing carbon emission differences during language-model pretraining on OpenWebText when swapping cross-entropy for harmonic loss variants. Each panel corresponds to a different model family (BERT, GPT-2, and Qwen). Bars represent different distance-based loss variants and report the emissions difference (grams of CO2-equivalent) relative to cross-entropy; bars above zero indicate increased emissions and bars below zero indicate reduced emissions.}
	    \label{fig:emissions:LLMs}
	    \end{centering}
\end{figure*}

% \textbf{Cross-architecture insights}

% \textbf{GPT2 small}

% \textbf{BERT}

% \textbf{QWEN 0.5B}

\subsection{Language: Interpretability}

%\textbf{Motivation.}
Mechanistic and representation-level interpretability of large language models (LLMs) increasingly leverages the hypothesis that internal activations admit \emph{approximately linear} structure: many features behave like directions in an activation space, and linear operations can steer or probe them \citep{anthropic_superposition_2022,huben2024sae,turntrout2023steering}. Within this paradigm, Principal Component Analysis (PCA) is a simple, well-understood lens for: i) summarizing dominant sources of variance in activations; ii) stabilizing analyses by denoising; and (iii) producing human-auditable axes that can be inspected, correlated with concepts, and tracked over time.

%\textbf{What PCA gives you for LLMs.}
Given a layer $\ell$ with residual-stream activations $H_\ell \in \mathbb{R}^{N \times d}$ collected across $N$ tokens (or prompts), PCA factorizes $H_\ell$ via SVD to yield orthogonal directions $\{u_k\}_{k=1}^d$ ordered by explained variance. In practice this supports:
\begin{enumerate}
    \item \textbf{Concept probing and visualization.} Projections onto top PCs often align with semantically meaningful contrasts; e.g., the first PC of GPT-style embeddings correlated with human well-being judgments in zero-shot tests \citep{farai_wellbeing_2023}, and per-layer PCA can reconstruct or predict response modes in GPT-2 \citep{jorgensen2023activations}.
    \item \textbf{Diagnosing and localizing phenomena.} Layer-wise or head-wise PCA reveals where variance concentrates, helping localize depth at which concepts emerge or consolidate (complementary to linear probing) \citep{concept_depth_2024}. Tracking \emph{subspace distance} across checkpoints detects representational drift during fine-tuning or domain shift.
    \item \textbf{Sanity checks and baselines.} With growing interest in sparse autoencoders (SAEs) for monosemantic features \citep{huben2024sae}, PCA serves as a transparent baseline decomposition: if SAEs meaningfully improve sparsity/faithfulness over PCA while matching reconstruction, that strengthens the interpretability claim \citep{alignmentforum_sae_2023}.
\end{enumerate}

%\textbf{When PCA is justified.}
PCA is most compelling under: a) approximately linear feature superposition and b) high signal-to-noise in dominant directions. Toy and empirical studies argue that Transformers often encode many features as \emph{directions} (superposition) \citep{anthropic_superposition_2022}, and even simple linear additions to activations can steer model behavior \citep{turntrout2023steering}. PCA then becomes an appropriate first-pass tool to:
\begin{itemize}
    \item extract high-variance axes that frequently correlate with coherent features or tasks,
    \item reduce dimensionality before causal tests (e.g., ablate/project-out a PC and re-evaluate behavior),
    \item build compact surrogates (e.g., PCA embeddings for downstream analyses or compression) \citep{bengtsson2025pca_compress,he2024transformers_pca}.
\end{itemize}

Under widely observed linear-structure assumptions in Transformer activations, PCA offers an interpretable, testable starting point: it surfaces dominant directions, supports hypothesis generation, and provides quantitative targets for more advanced decompositions. %Used responsibly—with causal and sparsity-based follow-ups—PCA strengthens, rather than substitutes, the evidentiary chain for claims about internal LLM mechanisms.
% \textbf{MLP}

% \textbf{Transformers}

% \textbf{Cross-architecture insights}

\newpage

\end{document}